\definecolor{titleblue}{RGB}{51,51,178}
\newcommand{\explain}[1]{\tag*{(#1)}}
\newcommand{\explaind}[2]{\makebox[0.85\textwidth]{$\displaystyle#1$\hfill(#2)}}
\newcommand{\fS}{\mathcal{S}}
\newcommand{\fA}{\mathcal{A}}
\newcommand{\R}{\mathbb{R}}
\newcommand{\E}{\mathbb{E}}
\newcommand{\V}{\mathbb{V}}
\newcommand{\na}{{|\fA|}}
\newcommand{\pdisgpik}{G^{\text{PDIS}}_k}
\newcommand{\rhok}{\rho^{\pik,\mu}}
\newcommand{\rhoks}{\rho^{\pik,\mus}}
\newcommand{\Asi}{A^{[\mus,i]}}
\newcommand{\Aki}{A^{[\pik,i]}}
\newcommand{\pik}{\pi^{(k)}}
\newcommand{\mus}{\mu^{*}}
\newcommand{\etak}{\eta^{\qty(k)}}
\newcommand{\wj}{w^{(j)}}
\newcommand{\wk}{w^{(k)}}
\newcommand{\barw}{\bar{w}}
\newcommand{\Eonk}{E^{\text{on}, \pik}}
\newcommand{\Eoffk}{E^{\text{off}, \pik}}
\newcommand{\tb}[1]{{\textbf{#1}}}
\newtheorem{theorem}{Theorem}%[section]
\newtheorem{lemma}{Lemma}%[Lemma]
\newtheorem*{theorem*}{Theorem}%[section]
\newtheorem*{proposition*}{Proposition}%[section]
\newtheorem*{lemma*}{Lemma}%[Lemma]
\newtheorem*{corollary*}{Corollary}
\newtheorem*{assumption*}{Assumption}
\newtheorem*{definition*}{Definition}
\newtheorem*{remark*}{Remark}
\title{Efficient Multi-Policy Evaluation for Reinforcement Learning}
\author {
    % Authors
    Shuze Daniel Liu\textsuperscript{\rm 1},
    Claire Chen\textsuperscript{\rm 2},
    Shangtong Zhang\textsuperscript{\rm 1}
}
\begin{document}

\maketitle

\begin{abstract}
To unbiasedly evaluate multiple target policies, the dominant approach among RL practitioners is to run and evaluate each target policy separately. However, this evaluation method is far from efficient because samples are not shared across policies, and running target policies to evaluate themselves is actually not optimal. In this paper, we address these two weaknesses by designing a tailored behavior policy to reduce the variance of estimators across all target policies. Theoretically, we prove that executing this behavior policy with manyfold fewer samples outperforms on-policy evaluation on every target policy under characterized conditions. Empirically, we show our estimator has a substantially lower variance compared with previous best methods and achieves state-of-the-art performance in a broad range of environments.
\end{abstract}

\section{Introduction} 
We explore the multi-policy evaluation problem, where we aim to estimate the performance of multiple target policies.
In reinforcement learning (RL, \citet{sutton2018reinforcement}), 
multi-policy evaluation is prevalent for model selections \citep{schulman2017proximal, prechelt2002early}. A simple method to evaluate multiple policies is to perform online policy evaluation for each target policy separately. However, the number of required online samples scales up quickly with the number of target policies.

In many scenarios, heavily relying on massive online data is not preferable. Firstly, collecting massive online data can be both expensive and slow when interacting with the real world \citep{li2019perspective, Zhang_2023, chen2024efficient, liu2024doubly}. Secondly, even if there is a simulator, for complex problems such as data center cooling, each step may still cost 10 seconds \citep{chervonyi2022semi}. Thus, building an RL system demanding millions of steps remains expensive.

To address the expensive nature of online data, 
offline RL is proposed to mitigate the dependency on online data. However, RL systems built only on offline datasets have uncontrolled bias.  A policy showing high performance on offline data may actually perform very poorly in real deployment \citep{levine2018reinforcement}. Therefore, both online and offline RL practitioners heavily use online methods to evaluate the performance of policies. 

To efficiently evaluate multiple policies,
previous works try to reuse online samples generated by other target policies. \citet{agarwal2017effective} show that naively combining data generated by other policies may actually worsen the estimation. Data from other policies must be carefully reweighed before consideration. However, in multi-step reinforcement learning, those weights require knowing complicated covariance terms between every pair of target policies \citep{lai2020optimal}. Such strong prior knowledge is rarely available and makes these methods impractical. To avoid complex weights, other literature \citep{dann2023multiple} tried to reuse online samples by assuming deterministic policies and a flexible environment that can start from any desired state. These assumptions rarely hold. 

In our work, we design a tailored behavior policy to \textbf{efficiently} and \textbf{unbiasedly} evaluate all target policies. Our method does not require knowing any complex covariance and applies to general RL settings without any restrictive assumptions. 
Our contribution is two-fold.

Theoretically, our method is always unbiased (Theorem \ref{lemma: rl pdis unbaised}) and is proven to achieve lower variance than the on-policy estimator for each target policy under characterized conditions (Theorem \ref{theorem: better than each rl}, Theorem \ref{theorem: better than each average rl}). Moreover, we introduce a similarity metric between policies and prove that the number of required samples for our method does not scale with the number of target policies under the similarity condition.

Empirically, compared with previous best methods,  we show our estimator has a substantially lower variance. Our method requires much fewer samples to reach the same level of accuracy and 
achieves state-of-the-art performance in a broad range of environments. 
 
\section{Related Work}

\label{sec related work}
\tb{Multiple target policies.} 
In multi-policy evaluation, traditional approaches often evaluate each policy separately using on-policy Monte Carlo methods. However, this ordinary method ignores the potential similarity between target policies and is crude for two reasons.
\textbf{First}, the method does not utilize data sampled by other policies,
causing the number of required online samples to scale quickly with the number of target policies.
\textbf{Second}, even for a single target policy, the on-policy evaluation method is still not the optimal choice.
Through a tailored behavior policy \citep{liu2024efficient}, the variance of the on-policy Monte Carlo evaluation can be reduced while achieving an unbiased estimation. 

To address the inefficiency in multi-policy evaluation problem,
\citet{dann2023multiple} present an algorithm to reuse online samples from target policies. However, their algorithm works only when all target policies are deterministic, which is also highly restricted. \textit{By contrast, our method copes with stochastic policies.} The key difference is that
they consider the plain approach by reusing samples from target policies, while we propose a tailored behavior for multiple target policies, which is designed to generate samples that all similar policies can efficiently share. \citet{liu2024efficient} also design a behavior policy for off-policy evaluation. However, they only consider a single target policy, which is narrower than our settings. In the empirical results section, we also show that our method outperforms theirs \citep{liu2024efficient}, in multi-policy evaluation problems under the multi-step RL setting. Using a shared behavior policy tailored for all similar target policies, our method achieves state-of-the-art performance and does better than all existing methods.

\tb{Multiple logging policies.}
Other approaches consider using data from multiple logging policies to perform off-policy evaluation, although only aiming at a single target policy. We call them logging policies because in their works \citep{agarwal2017effective, lai2020optimal,kallus2021optimal}, data are previously logged from certain behavior policies and are fixed. This is different from our setting, in which we design an active data-collecting policy for multiple target policies.

\citet{agarwal2017effective} point out that directly combining data from different policies may increases the estimation variance. They then propose two new estimators by reweighting data from different policies. However, their method is restricted to the contextual bandit setting. \textit{By contrast, we work on the multi-step reinforcement learning setting, which is much broader.} \citet{lai2020optimal} extend the method from \citet{agarwal2017effective} into multi-step RL. Nevertheless, getting the desired weights for different logging policies requires knowing complicated covariance terms between every pair of logging policies. That is, given $K$ logging policies, their method needs to compute $K^2$ covariances. Such strong prior knowledge is rarely available and is computationally expensive, making the method impractical. Furthermore, they ignore bias from any off-policy estimator.\textit{ By contrast, with the tailored behavior policy, our estimator is inherently and provably unbiased.} \citet{kallus2021optimal} also explore off-policy evaluation with multiple target policies in RL setting.
They combine the reweighting strategy with the control variate method, leading to a reduced variance estimation. However, getting the weights proposed by their method requires knowledge of state visitation densities, whose approximation is very challenging in MDPs with large stochasticity and function approximation (cf. model-based RL \citep{sutton1990integrated,sutton2012dyna,deisenroth2011pilco,chua2018deep}). Due to this impracticability, \citet{kallus2021optimal} only conduct experiment of their method in the contextual bandit setting, remaining the experiment on the multi-step RL setting untouched. \textit{By contrast, our method avoids reliance on any terms that are impractical to estimate.}

\section{Background}
In this work, we focus on a finite horizon Markov Decision Process (MDP), as defined by \citet{puterman2014markov}, with a finite state space $\fS$, a finite action space $\fA$, 
a reward function $r: \fS \times \fA \to \R$,
a transition probability function $p: \fS \times \fS \times \fA \to [0, 1]$,
an initial distribution $p_0: \fS \to [0, 1]$,
and a constant horizon length $T$.   
To simplify notations, we consider the undiscounted setting. But our results can be naturally applied to the discounted setting \citep{puterman2014markov} as long as the horizon is fixed and finite.
We define $[n] \doteq \qty{0, 1, \dots, n}$ for any integer $n$.

There are $K$ policies to be evaluated. In this paper, any index with parenthesis around it (e.g. $\pik$) is related to the \emph{policy index}. We define abbreviations $\pik_{i:j} \doteq \qty{\pik_i, \pik_{i+1}, \dots, \pik_j}$ and $\pik \doteq \pik_{0:T-1}$, where $\pi_t^{(k)}: \fA \times \fS \to [0, 1]$ defines the probability of selecting action $A_t$ given the state $S_t$ at time $t \in [T-1]$. An initial state $S_0$ is sampled from $p_0$ at time step $0$. At each time step $t$, after the execution of an action, a finite reward $R_{t+1} = r(S_t, A_t)$ is obtained and a successor state $S_{t+1}$ is sampled from $p(\cdot \mid S_t, A_t)$.

We define the return at time step $t$ as 
$
  G_t \doteq \sum_{i={t+1}}^T R_i.
$
The state- and action-value function is defined as
$
v_{\pik, t}(s) \doteq \E_{\pik}\left[G_t \mid S_t = s\right]$ and $
q_{\pik, t}(s, a) \doteq \E_{\pik}\left[G_t \mid S_t = s, A_t = a\right].
$
The performance of the policy $\pi$ is defined as $\textstyle J(\pik) \doteq \textstyle\sum_s p_0(s) v_{\pik, 0}(s).$
We adopt the total rewards performance metric, introduced by \citet{puterman2014markov}, as a measurement
of the performance. In this work, we focus on the Monte Carlo methods, which have been widely adopted since their introduction by \citet{kakutani1945markoff}. 
We draw samples of $J(\pik)$ by executing the policy $\pik$ online. The empirical average of the sampled returns converges to $J(\pik)$ as the number of samples increases.
Since this method estimates the performance of a policy $\pik$ by running itself, it is called on-policy learning (\citealt{sutton1988learning}).

Henceforth, we study off-policy learning,
in which we need to estimate the total rewards $J(\pik)$ of a policy $\pik$,
called the target policy,
by running a different policy $\mu$,
known as the behavior policy. Each trajectory
$
\textstyle \qty{S_0, A_0, R_1, S_1, A_1, R_2, \dots, S_{T-1}, A_{T-1}, R_T}
$
is generated by a behavior policy $\mu$ with
$
  S_0 \sim p_0, A_{t} \sim \mu_{t}(\cdot | S_{t}), \, t  \in [T-1].
$
We use
$$
  \tau^{\mu_{t:T-1}}_{t:T-1} \doteq \qty{S_t, A_t, R_{t+1}, \dots, S_{T-1}, A_{T-1}, R_{T}}
$$
to denote a segment of a random trajectory generated by the behavior policy $\mu$ from the time step $t$ to the time step $T-1$ inclusively.  
The key tool for off-policy learning is importance sampling (IS) \citep{Rubinstein1981Simulation}, which is used to reweight rewards collected by $\mu$ to give an unbiased estimate of $J(\pik)$. For each policy $\pik$, 
the importance sampling ratio at time step $t$ is defined as
$
\textstyle \rhok_t \doteq \frac{\pik_t(A_t \mid S_t)}{\mu_t(A_t \mid S_t)}.
$
Then, the product of importance sampling ratios from time $t$ to $t' \geq t$ is defined as 
$
\textstyle \rhok_{t:t'} \doteq \prod_{i=t}^{t'} \frac{\pik_i(A_i | S_i)}{\mu_i(A_i | S_i)}.
$
Among the various ways to use the importance sampling ratios in off-policy learning \citep{geweke1988antithetic, hesterberg1995weighted, koller2009probabilistic,thomas2015safe},
we use the per-decision importance sampling estimator (PDIS, \citet{precup:2000:eto:645529.658134}) in this paper and leave the study of others for future work.
For $\pik$, the PDIS Monte Carlo estimator is defined as  $\textstyle \pdisgpik\qty(\tau^{\mu_{t:T-1}}_{t:T-1}) \doteq \sum_{i=t}^{T-1} \rhok_{t:i} R_{i+1}$, which is unbiased
for any behavior policy $\mu$  that covers target policy $\pik$ \citep{precup:2000:eto:645529.658134}. 
That is,
when $\forall s$, $\forall a$,
$\mu_t(a|s) = 0 \implies \pik_t(a|s)=0$, we have $\forall t$, $\forall s$,
$\E[ \pdisgpik\qty(\tau^{\mu_{t:T-1}}_{t:T-1}) \mid S_t = s ]  = v_{\pik,t}(s)$.
We also leverage the recursive form of the PDIS estimator:
\begin{align}\label{eq: PDIS-recursive}
&\pdisgpik\qty(\tau^{\mu_{t:T-1}}_{t:T-1}) \\
=&\begin{cases}
\rhok_t \left(R_{t+1} + \pdisgpik\qty(\tau^{\mu_{t+1:T-1}}_{t+1:T-1})\right) & t \in [T-2], \\
\rhok_tR_{t+1} & t = T-1.
\end{cases}
\end{align}
Because the PDIS estimator is unbiased,
reducing its variance is sufficient for the improvement of its sample efficiency. 
We achieve this variance reduction goal for multiple policies by designing a tailored behavior policy.

\section{Variance Reduction in Statistics} \label{sec:var-stats}
In this section, we propose the mathematical framework for variance reduction using importance sampling ratios.
Let $A$ be a discrete random variable with a finite set of possible values $\fA$, and assume it follows a probability mass function $\pik:\fA \to [0,1]$, called target policy. Additionally, let $q:\fA \to \R$ be a function that maps elements of $\fA$ to real numbers. Our objective is to estimate $\E_{A\sim \pik}[q(A)]$ for each $\pik$, where $k$ is an index within a finite set $[K]$. Since in this paper, data can be generated from multiple distributions, we specify their source clearly. We reserve the superscript with brackets $[\cdot,\cdot]$ to denote the source and the index of samples. For example, $A^{[\pik, i]}$ is the $i$th sample generated by running $\pik$. We use $n_k$ to denote the total number of samples sampled by policy $\pik$.
The plain Monte Carlo methods then samples $\qty{A^{[\pik, 1]}, \dots, A^{[\pik, n_k]}}$ from each
$\pik$ 
  and use the empirical average
$\frac{1}{n_k}\sum_{i=1}^{n_k} q(A^{[\pik, i]})$
as the estimate for each $\E_{A\sim \pik}[q(A)]$. 

The importance sampling is introduced as a variance reduction technique in statistics, where the main idea is 
to sample $\qty{q(A^{[\mu,i]})}_{i=1}^N$ following a distribution $\mu$ and use 
 $\frac{1}{N}\sum_{i=1}^N \rhok(A^{[\mu,i]}) q(A^{[\mu,i]})$
as the estimate,
where 
$
\textstyle  \rhok(A) \doteq \frac{\pik(A)}{\mu(A)}
$
is the importance sampling ratio. In this statistics section, we propose the optimal behavior policy $\mu$ that evaluates all target policies $\pik$ simultaneously by sharing samples. We also define the similarity of policies and prove when target policies satisfy the similarity condition, samples needed to estimate all of them do not scale with the number of policies $K$. These ideas are later extended into the multi-step reinforcement learning (RL) setting in the following section.

Assuming that $\forall i, \mu$ covers $\pik$, i.e.,
\begin{align}
\label{eq: stats converage}
\forall a, \mu(a) = 0 \implies \pik(a) = 0.   
\end{align}
Then, the importance sampling ratio weighted empirical average is unbiased, i.e., $\forall k$,
$\E_{A\sim \pik}[q(A)] = \E_{A\sim \mu}[\rhok(A)q(A)].$
If we carefully design the sampling distribution $\mu$, 
the variance can be reduced. 
We formulate 
this problem of searching a variance-reducing sampling distribution for $K$ policies as an optimization problem
\begin{align} 
\text{min}_{\mu \in \Lambda_-}  \quad & \textstyle\sum_{k \in [K]}
\V_{A\sim \mu}(\rhok(A)q(A)) \label{eq: math-optimization1},
\end{align}
where $\Lambda_-$ is the classical search space \citep{Rubinstein1981Simulation, zhang2022thesis, liu2024ode, qian2024almost} defined as
\begin{align}
\Lambda_- \doteq \qty{\mu \in \Delta(\fA) \mid \forall a, \forall k, \mu(a) = 0 \Rightarrow \pik(a) = 0}.
\end{align}
Here, $\Delta(\fA)$ denotes the set of all probability distributions on the set $\fA$.
In other words,
$\Lambda_- $ includes all distributions that cover $\qty{\pik}_{k=1}^K$.
In this work,
we  enlarge $\Lambda_-$ to $\Lambda$, which is defined as
\begin{align}
\Lambda \doteq\qty{\mu \in \Delta(\fA) \mid \forall a, \forall k, \mu(a) = 0 \Rightarrow \pik(a)q(a) = 0}.
  \label{eq: stats search space}
\end{align}
The space $\Lambda$ weakens the assumption in \eqref{eq: stats converage}.
We prove that any distribution $\mu$ in $\Lambda$ still gives unbiased estimation,
though $\Lambda_-  \subseteq \Lambda$.

\begin{restatable}[]{lemma}{reOOlemmaOOstatsOOunbiasedness}
\label{lemma: stats unbiasedness}
$\forall \mu \in \Lambda, \forall k$,
\begin{align}
 \E_{A\sim\mu}\left[\rhok(A)q(A)\right] = \E_{A\sim\pik}\left[q(A)\right].     
\end{align}
\end{restatable}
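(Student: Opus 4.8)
The plan is to expand both sides as finite sums over the action set $\fA$ and to track carefully the actions where $\mu$ vanishes, since this is exactly where the enlarged search space $\Lambda$ departs from the classical $\Lambda_-$ and where the ratio $\rhok$ is ill-defined. First I would partition $\fA$ into the support $\fA_+ \doteq \qty{a \in \fA \mid \mu(a) > 0}$ and its complement $\fA_0 \doteq \qty{a \in \fA \mid \mu(a) = 0}$. For the left-hand side, the expectation under $\mu$ assigns zero weight to every action in $\fA_0$, so those terms never enter the sum regardless of how $\rhok$ is interpreted there, which sidesteps the apparent $0/0$ issue. Writing the expectation explicitly then gives $\E_{A\sim\mu}[\rhok(A)q(A)] = \sum_{a \in \fA_+} \mu(a)\frac{\pik(a)}{\mu(a)}q(a) = \sum_{a \in \fA_+}\pik(a)q(a)$, where the factors $\mu(a)$ cancel on $\fA_+$ because they are strictly positive there.

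Next I would expand the right-hand side over the whole action set and split along the same partition, $\E_{A\sim\pik}[q(A)] = \sum_{a\in\fA_+}\pik(a)q(a) + \sum_{a\in\fA_0}\pik(a)q(a)$. Comparing with the previous display, the two sides agree precisely on $\fA_+$, so the lemma reduces to showing that the residual sum over $\fA_0$ vanishes. This is the step where the definition of $\Lambda$ in \eqref{eq: stats search space} does the essential work: for every $a\in\fA_0$ we have $\mu(a)=0$, and membership $\mu\in\Lambda$ forces $\pik(a)q(a)=0$ for all $k$, so every term of $\sum_{a\in\fA_0}\pik(a)q(a)$ is zero and the residual sum collapses.

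I do not expect a genuine technical obstacle here; the only delicate point is conceptual, namely justifying the asymmetry between the two sides. On the left the terms with $\mu(a)=0$ may simply be discarded because they carry zero probability mass (and hence the undefined ratio there is harmless), whereas on the right the corresponding terms are present with positive $\pik$-weight and must be explicitly annihilated using the $\Lambda$ condition. Making this asymmetry precise, together with the convention that $\rhok(a)q(a)$ is never evaluated on $\fA_0$ under the $\mu$-expectation, is exactly what legitimizes the enlargement from $\Lambda_-$ to $\Lambda$ while preserving unbiasedness.
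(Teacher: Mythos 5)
Your proposal is correct and follows essentially the same route as the paper's proof: both expand the $\mu$-expectation as a sum restricted to $\qty{a \mid \mu(a) > 0}$, cancel the $\mu(a)$ factors there, and then use the defining condition of $\Lambda$ to add back the terms with $\mu(a) = 0$ (since $\pik(a)q(a) = 0$ on that set), recovering the full sum $\sum_a \pik(a)q(a) = \E_{A\sim\pik}\qty[q(A)]$. Your explicit discussion of the asymmetry between the two sides and the harmlessness of the $0/0$ ratio under the $\mu$-expectation is a clean articulation of exactly the intuition the paper notes after its third equality.
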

Its proof is in the appendix. 
We now consider the variance minimization problem on $\Lambda$, i.e.,
\begin{align} 
\text{min}_{\mu \in \Lambda}  \quad &  \textstyle \sum_{k \in [K]}
\V_{A\sim \mu}(\rhok(A)q(A)) \label{eq: math-optimization}.
\end{align}
The following lemma gives an optimal solution $\mu^*$ to the optimization problem~\eqref{eq: math-optimization}.

\begin{restatable}[]{lemma}{reOOlemmaOOmathOOoptimal}
\label{lemma: math optimal}
Define $\mu^*(a) \propto \sqrt{\sum_{k \in [K]}{\pik}(a)^2q(a)^2}$.
Then $\mu^*$ is an optimal solution to \eqref{eq: math-optimization}.
\end{restatable}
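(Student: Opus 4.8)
The plan is to reduce the variance-minimization problem \eqref{eq: math-optimization} to the constrained minimization of a single sum of second moments, and then to solve that reduced problem by one application of the Cauchy--Schwarz inequality whose equality condition pins down $\mus$ exactly. The only genuinely delicate point is the support bookkeeping, which I address at the end.

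First I would invoke Lemma~\ref{lemma: stats unbiasedness} to note that for every feasible $\mu \in \Lambda$ and every $k$, the mean $\E_{A\sim\mu}[\rhok(A)q(A)] = \E_{A\sim\pik}[q(A)]$ does not depend on $\mu$. Writing each variance as a second moment minus the squared mean then gives
\[
\sum_{k\in[K]}\V_{A\sim\mu}(\rhok(A)q(A)) = \sum_{k\in[K]}\E_{A\sim\mu}\qty[\qty(\rhok(A)q(A))^2] - C,
\]
where $C \doteq \sum_{k\in[K]}\qty(\E_{A\sim\pik}[q(A)])^2$ is independent of $\mu$. Hence minimizing the sum of variances over $\Lambda$ is equivalent to minimizing the sum of second moments over $\Lambda$.

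Next I would expand each second moment. Using $\rhok(a) = \pik(a)/\mu(a)$ and adopting the convention $0/0 = 0$ (legitimate because the support condition defining $\Lambda$ forces $\pik(a)q(a) = 0$ whenever $\mu(a) = 0$), we obtain
\[
\E_{A\sim\mu}\qty[\qty(\rhok(A)q(A))^2] = \sum_{a\in\fA}\mu(a)\frac{\pik(a)^2 q(a)^2}{\mu(a)^2} = \sum_{a\in\fA}\frac{\pik(a)^2 q(a)^2}{\mu(a)}.
\]
Swapping the order of summation and setting $c(a) \doteq \sum_{k\in[K]}\pik(a)^2 q(a)^2$, the objective becomes $\sum_{a\in\fA} c(a)/\mu(a)$. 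By Cauchy--Schwarz applied over $\qty{a : \mu(a) > 0}$, for any $\mu \in \Delta(\fA)$,
\[
\qty(\sum_{a\in\fA}\sqrt{c(a)})^2 = \qty(\sum_{a\in\fA}\frac{\sqrt{c(a)}}{\sqrt{\mu(a)}}\sqrt{\mu(a)})^2 \le \qty(\sum_{a\in\fA}\frac{c(a)}{\mu(a)})\qty(\sum_{a\in\fA}\mu(a)) = \sum_{a\in\fA}\frac{c(a)}{\mu(a)},
\]
so the objective is bounded below by the $\mu$-independent constant $\qty(\sum_{a\in\fA}\sqrt{c(a)})^2$, with equality precisely when $\mu(a) \propto \sqrt{c(a)} = \sqrt{\sum_{k\in[K]}\pik(a)^2 q(a)^2}$, i.e., when $\mu = \mus$.

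Finally I would verify feasibility and tightness. Since $\mus(a) = 0$ holds exactly when $c(a) = 0$, which is equivalent to $\pik(a)q(a) = 0$ for all $k$, the support condition of \eqref{eq: stats search space} is satisfied, so $\mus \in \Lambda$; its normalizing constant $\sum_{a\in\fA}\sqrt{c(a)}$ is strictly positive outside the degenerate case where $q$ vanishes on the common support (in which case every $\mu$ is trivially optimal). The main obstacle is precisely this support bookkeeping: I must ensure the $0/0$ terms are handled consistently so that the Cauchy--Schwarz step is valid term-by-term on $\qty{a : \mu(a) > 0}$, that the lower bound $\qty(\sum_{a\in\fA}\sqrt{c(a)})^2$ is the same for every feasible $\mu$ (because $\sqrt{c(a)} = 0$ at every $a$ outside the support), and that the equality-attaining distribution $\mus$ indeed lies inside $\Lambda$ rather than on a point excluded by the covering constraint.
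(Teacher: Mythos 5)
Your proposal is correct and follows essentially the same route as the paper's proof: reduce the sum of variances to a sum of second moments via Lemma~\ref{lemma: stats unbiasedness}, rewrite the objective as $\sum_a c(a)/\mu(a)$, and apply Cauchy--Schwarz, whose equality condition yields $\mu^*(a)\propto\sqrt{c(a)}$. The only difference is presentational: the paper makes the support bookkeeping explicit by splitting on $\fA_+ \doteq \qty{a \mid \exists k,\ \pik(a)q(a)\neq 0}$ and solving an auxiliary problem on $\Delta(\fA_+)$ before extending by zero, whereas you handle the same points inline via the $0/0$ convention and a direct feasibility check of $\mu^*\in\Lambda$, including the degenerate case --- both are sound.
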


Its proof is in the appendix. Here,  $\mu(a) \propto  f(a)$ with some non-negative $f(a)$ means
\begin{align}
\textstyle \mu(a) \doteq  f(a) / \sum_b  f(b).
\end{align}
If $f(a) = 0$ for all $a$,
the above ``reweighted'' distribution is not well defined.
We then use the convention to interpret $\mu(a)$ as a uniform distribution, i.e., $\mu(a) = 1/\na$.
\emph{This convention in using $\propto$ is adopted in the rest of the paper for simplicity.}

When estimating $\E_{A \sim \pik}\qty[q(A)]$, $\pik(a)q(a)$ shows how much an action contributes to the expectation and is heavily used \citep{owen2013Monte, liu2024efficient}. 
Denote 
\begin{align}
\wk(a) &\doteq \qty(\pik(a)q(a))^2,  \label{def: w} \\
\barw(a) & \doteq \textstyle\sum_{j \in [K]}{\wj}(a)/K. \label{def: barw}
\end{align}
We use $\etak(a)$ to denote the similarity between $\pik$ and the average $\barw(a)$,
\begin{align}\label{def: eta}
\etak(a)  \doteq  \wk(a)/\bar{w}(a).
\end{align}
Naturally, $\etak(a) = 1$  when all policies are the same on $a$.
Define $\underline{\eta} \doteq \min_{k,a} \etak(a)$ and  $\overline{\eta} \doteq \max_{k,a} \etak(a)$, we have $\forall k, a$, 
\begin{align}\label{eq: eta inequality}
\underline{\eta} \leq \etak(a) \leq \overline{\eta}.
\end{align}
In the following theorem, we compare the variance of estimation methods.
For off-policy evaluation, our designed $\mus$ generates $n$ samples. For on-policy evaluation, when evaluating multiple policies, it is common for different policies to generate different numbers of samples. Thus, to achieve a fair and general enough comparison, each target policy $\pik$ generates $n_k$ samples. There is no constraint on $n_k$, as long as $ \sum_{k=1}^K n_k=n$. Using $A^{[\pik, i]}$ to denote the $i$th sample generated following $\qty{\pik}$, we define the empirical average for all $\pik$ as
\begin{align}
\Eonk \doteq & \textstyle\frac{\sum_{i=1}^{n_k}q(\Aki)}{n_k}. \label{def: Eonk}
\end{align}
Similarly, using $A^{[\mus, i]}$ to denote the $i$th sample generated by $\mus$, We define the 
empirical average for all $\pik$ as
\begin{align}
\Eoffk  \doteq \textstyle\frac{\sum_{i=1}^n  \rhoks(\Asi)q(\Asi)}{n}. \label{def: Eoffk}
\end{align}
Then, we characterize sufficient conditions on policy similarity such that with the same total samples, off-policy evaluation with our tailored behavior policy $\mu^*$ achieves a lower variance than on-policy Monte Carlo on each $\pik$.
\begin{restatable}[]{lemma}{reOOlemmaOObetterOOthanOOeachOOaverage}
\label{lemma: better than each average}
$\forall k \in [K]$,
\begin{align}
\V_{A \sim \mus}\qty(\Eoffk) \leq \V_{A \sim \pik}\qty(\Eonk),
\end{align}
if the similarity $\eta(\cdot)$ has $\forall k$,
\begin{align}
&\textstyle\sqrt{\frac{\overline{\eta}}{\underline{\eta}}} \qty(\sum_{a} \pik(a) q(a))^2 - \qty(\frac{n}{n_k} -1) \Delta^{(k)} \\ 
\leq &\textstyle\sum_{a} \pik(a) q(a)^2, \label{eq: eta sufficient average}
\end{align}
where 
\begin{align}\label{eq: def Delta k}
\textstyle\Delta^{(k)} \doteq \qty[ \sum_{a} \pik(a) q(a)^2 - \qty(\sum_{a} \pik(a) q(a))^2].    
\end{align}
\end{restatable}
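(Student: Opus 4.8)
The plan is to reduce both sides to explicit moment expressions, reduce the sufficient condition \eqref{eq: eta sufficient average} to a single scalar inequality, and bridge the two with a variance bound obtained from the range of $\etak$. Throughout, write $m_k \doteq \sum_a \pik(a) q(a)$ and $s_k \doteq \sum_a \pik(a) q(a)^2$, so that $\Delta^{(k)} = s_k - m_k^2$. For the on-policy side, since $\qty{q(\Aki)}_{i=1}^{n_k}$ are i.i.d.\ under $\pik$, I would simply get $\V_{A\sim\pik}(\Eonk) = \tfrac{1}{n_k}\V_{A\sim\pik}(q(A)) = \Delta^{(k)}/n_k$. For the off-policy side, note first that $\mus\in\Lambda$, because its support vanishes only where $\barw(a)=0$, i.e.\ where every $\pij(a)q(a)=0$; hence Lemma~\ref{lemma: stats unbiasedness} gives $\E_{A\sim\mus}[\rhoks(A)q(A)] = m_k$, while a direct second-moment computation gives $\E_{A\sim\mus}[(\rhoks(A)q(A))^2] = \sum_a \tfrac{\wk(a)}{\mus(a)} \doteq I_k$. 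Thus $\V_{A\sim\mus}(\Eoffk) = \tfrac{1}{n}(I_k - m_k^2)$, and the target inequality becomes, after multiplying by $n$ and using $s_k = \Delta^{(k)} + m_k^2$, the scalar statement $I_k \le \tfrac{n}{n_k}\Delta^{(k)} + m_k^2$.

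The core step is bounding $I_k$. Substituting the minimizer from Lemma~\ref{lemma: math optimal}, and using $\sum_k \pik(a)^2 q(a)^2 = K\barw(a)$ so that the constant $\sqrt{K}$ cancels in the normalization, I would write $\mus(a) = \sqrt{\barw(a)}/Z$ with $Z \doteq \sum_b \sqrt{\barw(b)}$. Then, using $\wk(a) = \etak(a)\barw(a)$,
\[
I_k = Z\sum_a \frac{\wk(a)}{\sqrt{\barw(a)}} = \qty(\sum_a \sqrt{\barw(a)})\qty(\sum_a \etak(a)\sqrt{\barw(a)}).
\]

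The hard part will be extracting the constant $\sqrt{\overline{\eta}/\underline{\eta}}$ rather than the cruder $\overline{\eta}/\underline{\eta}$: the trick is to distribute the two-sided bound $\underline{\eta} \le \etak(a) \le \overline{\eta}$ across the two factors. For the first factor, $\sqrt{\barw(a)} = \sqrt{\wk(a)/\etak(a)} \le \sqrt{\wk(a)}/\sqrt{\underline{\eta}}$; for the second, $\etak(a)\sqrt{\barw(a)} = \sqrt{\etak(a)}\,\sqrt{\wk(a)} \le \sqrt{\overline{\eta}}\,\sqrt{\wk(a)}$. Multiplying these and using $\sqrt{\wk(a)} = \pik(a)q(a)$ yields
\[
I_k \le \sqrt{\frac{\overline{\eta}}{\underline{\eta}}}\qty(\sum_a \pik(a)q(a))^2 = \sqrt{\frac{\overline{\eta}}{\underline{\eta}}}\, m_k^2 .
\]

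Finally I would close the argument by observing that the hypothesis \eqref{eq: eta sufficient average} is precisely $\sqrt{\overline{\eta}/\underline{\eta}}\, m_k^2 \le \tfrac{n}{n_k}\Delta^{(k)} + m_k^2$: moving $-(\tfrac{n}{n_k}-1)\Delta^{(k)}$ to the right and substituting $s_k = \Delta^{(k)}+m_k^2$ converts \eqref{eq: eta sufficient average} into exactly this form. Chaining $I_k \le \sqrt{\overline{\eta}/\underline{\eta}}\, m_k^2 \le \tfrac{n}{n_k}\Delta^{(k)} + m_k^2$ gives the scalar inequality, hence the variance comparison for every $k$. The one subtlety to keep honest is the identity $\sqrt{\wk(a)} = \pik(a)q(a)$, which presumes the contributions $\pik(a)q(a)$ are nonnegative (consistent with the paper's reading of $\pik(a)q(a)$ as an action's contribution to the expectation); without that, the bound carries $\sum_a \pik(a)\abs{q(a)}$ and the condition should be read accordingly.
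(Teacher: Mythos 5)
Your proposal is correct and follows essentially the same route as the paper's proof: unbiasedness via Lemma~\ref{lemma: stats unbiasedness}, the second-moment expression $\sum_a \wk(a)/\mus(a)$ under $\mus$, the two-sided bound $\underline{\eta} \leq \etak(a) \leq \overline{\eta}$ split across two factors to extract $\sqrt{\overline{\eta}/\underline{\eta}}$ (your product factorization of $I_k$ is the paper's ratio manipulation in different clothing), and the same algebraic rearrangement of \eqref{eq: eta sufficient average} via $s_k = \Delta^{(k)} + m_k^2$. Your two small additions are welcome but do not change the argument: explicitly checking $\mus \in \Lambda$ before invoking Lemma~\ref{lemma: stats unbiasedness} (which the paper leaves implicit), and flagging that $\sqrt{\wk(a)} = \pik(a)q(a)$ requires $q(a) \geq 0$ --- an implicit assumption the paper's own derivation of \eqref{eq: variance upper bound of mu *} shares.
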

Its proof is in the appendix. 
In Lemma~\ref{lemma: better than each average}, we show under characterized conditions, using only the same total samples $n$ generated by $\mu^*$, the off-policy estimator already achieves a lower variance than on-policy estimator for each target policy $\pik$. Now, we present a stronger lemma by allowing each target policy to also generate $n$ samples, resulting in a total of $nK$ samples, which is $K$ times larger than $n$.
Using the empirical average for on-policy estimator as defined in \eqref{def: Eonk}, we now have, for all $\pik$,
\begin{align}
\Eonk = &\textstyle\sum_{i=1}^{n}q(\Aki)/{n}. \label{def: Eonk_n}
\end{align}
Then, we simplify the variance of the on-policy estimator for $\pik$ as
\begin{align}
&\V_{A\sim \pik}(\Eonk)\\
=&\V_{A\sim \pik}(\textstyle \frac{ \sum_{i=1}^{n}q(\Aki)}{n}) \explain{By \eqref{def: Eonk_n}}\\
=&\frac{1}{n}\textstyle\V_{A\sim \pik}(\sum_{i=1}^{n}q(\Aki))\\
=&\V_{A\sim \pik}(q(A)).
\end{align}
In the last step, we leverage the independence of samples.
Similarly, using the definition of empirical average for off-policy estimator as defined in \eqref{def: Eoffk},
we have
\begin{align}
\V_{A\sim \pik}(\Eoffk) =   \V_{A \sim \mus}\qty(\rhoks(A)q(A)) .
\end{align}
Then, we formalize the superiority for the “$n$-to-$Kn$" comparison in the following theorem.

\begin{restatable}[]{lemma}{reOOlemmaOObetterOOthanOOeach}
\label{lemma: better than each}
$\forall k \in [K]$,
\begin{align}
\V_{A \sim \mus}\qty(\rhoks(A)q(A)) \leq \V_{A \sim \pik}\qty(q(A)),
\end{align}
if the similarity $\eta(\cdot)$ has $\forall k,$ 
\begin{align}\label{eq: eta sufficient}
\textstyle\sqrt{\frac{\overline{\eta}}{\underline{\eta}}} \qty(\sum_{a \in \fA} \pik(a)q(a))^2  \leq \sum_{a \in \fA} \pik(a) q(a)^2.
\end{align}
\end{restatable}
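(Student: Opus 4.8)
The plan is to reduce the claimed variance inequality to a second-moment comparison, and then to control the off-policy second moment by factoring a product of two sums through the similarity bounds \eqref{eq: eta inequality}.

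First I would verify $\mus\in\Lambda$: by \eqref{eq: stats search space}, $\mus(a)=0$ only if $\sum_k(\pik(a)q(a))^2=0$, which forces $\pik(a)q(a)=0$ for every $k$, so $\mus$ is admissible and Lemma~\ref{lemma: stats unbiasedness} gives $\E_{A\sim\mus}[\rhoks(A)q(A)]=\E_{A\sim\pik}[q(A)]=\sum_a\pik(a)q(a)$. Expanding each variance as $\V(X)=\E[X^2]-(\E X)^2$, the squared-mean terms coincide and cancel, so the lemma is equivalent to the second-moment inequality
\begin{align}
\textstyle\E_{A\sim\mus}\qty[(\rhoks(A)q(A))^2]=\sum_a\frac{(\pik(a)q(a))^2}{\mus(a)}\leq\sum_a\pik(a)q(a)^2=\E_{A\sim\pik}\qty[q(A)^2].
\end{align}

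Next I would make $\mus$ explicit. Since $\sum_k\pik(a)^2q(a)^2=K\barw(a)$ by \eqref{def: w} and \eqref{def: barw}, Lemma~\ref{lemma: math optimal} yields $\mus(a)=\sqrt{\barw(a)}\,/\sum_b\sqrt{\barw(b)}$. Substituting and using $\wk(a)=(\pik(a)q(a))^2$, $\etak(a)=\wk(a)/\barw(a)$, and $\sqrt{\wk(a)}=\pik(a)\abs{q(a)}$, the left-hand sum becomes a product of two sums:
\begin{align}
\textstyle\sum_a\frac{\wk(a)}{\mus(a)}=\qty(\sum_b\sqrt{\barw(b)})\qty(\sum_a\frac{\wk(a)}{\sqrt{\barw(a)}})=\qty(\sum_b\frac{\sqrt{\wk(b)}}{\sqrt{\etak(b)}})\qty(\sum_a\sqrt{\etak(a)}\sqrt{\wk(a)}),
\end{align}
where I used $\sqrt{\barw(a)}=\sqrt{\wk(a)}/\sqrt{\etak(a)}$ in both factors. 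Applying \eqref{eq: eta inequality} factorwise ($\etak\ge\underline\eta$ in the first sum, $\etak\le\overline\eta$ in the second) and multiplying gives
\begin{align}
\textstyle\sum_a\frac{\wk(a)}{\mus(a)}\leq\sqrt{\frac{\overline\eta}{\underline\eta}}\qty(\sum_a\pik(a)\abs{q(a)})^2,
\end{align}
and the sufficient condition \eqref{eq: eta sufficient} then finishes the second-moment inequality.

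The step I expect to be the main obstacle is the final sign bookkeeping: the factorization through \eqref{eq: eta inequality} naturally produces $\sum_a\pik(a)\abs{q(a)}$, whereas \eqref{eq: eta sufficient} is written with $\sum_a\pik(a)q(a)$. The two agree exactly when $q\ge 0$ -- which in the RL instantiation is the return under the total-reward metric and may be shifted to be nonnegative without affecting the comparison -- so I would either carry this nonnegativity assumption or read the condition with $\abs{q}$. I would also dispatch the degenerate case $\barw\equiv 0$ (where the $\propto$ convention below Lemma~\ref{lemma: math optimal} makes $\mus$ uniform) by noting $q$ then vanishes wherever it is weighted and both sides are zero. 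Finally, I would remark that this lemma is exactly the $n_k=n$ instance of the per-index content of Lemma~\ref{lemma: better than each average}: setting $n_k=n$ zeroes the correction $(n/n_k-1)\Delta^{(k)}$ in \eqref{eq: eta sufficient average} and reduces its conclusion to the per-sample inequality proved here, so the same algebra serves both.
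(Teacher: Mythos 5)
Your proposal is correct and takes essentially the same route as the paper: you rederive the paper's key bound \eqref{eq: variance upper bound of mu *} (expand the variance via unbiasedness from Lemma~\ref{lemma: stats unbiasedness}, substitute the closed form of $\mus$, rewrite through $\wk$, $\barw$, $\etak$, and apply \eqref{eq: eta inequality} to the two factors of the resulting product of sums) and then close with \eqref{eq: eta sufficient}, which is exactly what the paper's three-line proof does by citing that bound from the proof of Lemma~\ref{lemma: better than each average}; your $n_k=n$ remark matches the paper's framing as well. Your sign caveat is apt rather than a gap: the paper's own derivation silently writes $\sqrt{\wk(a)}=\pik(a)q(a)$, which likewise presumes $q\ge 0$ (harmless in the RL instantiation, where $\hat q_{\pik,t}$ is non-negative, but a genuine reading-with-$\abs{q}$ point in the general statistics setting), so your treatment is, if anything, slightly more careful than the paper's.
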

Its proof is in the appendix. The superiority of using our designed behavior policy $\mus$ comes from two sources. First, $\mus$ generates samples that all similar policies can efficiently share. Second, it is designed to generate low-variance and unbiased samples compared with the on-policy evaluation.

\section{Variance Reduction in Reinforcement Learning} \label{sec:variance}
We extend the techniques discussed in the statistics section into multi-step reinforcement learning (RL). In this section, Theorem~\ref{lemma: rl pdis unbaised} is the RL version of Lemma~\ref{lemma: stats unbiasedness} for unbiasedness. Theorem~\ref{lemma: rl-optimal} is the RL version of Lemma~\ref{lemma: math optimal} for behavior policy design. Theorem~\ref{theorem: better than each average rl} and \ref{theorem: better than each rl} are the RL version of Lemma~\ref{lemma: better than each average} and \ref{lemma: better than each}, respectively, for variance reduction.

As discussed in the related work section, the major caveat in multi-policy evaluation problems is data sharing. Without efficient data sharing, the total number of samples required for evaluating all policies increases rapidly with the number of target policies. Previous works try to reuse collected data across multiple target policies. However, their method rely on either (1) \textbf{restrictive assumptions}, namely, deterministic policies and flexible environment starting at any desired state \citep{dann2023multiple}, or (2) \textbf{impractical knowledge}, namely, complicated covariances \citep{lai2020optimal} and state visitation densities at very step \citep{kallus2021optimal}. Thus, none of the existing methods \citep{dann2023multiple, lai2020optimal, kallus2021optimal,agarwal2017effective} is implementable in the multi-step RL setting.

In this work, we tackle this notorious problem of efficient multi-policy evaluation in RL without any impracticability.
We seek to reduce the variance $\sum_{k\in [K]}\V\left(\pdisgpik\qty(\tau^{\mu_{0:T-1}}_{0:T-1})\right)$ 
by designing a proper behavior policy $\mu$.
Certainly,
we need to ensure that the PDIS estimator with this behavior policy is unbiased.

In the off-policy evaluation problem, classic reinforcement learning (\citealt{sutton2018reinforcement}) requires coverage assumption to ensure unbiased estimation. 
This means they only consider a set of  policies that cover $\qty{\pik}_{k=1}^K$, i.e.,
\begin{align}
\textstyle\Lambda_- \doteq 
\{& \mu \mid
\forall k, t, s, a, \mu_t(a|s) = 0 \implies \pik_t(a|s) = 0\}.
\end{align}
Similar to~\eqref{eq: stats search space},
we enlarge $\Lambda_-$ to
\begin{align}
\!\!\!\Lambda \doteq& \{\mu \mid \forall k, t, s, a, \mu_t(a|s) = 0 \\
&\implies 
\pik_t(a|s)q_{\pik, t}(s, a) = 0  \}.
\end{align}
We prove every policy $\mu \in \Lambda$ still achieves unbiased estimation in the following theorem. 
\begin{restatable}[Unbiasedness]{theorem}{reOOrlOOpdisOOunbaised}
\label{lemma: rl pdis unbaised}
$\forall \mu \in \Lambda$, $\forall k$, $\forall  t$, $\forall  s$, 
\begin{align}
    \E\left[\pdisgpik\qty(\tau^{\mu_{t:T-1}}_{t:T-1}) \mid S_t = s\right] = v_{\pik, t}(s).
\end{align}
\end{restatable}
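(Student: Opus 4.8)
The plan is to prove the identity by \emph{backward induction} on the time index $t$, running from the terminal step $t=T-1$ down to $0$, using the recursive form of the PDIS estimator in \eqref{eq: PDIS-recursive}. The two structural facts I will lean on throughout are the Bellman decomposition $q_{\pik, t}(s,a) = r(s,a) + \sum_{s'} p(s' \mid s, a)\, v_{\pik, t+1}(s')$ and the averaging identity $v_{\pik, t}(s) = \sum_a \pik_t(a \mid s)\, q_{\pik, t}(s,a)$, both immediate from the definitions of the value functions and the return $G_t = R_{t+1} + G_{t+1}$.

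For the base case $t=T-1$, the estimator reduces to $\pdisgpik(\tau^{\mu_{T-1:T-1}}_{T-1:T-1}) = \rhok_{T-1} R_T$ with $R_T = r(S_{T-1}, A_{T-1})$ and $q_{\pik, T-1}(s,a) = r(s,a)$. Taking the conditional expectation over $A_{T-1}\sim\mu_{T-1}(\cdot\mid s)$ and cancelling the $\mu_{T-1}$ factor against the importance ratio leaves the partial sum $\sum_{a:\, \mu_{T-1}(a\mid s) > 0} \pik_{T-1}(a \mid s)\, r(s,a)$ over the support of $\mu_{T-1}(\cdot\mid s)$.

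For the inductive step I assume the claim at $t+1$ and expand the recursion by conditioning first on $A_t\sim\mu_t(\cdot\mid s)$, then on $S_{t+1}\sim p(\cdot\mid s,A_t)$. After cancelling $\mu_t$ against $\rhok_t$, I invoke the Markov property — that the segment $\tau^{\mu_{t+1:T-1}}_{t+1:T-1}$ depends on the past only through $S_{t+1}$ — so that the induction hypothesis yields $\E[\pdisgpik(\tau^{\mu_{t+1:T-1}}_{t+1:T-1}) \mid S_{t+1} = s'] = v_{\pik, t+1}(s')$. The inner bracket then collapses to $r(s,a) + \sum_{s'} p(s' \mid s,a)\, v_{\pik,t+1}(s') = q_{\pik,t}(s,a)$ by Bellman, giving $\E[\pdisgpik(\tau^{\mu_{t:T-1}}_{t:T-1}) \mid S_t = s] = \sum_{a:\,\mu_t(a\mid s)>0} \pik_t(a\mid s)\, q_{\pik,t}(s,a)$.

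The main obstacle — and the only place the enlarged space $\Lambda$ rather than the classical $\Lambda_-$ matters — is that $\rhok_t$ is undefined at actions with $\mu_t(a\mid s)=0$, so the cancellation above recovers only the partial sum over $\{a : \mu_t(a\mid s)>0\}$. To close the gap I must show the omitted summands vanish: for any $a$ with $\mu_t(a\mid s)=0$, the defining condition of $\Lambda$ gives $\pik_t(a \mid s)\, q_{\pik, t}(s,a) = 0$, so adjoining these zero terms extends the partial sum to the full sum $\sum_a \pik_t(a \mid s)\, q_{\pik, t}(s,a) = v_{\pik, t}(s)$ (and identically $\pik_{T-1}(a\mid s)\, r(s,a)=0$ in the base case). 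This is exactly why weakening the coverage requirement from $\mu_t(a\mid s)=0 \implies \pik_t(a\mid s)=0$ to $\mu_t(a\mid s)=0 \implies \pik_t(a\mid s)\, q_{\pik,t}(s,a)=0$ preserves unbiasedness, completing the induction.
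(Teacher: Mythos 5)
Your proof is correct and takes essentially the same route as the paper's: backward induction on $t$ using the recursive PDIS form \eqref{eq: PDIS-recursive}, the Markov property to apply the inductive hypothesis at $S_{t+1}$, the Bellman identity to collapse the inner bracket to $q_{\pik,t}(s,a)$, and the observation that the summands omitted off the support of $\mu_t(\cdot \mid s)$ vanish because $\mu \in \Lambda$ forces $\pik_t(a \mid s)\, q_{\pik,t}(s,a) = 0$ there. The only difference is organizational: the paper factors that support-cancellation step into a standalone auxiliary result (Lemma~\ref{lemma: stats unbiasedness stronger}) invoked in both the base case and the inductive step, whereas you inline the same argument directly.
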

Its proof is in the appendix.
One immediate consequence of Theorem~\ref{lemma: rl pdis unbaised} is that 
$
\forall \mu \in \Lambda, \forall k,
\E\left[\pdisgpik\qty(\tau^{\mu_{0:T-1}}_{0:T-1})\right] = J(\pik).
$
In this paper, 
we consider a set $\hat \Lambda$ such that $\Lambda_- \subseteq \hat \Lambda \subseteq \Lambda$.  $\hat \Lambda$ inherits the unbiasedness property of $\Lambda$ and is less restrictive than $\Lambda_-$,
the classical search space of behavior policies.
This $\hat \Lambda$ will be defined shortly. We now formulate our problem as 
\begin{align}
\label{eq: rl opt problem}
\textstyle \min_{\mu \in \hat \Lambda} \quad \sum_{k\in [K]}\V\left(\pdisgpik\qty(\tau^{\mu_{0:T-1}}_{0:T-1})\right).
\end{align}
By the law of total variance,
for any $\mu \in \hat \Lambda$,
we decompose the variance of the PDIS estimator as 
\begin{align}
\label{eq: varaince-1}
&\textstyle \sum_{k\in [K]}\V\left(\pdisgpik\qty(\tau^{\mu_{0:T-1}}_{0:T-1})\right) \\
=& \textstyle \sum_{k\in [K]}\E_{S_0}\left[\V\left(\pdisgpik\qty(\tau^{\mu_{0:T-1}}_{0:T-1}) \mid S_0\right)\right] \\
&+ \V_{S_0}\left(\E\left[\pdisgpik\qty(\tau^{\mu_{0:T-1}}_{0:T-1}) \mid S_0\right]\right) \\
=& \textstyle \sum_{k\in [K]}\E_{S_0}\left[\V\left(\pdisgpik\qty(\tau^{\mu_{0:T-1}}_{0:T-1}) \mid S_0\right)\right]\\
&+ \V_{S_0}\left(v_{\pik, 0}(S_0)\right)  \explain{by Theorem~\ref{lemma: rl pdis unbaised}}. 
\end{align}
The second term in~\eqref{eq: varaince-1} is a constant given a target policy $\pik$ and is unrelated to the choice of $\mu$. 
In the first term, 
the expectation is taken over $S_0$ that is determined by the initial probability distribution $p_0$. 
Consequently,
to solve the problem~\eqref{eq: rl opt problem},
it is sufficient to solve for each $s$,
\begin{align}
  \label{eq: rl opt problem2}
\textstyle \min_{\mu \in \hat \Lambda} \quad\sum_{k\in[K]} \V\left( \pdisgpik \qty( \tau^{\mu_{0:T-1}}_{0:T-1} ) \mid S_0 = s\right).
\end{align}

\color{black}
Denote the variance of the state value for the next state
given the current state-action pair $(s,a)$ as $\nu_{\pik, t}(s, a)$.
We have $\nu_{\pik, t}(s, a) = 0$ for $t = T-1$ and otherwise
\begin{align}\label{def:nu}
% &\nu_{\pik,T-1}(s, a) \doteq 0\\
 \nu_{\pik,t}(s, a) \doteq\V_{S_{t+1}}\left(v_{\pik, t+1}(S_{t+1})\mid S_t=s, A_t=a\right).
\end{align}

To achieve variance reduction compared with on-policy evaluation, 
we aim to design $\hat \mu_t$ as an optimal solution to the following problem
\begin{align}\label{eq: one step optimization}
\textstyle\min_{\mu_t \in \hat{\Lambda}} \quad \sum_k \V\left(\pdisgpik\qty( \tau^\qty{\mu_t,\pik_{t+1}:\pik_{T-1}}_{t:T-1})\mid S_t = s\right),
\end{align}
The high-level intuition is that we aim to find the optimal behavior policy $\mu_t$ for the current step, assuming that in the future we perform the on-policy evaluation.
To define optimality, we first specify the set of policies we are concerned about.
To this end, we define that $\forall k$, $\hat q_{\pik, t}(s, a) \doteq q_{\pik, t}(s, a)^2$ for $t = T-1$ and otherwise
\begin{align}\label{eq: def q hat}
&\hat q_{\pik, t}(s, a) \doteq q_{\pik, t}(s, a)^2 + \nu_{\pik,t}(s, a) \\
 &+\textstyle\sum_{s'} p(s'|s, a)\V\left(\pdisgpik\qty(\tau^{\pik_{t+1:T-1}}_{t+1:T-1}) \mid S_{t+1} = s'\right).
\end{align}
Notably, 
$\hat q_{\pik, t}(s,a)$ is always \emph{non-negative} since all the summands are non-negative.
Accordingly,
we define $\hat \Lambda \doteq \{\mu \mid \forall k, t, s, a, \mu_t(a|s) = 0 \Rightarrow
\pik_t(a|s)\hat q_{\pik, t}(s, a) = 0\}$.
From \eqref{eq: def q hat},
we observe for any $k$, $t$, $s$, $a$,
$\hat q_{\pik, t}(s, a) \geq q_{\pik, t}(s, a) \geq 0$.
As a result, if $\mu_t \in \hat 
\Lambda$,
we have $\mu_t(a|s)=0\Rightarrow \pik_t(a|s)\hat q_{\pik, t}(s,a) = 0\Rightarrow \pik_t(a|s)q_{\pik, t}(s,a) = 0$.
Thus, $\hat \Lambda \subseteq \Lambda$.
To summarize,
we have $\Lambda_- \subseteq \hat \Lambda \subseteq \Lambda$.
% \begin{align}
  % .
% \end{align}
$\hat \Lambda$ inherits the unbiased property of $\Lambda$ (Theorem~\ref{lemma: rl pdis unbaised}) and is larger than
the classic space $\Lambda_-$ considered in previous works \citep{precup:2000:eto:645529.658134,maei2011gradient,sutton2016emphatic,sutton2018reinforcement}.

Now, we define the optimal behavior policy as
\begin{align}
  \label{def hat mu}
\textstyle\hat \mu_t(a|s) \propto 
\sqrt{\sum_{k=1}^K {\pik_t}(a|s)^2\hat q_{\pik, t}(s, a)} .
\end{align}
$\hat q$ defined in \eqref{eq: def q hat} is different from $q$, and is always non-negative.
We confirm the optimality of $\hat\mu_t$ in the following theorem. 
\begin{restatable}[Behavior Policy Design]{theorem}{restaterloptimal}
\label{lemma: rl-optimal}
For any $k$, $t$ and $s$,
the behavior policy $\hat \mu_t(a|s)$ defined in \eqref{def hat mu} is an optimal solution to the following problem
\begin{align}
\!\!\!
\textstyle\min_{\mu_t \in \hat{\Lambda}} \quad \sum_k \V\left(\pdisgpik\qty( \tau^\qty{\mu_t,\pik_{t+1}:\pik_{T-1}}_{t:T-1})\mid S_t = s\right).
\end{align}
\end{restatable}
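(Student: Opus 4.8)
The plan is to follow the template of Lemma~\ref{lemma: math optimal}: reduce the objective, up to a $\mu$-independent constant, to a single per-action reweighting problem, and then solve that problem by Cauchy--Schwarz. Fix $k$, $t$, and $s$, and let $X_k$ denote $\pdisgpik\qty( \tau^\qty{\mu_t,\pik_{t+1}:\pik_{T-1}}_{t:T-1})$. Since only the step $t$ is governed by $\mu_t$, the recursive form~\eqref{eq: PDIS-recursive} gives
\begin{align}
X_k = \frac{\pik_t(A_t\mid S_t)}{\mu_t(A_t\mid S_t)}\qty(R_{t+1} + \pdisgpik\qty(\tau^{\pik_{t+1:T-1}}_{t+1:T-1})).
\end{align}
I would then apply the law of total variance, conditioning on $A_t$, to split $\V(X_k\mid S_t=s)$ into $\E_{A_t}\qty[\V(X_k\mid S_t=s,A_t)]$ plus $\V_{A_t}\qty(\E\qty[X_k\mid S_t=s,A_t])$.

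For the mean part, Theorem~\ref{lemma: rl pdis unbaised} yields $\E\qty[X_k\mid S_t=s,A_t=a]=\frac{\pik_t(a\mid s)}{\mu_t(a\mid s)}q_{\pik,t}(s,a)$ and $\E\qty[X_k\mid S_t=s]=v_{\pik,t}(s)$, so $\V_{A_t}$ of the conditional mean equals $\sum_a \frac{\pik_t(a\mid s)^2}{\mu_t(a\mid s)}q_{\pik,t}(s,a)^2 - v_{\pik,t}(s)^2$, whose last term is a $\mu$-independent constant. For the variance part, $R_{t+1}$ is deterministic given $(S_t,A_t)$, so a second application of the law of total variance --- this time conditioning on $S_{t+1}$ --- identifies $\V\qty(R_{t+1}+\pdisgpik(\tau^{\pik_{t+1:T-1}}_{t+1:T-1})\mid S_t=s,A_t=a)$ with $\nu_{\pik,t}(s,a)+\sum_{s'}p(s'\mid s,a)\V\qty(\pdisgpik(\tau^{\pik_{t+1:T-1}}_{t+1:T-1})\mid S_{t+1}=s')$. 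The crucial observation is that combining this with the $q_{\pik,t}(s,a)^2$ term from the mean part reconstructs exactly $\hat q_{\pik,t}(s,a)$ of~\eqref{eq: def q hat}, giving
\begin{align}
\V(X_k\mid S_t=s)=\sum_a \frac{\pik_t(a\mid s)^2}{\mu_t(a\mid s)}\hat q_{\pik,t}(s,a)-v_{\pik,t}(s)^2.
\end{align}
Summing over $k$ reduces problem~\eqref{eq: one step optimization}, up to the additive constant $\sum_k v_{\pik,t}(s)^2$, to minimizing $\sum_a c_a/\mu_t(a\mid s)$ over $\mu_t(\cdot\mid s)\in\Delta(\fA)$, where $c_a\doteq\sum_k \pik_t(a\mid s)^2\hat q_{\pik,t}(s,a)\ge 0$.

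Finally, I would invoke Cauchy--Schwarz in the form $\qty(\sum_a c_a/\mu_a)\qty(\sum_a \mu_a)\ge\qty(\sum_a\sqrt{c_a})^2$; since $\sum_a\mu_a=1$, the lower bound $\qty(\sum_a\sqrt{c_a})^2$ is attained iff $\mu_a\propto\sqrt{c_a}$, which is precisely $\hat\mu_t(a\mid s)$ from~\eqref{def hat mu}. The main care point is well-definedness of the reduced objective on $\hat\Lambda$: whenever $\mu_t(a\mid s)=0$ the summand $\pik_t(a\mid s)^2\hat q_{\pik,t}(s,a)/\mu_t(a\mid s)$ is a $0/0$ that membership $\mu_t\in\hat\Lambda$ forces to be interpreted as $0$, since $\pik_t(a\mid s)\hat q_{\pik,t}(s,a)=0$ there; this is exactly what keeps both the variance decomposition and the Cauchy--Schwarz step valid, and the degenerate case $c_a\equiv 0$ is absorbed by the paper's $\propto$ convention returning the uniform distribution. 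I expect the bookkeeping in the nested total-variance decomposition --- matching the reassembled inner factor to $\hat q$ --- to be the main obstacle, with the Cauchy--Schwarz optimization being routine.
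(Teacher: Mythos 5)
Your proposal is correct and follows essentially the same route as the paper: the paper's proof invokes its Lemma~\ref{lemma: recursive-var} (the nested law-of-total-variance decomposition, conditioning on $A_t$ and then $S_{t+1}$) to rewrite the objective as $\sum_k \E_{A_t\sim\mu_t}\qty[{\rhok_t}^2 \hat q_{\pik,t}(S_t,A_t)\mid S_t] - v_{\pik,t}(S_t)^2$ and then reduces to Lemma~\ref{lemma: math optimal}, whose proof is exactly your per-action Cauchy--Schwarz argument with the same $0/0$ and degenerate-case handling via $\fA_+$. You simply inline both of these modular steps, reconstructing $\hat q_{\pik,t}$ from the two total-variance applications just as the paper does, so there is no substantive difference in the argument.
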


Its proof is in the appendix. Next, we formalize the similarity between target policies. Similar to \eqref{def: w}, \eqref{def: barw} in the statistics setting, 
% $\forall k,t,s$, 
$\forall k$,
$\forall t$, $\forall s$, we denote 
\begin{align}
\wk_t(s,a) &\doteq \pik_t(a|s)^2\hat q_{\pik,t}(s,a),  \label{def: w rl} \\
\textstyle\barw_t(s,a) & \textstyle\doteq \qty(\sum_{j \in [K]}\wj_t(s,a))/K. \label{def: barw rl}
\end{align}
Then, adopting the notation from \eqref{def: eta} and \eqref{eq: eta inequality}, we denote the similarity between $\pik_t$ and the average $\barw_t$ as
\begin{align}
\etak_t(s,a)  \doteq  \wk_t(s,a)/\bar{w}_t(s,a).\label{def: et rl}
\end{align}
When policies are the same, $\forall k,t,s$, $\etak_t(s,a) = 1$.  Define $\underline{\eta}_t \doteq \min_{k,s,a} \etak_t(s,a)$ and  $\overline{\eta} \doteq \max_{k,a} \etak_t(s,a)$, we have $\forall t,k,s, a$, 
\begin{align}\label{eq: eta inequality rl}
\textstyle\underline{\eta}_t \leq \etak_t(s,a) \leq \overline{\eta}_t.
\end{align}
Next, to extend the variance reduction property from statistics (Lemma~\ref{lemma: better than each average}) into reinforcement learning, we also allow each target policy to generate $n_k$ samples. With a similar notation, we have the empirical average for all $\pik$ as
\begin{align}
\Eonk_{t: T-1} \doteq \textstyle \frac{\sum_{i=1}^
{n_k}  \pdisgpik\qty(\tau^{[\pik_{t:T-1},i]}_{t:T-1})}{n_k}, \label{def: Eon rl}
\end{align}
where $\tau^{[\pik,i]}$ is the $i$th trajectory obtained by running $\pik$.
To achieve a fair comparison, when doing off-policy estimation by following $\hat \mu$, we generate $n=\sum_{k=1}^Kn_k$ samples. Likewise, define
\begin{align}
\textstyle\Eoffk_{t: T-1}  \doteq \frac{\sum_{i=1}^n  \pdisgpik\qty(\tau^{[\hat\mu_{t:T-1},i]}_{t:T-1})}{n}. \label{def: Eoff rl}
\end{align}
We have the following theorem.
\begin{restatable}[Variance Reduction with Same Sample Sizes]{theorem}{reOOtheoremOObetterOOthanOOeachOOaverageOOrl}
\label{theorem: better than each average rl}
$\forall k $, $\forall t$, $\forall s$, 
\begin{align}
&  \textstyle\V\left(E^{\text{off},\pik}_{t:T-1}\mid S_t=s\right) \leq \V\left(E^{\text{on,}\pik}_{t:T-1}\mid S_t=s\right).
\end{align}
if the similarity $\eta$ has $\forall k, \forall t,\forall s$,
\begin{align}
&\textstyle\sqrt{\frac{\overline{\eta_t}}{\underline{\eta_t}}} \qty(\sum_{a} \pik_t(a|s)\sqrt{\hat q_{\pik,t}(a|s)})^2-\qty(1-\frac{n_k}{n})\Delta^{(k)}_t(s)\\
\leq&\textstyle\sum_{a} \pik_t(a|s) \hat q_{\pik,t}(s,a), \label{eq: eta sufficient average rl}
\end{align}
where 
\begin{align}\label{eq: def Delta k rl}
\textstyle\Delta^{(k)}_t(s) \doteq\E_{A_t\sim \hat\mu_t}\left[{\rho^{\pik, \hat\mu}}^2 \nu_{\pik,t}(S_t, A_t) \mid S_t=s\right] \\
\textstyle+ \V_{A_t \sim \hat\mu_t}\qty({\rho^{\pik, \hat\mu}} q_{\pik, t}(S_t, A_t)\mid S_t=s).
\end{align}
\end{restatable}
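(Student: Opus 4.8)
The plan is to prove this by backward induction on $t$, reducing each inductive step to the same scalar inequality that underlies the statistics Lemma~\ref{lemma: better than each average}, but with $q(a)^2$ replaced by $\hat q_{\pik,t}(s,a)$. First I would use independence of the $n_k$ on-policy trajectories and the $n$ off-policy trajectories to write $\V(\Eonk_{t:T-1}\mid S_t=s)=\tfrac{1}{n_k}\V(\pdisgpik(\tau^{\pik_{t:T-1}}_{t:T-1})\mid S_t=s)$ and $\V(\Eoffk_{t:T-1}\mid S_t=s)=\tfrac{1}{n}\V(\pdisgpik(\tau^{\hat\mu_{t:T-1}}_{t:T-1})\mid S_t=s)$, so the goal becomes $\V(\pdisgpik(\tau^{\hat\mu_{t:T-1}}_{t:T-1})\mid S_t=s)\le \tfrac{n}{n_k}\V(\pdisgpik(\tau^{\pik_{t:T-1}}_{t:T-1})\mid S_t=s)$ for every $s$.

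Next I would establish the relevant variance recursions by applying the law of total variance twice (conditioning first on $A_t$, then on $S_{t+1}$) together with the unbiasedness of Theorem~\ref{lemma: rl pdis unbaised}, which gives that the conditional mean equals ${\rho^{\pik,\hat\mu}}\,q_{\pik,t}(s,a)$ with ${\rho^{\pik,\hat\mu}}=\pik_t(a|s)/\hat\mu_t(a|s)$. The key structural fact I would isolate is that $\hat q_{\pik,t}(s,a)$ from \eqref{eq: def q hat} is exactly the on-policy conditional second moment $\E[(\pdisgpik(\tau^{\pik_{t:T-1}}_{t:T-1}))^2\mid S_t=s,A_t=a]$; this yields the clean closed form $\V(\pdisgpik(\tau^{\pik_{t:T-1}}_{t:T-1})\mid S_t=s)=\sum_a\pik_t(a|s)\hat q_{\pik,t}(s,a)-v_{\pik,t}(s)^2$ for the on-policy side, mirroring the statistics identity $\V_{A\sim\pik}(q(A))=\sum_a\pik(a)q(a)^2-(\sum_a\pik(a)q(a))^2$. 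For the off-policy side I would derive the recursion $\V(\pdisgpik(\tau^{\hat\mu_{t:T-1}}_{t:T-1})\mid S_t=s)=\E_{A_t\sim\hat\mu_t}[{\rho^{\pik,\hat\mu}}^2\sum_{s'}p(s'|s,A_t)\V(\pdisgpik(\tau^{\hat\mu_{t+1:T-1}}_{t+1:T-1})\mid S_{t+1}=s')]+\Delta^{(k)}_t(s)$, recognizing $\Delta^{(k)}_t(s)$ of \eqref{eq: def Delta k rl} as the one-step contribution.

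With the base case $t=T-1$ handled by the convention that the future-variance term vanishes, the inductive step assumes $\V(\pdisgpik(\tau^{\hat\mu_{t+1:T-1}}_{t+1:T-1})\mid S_{t+1}=s')\le\tfrac{n}{n_k}\V(\pdisgpik(\tau^{\pik_{t+1:T-1}}_{t+1:T-1})\mid S_{t+1}=s')$ for all $s'$. Substituting this into the off-policy recursion and then using the identity $\sum_{s'}p(s'|s,a)\V(\pdisgpik(\tau^{\pik_{t+1:T-1}}_{t+1:T-1})\mid S_{t+1}=s')=\hat q_{\pik,t}(s,a)-q_{\pik,t}(s,a)^2-\nu_{\pik,t}(s,a)$, the future terms recombine with $\Delta^{(k)}_t(s)$ (since $\E_{A_t\sim\hat\mu_t}[{\rho^{\pik,\hat\mu}}^2(q_{\pik,t}^2+\nu_{\pik,t})\mid S_t=s]-v_{\pik,t}(s)^2=\Delta^{(k)}_t(s)$), and after algebra the entire step collapses to the scalar inequality $\E_{A_t\sim\hat\mu_t}[{\rho^{\pik,\hat\mu}}^2\hat q_{\pik,t}(s,A_t)\mid S_t=s]\le\sum_a\pik_t(a|s)\hat q_{\pik,t}(s,a)+(1-\tfrac{n_k}{n})\Delta^{(k)}_t(s)$. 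I would then bound the left side by plugging in $\hat\mu_t$ from \eqref{def hat mu}: the normalizing $\sqrt{K}$ cancels, giving $\E_{A_t\sim\hat\mu_t}[{\rho^{\pik,\hat\mu}}^2\hat q_{\pik,t}(s,A_t)\mid S_t=s]=(\sum_b\sqrt{\barw_t(s,b)})(\sum_a\etak_t(s,a)\sqrt{\barw_t(s,a)})$, and the bounds \eqref{eq: eta inequality rl} together with $\sqrt{\wk_t(s,a)}=\pik_t(a|s)\sqrt{\hat q_{\pik,t}(s,a)}$ yield $\le\sqrt{\overline{\eta_t}/\underline{\eta_t}}(\sum_a\pik_t(a|s)\sqrt{\hat q_{\pik,t}(s,a)})^2$; hypothesis \eqref{eq: eta sufficient average rl} then closes the induction.

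The hard part will be the off-policy recursion and its recombination step: unlike the on-policy side, the \emph{full}-trajectory off-policy variance does \emph{not} reduce to the clean form $\E_{A_t\sim\hat\mu_t}[{\rho^{\pik,\hat\mu}}^2\hat q_{\pik,t}\mid S_t=s]-v_{\pik,t}(s)^2$, because the future variance generated under $\hat\mu$ differs from that under $\pik$. The induction circumvents this by replacing the off-policy future variance with the on-policy one (paying the $n/n_k$ factor through the inductive hypothesis) and then using the $\hat q$-identity to absorb the residual $q_{\pik,t}^2$ and $\nu_{\pik,t}$ terms into $\Delta^{(k)}_t(s)$. Making this bookkeeping line up exactly with the coefficient $(1-\tfrac{n_k}{n})$ appearing in \eqref{eq: eta sufficient average rl}, while keeping $\Delta^{(k)}_t(s)\ge 0$ so that the $\eta$-bound step goes in the right direction, is where the care is needed.
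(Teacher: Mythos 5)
Your proposal is correct and takes essentially the same route as the paper's proof: the reduction via trajectory independence, the recursive variance decomposition, the on-policy closed form $\V\left(\pdisgpik\qty(\tau^{\pik_{t:T-1}}_{t:T-1})\mid S_t=s\right)=\E_{A_t\sim\pik_t}\left[\hat q_{\pik,t}(S_t,A_t)\mid S_t=s\right]-v_{\pik,t}(s)^2$, the backward induction establishing $\frac{n_k}{n}\V\left(\pdisgpik\qty(\tau^{\hat\mu_{t:T-1}}_{t:T-1})\mid S_t=s\right)\le\V\left(\pdisgpik\qty(\tau^{\pik_{t:T-1}}_{t:T-1})\mid S_t=s\right)$ (the paper's Lemma~\ref{lemma: nk n variance}), the $\eta$-ratio bound \eqref{eq: variance upper bound of mu * rl}, and the rewriting of $\Delta^{(k)}_t(s)$ via unbiasedness in \eqref{eq: sufficient delta manipulate} are precisely the paper's ingredients, combined in the same order. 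The only cosmetic difference is that you handle the conditional-on-$S_t=s$ statement at general $t$ directly, whereas the paper's final assembly additionally routes the unconditional $t=0$ comparison through a law-of-total-variance split over $S_0$ together with $n_k\le n$.
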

Its proof is in the appendix.
% ~\ref{append: better than each average rl}. 
We then compare the datasets when the behavior policy $\hat \mu$ and each target policy $\pik$ both generate $n$ samples, resulting in a “$n$-to-$nK$" comparison, 
similar to Lemma~\ref{lemma: better than each}.
\begin{restatable}[Variance Reduction]{theorem}{reOOtheoremOObetterOOthanOOeachOOrl}
\label{theorem: better than each rl}
$\forall k$, $\forall t$, $\forall s$,
\begin{align}
&\textstyle\V\left(\pdisgpik\qty(\tau^{ \hat \mu_{t:T-1}}_{t:T-1})\mid S_t = s\right) \\ \leq &\textstyle\V\left(\pdisgpik\qty(\tau^{\pik_{t:T-1}}_{t:T-1})\mid S_t = s\right),
\end{align}
if the similarity $\eta$ has $\forall k, \forall t, \forall s,$
\begin{align}\label{eq: eta sufficient each rl}
&\textstyle\sqrt{\frac{\overline{\eta}_t}{\underline{\eta}_t}} \qty(\sum_{a} \pik_t(a|s)\sqrt{\hat q_{\pik,t}(s,a)})^2 \\
 \leq &\textstyle\sum_{a}\pik_t(a|s)\hat q_{\pik, t}(s, a) .
\end{align}    
\end{restatable}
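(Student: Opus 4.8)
The plan is to prove the claim by backward induction on $t$, combining a one‑step variance recursion for the PDIS estimator with the statistics‑style $\eta$‑bound that established Lemma~\ref{lemma: better than each}. Throughout, abbreviate $\sigma^2_{t,\mu}(s) \doteq \V\qty(\pdisgpik\qty(\tau^{\mu_{t:T-1}}_{t:T-1}) \mid S_t = s)$ for a behavior policy $\mu$, with the convention $\sigma^2_{T,\mu}(s) \equiv 0$; the goal is $\sigma^2_{t,\hat\mu}(s) \le \sigma^2_{t,\pik}(s)$ for every $k,t,s$. First I would derive a variance recursion. Applying the law of total variance to the recursive form~\eqref{eq: PDIS-recursive} by conditioning first on $A_t$ and then on $S_{t+1}$, and using both that $R_{t+1}=r(S_t,A_t)$ is deterministic given the state–action pair and the unbiasedness of Theorem~\ref{lemma: rl pdis unbaised} (so that $\E_{A_t\sim\mu_t}[\rho^{\pik,\mu}_t q_{\pik,t}(s,A_t)] = v_{\pik,t}(s)$), I obtain for any $\mu \in \hat\Lambda$
\begin{align}
\sigma^2_{t,\mu}(s) = &\E_{A_t\sim\mu_t}\Big[(\rho^{\pik,\mu}_t)^2\big(\nu_{\pik,t}(s,A_t) + \textstyle\sum_{s'}p(s'|s,A_t)\sigma^2_{t+1,\mu}(s')\big)\Big] \\
&+ \V_{A_t\sim\mu_t}\big(\rho^{\pik,\mu}_t q_{\pik,t}(s,A_t)\big).
\end{align}

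Next I would specialize this recursion twice. For the purely on‑policy behavior $\mu=\pik$ (where $\rho^{\pik,\pik}_t\equiv 1$), the identity $\nu_{\pik,t}(s,a)+\sum_{s'}p(s'|s,a)\sigma^2_{t+1,\pik}(s') = \hat q_{\pik,t}(s,a)-q_{\pik,t}(s,a)^2$ holds by the very definition~\eqref{eq: def q hat} of $\hat q$, whose future term is exactly the on‑policy future variance; substituting it collapses the recursion to the closed form $\sigma^2_{t,\pik}(s) = \sum_a \pik_t(a|s)\hat q_{\pik,t}(s,a) - v_{\pik,t}(s)^2$. For the mixed behavior that runs $\hat\mu_t$ at step $t$ but is on‑policy thereafter, the same substitution, together with expanding the variance term and cancelling $\E_{A_t\sim\hat\mu_t}[(\rho^{\pik,\hat\mu}_t)^2 q_{\pik,t}^2]$, yields $\E_{A_t\sim\hat\mu_t}[(\rho^{\pik,\hat\mu}_t)^2\hat q_{\pik,t}(s,A_t)] - v_{\pik,t}(s)^2$.

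Then comes the induction itself. The base case $t=T-1$ is immediate from $\sigma^2_{T,\cdot}\equiv 0$. For the step I assume $\sigma^2_{t+1,\hat\mu}(s')\le\sigma^2_{t+1,\pik}(s')$ for all $s'$. Since $(\rho^{\pik,\hat\mu}_t)^2\ge0$ and $p(s'|s,A_t)\ge0$, the recursion is monotone in the future‑step variance, so replacing $\sigma^2_{t+1,\hat\mu}$ by $\sigma^2_{t+1,\pik}$ only enlarges the right‑hand side; this upper‑bounds $\sigma^2_{t,\hat\mu}(s)$ by the mixed‑behavior quantity $\E_{A_t\sim\hat\mu_t}[(\rho^{\pik,\hat\mu}_t)^2\hat q_{\pik,t}(s,A_t)]-v_{\pik,t}(s)^2$. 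After the $v_{\pik,t}(s)^2$ terms cancel against the on‑policy closed form, the remaining task is exactly
\begin{align}
\E_{A_t\sim\hat\mu_t}\big[(\rho^{\pik,\hat\mu}_t)^2\hat q_{\pik,t}(s,A_t)\big]\le\textstyle\sum_a\pik_t(a|s)\hat q_{\pik,t}(s,a),
\end{align}
which I would prove by the $\eta$‑argument of Lemma~\ref{lemma: better than each} with $q^2$ replaced by the non‑negative $\hat q_{\pik,t}$: substituting $\hat\mu_t(a|s)\propto\sqrt{K\barw_t(s,a)}$ and writing $\wk_t=\etak_t\barw_t$ turns the left side into $(\sum_a\etak_t(s,a)\sqrt{\barw_t(s,a)})(\sum_b\sqrt{\barw_t(s,b)})$, and using $\sqrt{\barw_t(s,a)}=\pik_t(a|s)\sqrt{\hat q_{\pik,t}(s,a)}/\sqrt{\etak_t(s,a)}$ together with $\underline{\eta}_t \le \etak_t \le \overline{\eta}_t$ bounds it by $\sqrt{\overline{\eta}_t/\underline{\eta}_t}\,(\sum_a\pik_t(a|s)\sqrt{\hat q_{\pik,t}(s,a)})^2$, which is at most $\sum_a\pik_t(a|s)\hat q_{\pik,t}(s,a)$ precisely under hypothesis~\eqref{eq: eta sufficient each rl}. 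This closes the induction.

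The hard part is that the statement compares the fully off‑policy behavior $\hat\mu$ against the fully on‑policy behavior over the entire remaining horizon, whereas the optimality of $\hat\mu_t$ (Theorem~\ref{lemma: rl-optimal}) is only a one‑step guarantee. Bridging this gap is the crux, and it rests on two aligned facts: the variance recursion is monotone in the future‑step variance, and $\hat q$ is \emph{defined} with the on‑policy future variance baked in. Together these let the inductive hypothesis upper‑bound the all‑steps‑$\hat\mu$ variance by the one‑step‑$\hat\mu$‑then‑on‑policy variance, after which the problem reduces to the identical statistics inequality already solved for Lemma~\ref{lemma: better than each}.
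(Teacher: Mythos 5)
Your proposal is correct and takes essentially the same route as the paper's own proof: backward induction on $t$, using the one-step variance recursion for the PDIS estimator, monotonicity of that recursion in the future-step variance to invoke the inductive hypothesis, the definition of $\hat q_{\pik,t}$ to collapse the bound to $\E_{A_t\sim\hat\mu_t}\bigl[(\rho^{\pik,\hat\mu}_t)^2\hat q_{\pik,t}(s,A_t)\bigr]-v_{\pik,t}(s)^2$, the $\eta$-based Cauchy--Schwarz-style bound together with the sufficient condition, and finally the on-policy closed form $\sum_a\pik_t(a|s)\hat q_{\pik,t}(s,a)-v_{\pik,t}(s)^2$. The only (harmless) cosmetic difference is that you anchor the induction at $t=T$ via the empty-trajectory convention so that $t=T-1$ is handled by the unified step, whereas the paper treats $t=T-1$ as an explicit base case with the same $\eta$-argument.
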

Its proof is in the appendix. This theorem implies that in the multi-step RL setting, running our tailored behavior policy $\hat \mu$ also ensures that the number of required samples does not scale with the number of target policies under similarity conditions. 
The reduced variance of our method depends on the similarity between target policies, which can be easily checked through learning $\hat q$ with offline data. Thus, if RL practitioners are not confident in the similarity between target policies, they can verify it before actual deployment without consuming any online data.

\begin{algorithm}[t]
\caption{Multi-Policy Evaluation (MPE) algorithm}
\label{alg: ODI algorithm}
\begin{algorithmic}[1]
\STATE {\bfseries Input:} 
% Estimators $r(s,a)$, $q_{\pik,t}(s,a)$, $\hat{q}_{\pik,t}(s,a)$, \\
$K$ target policies $\pik$,\\
an offline dataset $\mathcal{D} = \qty{(t_i,s_i,a_i,r_i,s'_i)}_{i=1}^m$
\STATE {\bfseries Output:} a behavior policy $\hat{\mu}$
\STATE Approximate $q_{\pik,t}$ from $\mathcal{D}$ using any offline RL method (e.g. Fitted Q-Evaluation)
\STATE Compute $\hat{r}_{\pik,i}$  for data pairs in $\mathcal{D}$ by \eqref{def: hat r}
\STATE Construct $\mathcal{D}^{(k)} \doteq \qty{(t_i,s_i,a_i,\hat{r}_{\pik,i},s'_i)}_{i=1}^m$ 
% by plugging $\hat{r_{\pik,i}}$ into $\mathcal{D}$
\STATE Approximate $\hat{q}_{\pik,t}$ from  $\mathcal{D}^{(k)} $ by \eqref{eq: recursive hat q} using any offline method (e.g. Fitted Q-Evaluation)
\STATE \textbf{Return:} $\textstyle \hat \mu_t(a|s) \propto 
\sqrt{\sum_{k=1}^K {\pik_t}(a|s)^2 \hat q_{\pik, t}(s, a)}$\\
\end{algorithmic}
\end{algorithm}
\begin{figure}[H]
\includegraphics[width=0.45\textwidth]{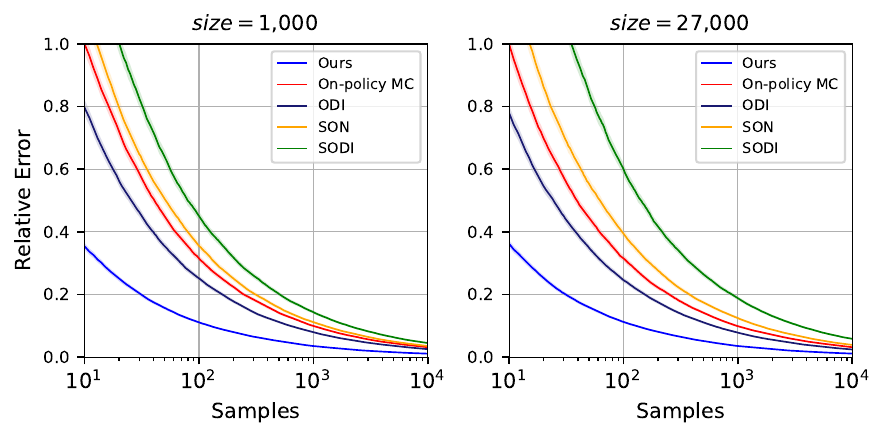}
\centering
\caption{Results on Gridworld. 
Each curve is averaged over 900 runs (30 groups of policies, each having 30 independent runs).
Shaded regions denote standard errors and are invisible for some curves because they are too small.
}
\label{fig:gridworld}
\end{figure}

\begin{figure*}[t]
\includegraphics[width=0.9\textwidth]{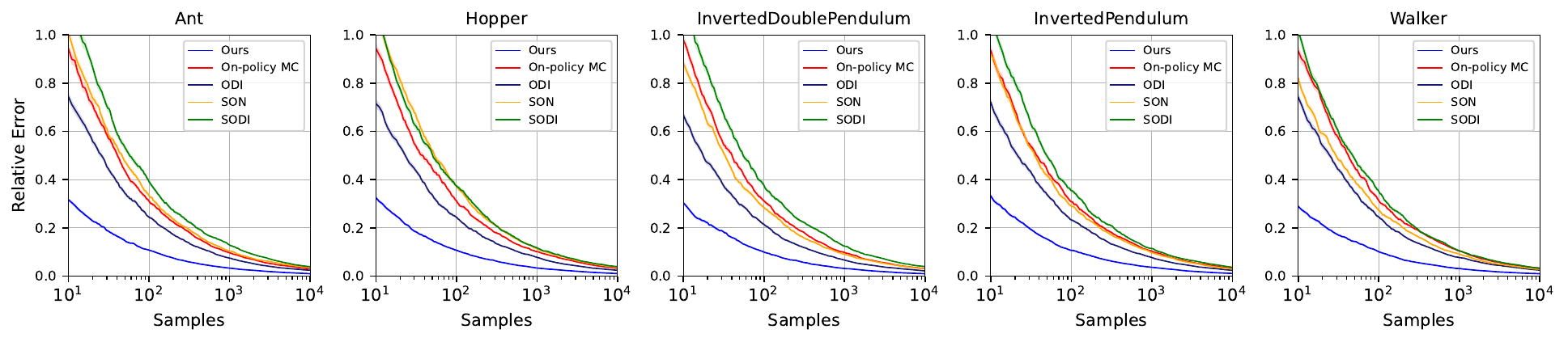}
\centering
\caption{
Results on MuJoCo. 
Each curve is averaged over 900 runs (30 groups of target policies, each having 30 independent runs). 
Shaded regions denote standard errors and are invisible for some curves because they are too small.
}
\label{fig:mujoco}
\end{figure*}
\section{Empirical Results}
We evaluate $K=10$ target policies simultaneously by executing the tailored behavior policy $\hat\mu$ with $n$ total samples. 
We name our method multiple policy evaluation (MPE) estimator. We present our empirical comparisons with the following baselines:
\tb{(1)} The canonical on-policy Monte Carlo estimator with $n_k$ samples for each target policy $\pik$, summing to a total of  $\textstyle n = \sum_{k=1}^{K} n_k$ samples.
\tb{(2)} The offline data informed estimator (ODI, \citet{liu2024efficient}) that runs each behavior policy (designed for each target policy $\pik$) for $n_k$ samples,  summing to a total of  $\textstyle n = \sum_{k=1}^{K} n_k$ samples.
\tb{(3)} The shared-sample on-policy Monte Carlo estimator (SON), where we evaluate each target policy with shared data collected by canonical on-policy Monte Carlo estimators of all $K$ policies, resulting in $n = \sum_{k=1}^{K} n_k$ samples used to evaluate every target policy. 
\tb{(4)} The shared-sample ODI estimator (SODI), where we evaluate each target policy with shared data collected by ODI estimators of all $K$ policies. Since each single behavior policy from the ODI estimator collects $n_k$ samples, each target policy in SODI leverages $n = \sum_{k=1}^{K} n_k$ samples. 

As a demonstration of concept, we set $K=10$ and $n_k=\frac{n}{K}$ for each of the $10$ target policies.  
Target policies are drawn from the training process of proximal policy optimization (PPO) algorithm \citep{schulman2017proximal}. 
We learn our behavior policy $\hat \mu$ using Algorithm~\ref{alg: ODI algorithm}. 
Hyperparameters are the same across all MuJoCo and Gridworld experiments.
Experimental details are in the appendix.

\textbf{Gridworld:} We use Gridworld with $m^3=1,000$ and $m^3=27,000$ states, 
where each Gridworld has a width $m$ and height $m$ with a 
 time horizon $T=m$. 

\begin{table}[H]
    \centering
\begin{tabular}{llllll}
\toprule
Env & Ours & On-policy & ODI & SON & SODI \\
 Size & & MC &  & &  \\
\midrule
1,000 & \textbf{0.125} & 1.000 & 0.637 & 1.289 & 2.073 \\
27,000 & \textbf{0.129} & 1.000 & 0.601 & 1.561 & 3.532 \\
\bottomrule
\end{tabular}
\caption{Relative variance of estimators on Gridworld. The relative variance is defined as the variance of each estimator divided by the variance of the on-policy Monte Carlo estimator. Numbers are averaged over 900 independent runs (30 groups of target policies, each having 30 independent runs).}
\label{table: gridworld variance ratio}
\end{table}
\begin{table}[H]
    \centering
\begin{tabular}{llllll}
\toprule
Env & Ours & On-policy & ODI & SON & SODI \\
 Size & & MC &  & &  \\
\midrule
1,000 & \textbf{126} & 1000 & 632 & 1264 & 2046 \\
27,000 & \textbf{131} & 1000 & 629 & 1568 & 3501 \\
\bottomrule
\end{tabular}
\caption{Episodes needed to achieve the same of estimation accuracy that on-policy Monte Carlo achieves with $1000$ episodes. Numbers are averaged over 900 independent runs (30 groups of target policies, each having 30 independent runs) and their standard errors are shown in Figure \ref{fig:gridworld}.}
\label{table: gridworld samples}
\end{table}

Figure~\ref{fig:gridworld} shows our method outperforms all baselines by a large margin. 
The \textit{relative error} is defined as the estimation error divided by the estimation error of the on-policy MC at the beginning of x-axis. The \textit{samples} on the x-axis represents the total online episodes for multi-policy evaluation. 
The blue line in the graph is below other lines, indicating that our method requires fewer samples to achieve the same accuracy. To quantify the variance reduction, Table~\ref{table: gridworld variance ratio} shows our method reduces variance to about $12.5\%$ compared with the on-policy Monte Carlo estimator. Table~\ref{table: gridworld samples} shows that to achieve the same estimation error that the on-policy Monte Carlo estimator achieves with $1000$ samples, our estimator only needs about $130$ samples saving about $87\%$ of online interactions, achieving state-of-the-art performance.

\textbf{MuJoCo:}
Next, we conduct experiments in MuJoCo robot simulation tasks \citep{todorov2012mujoco}.
MuJoCo is a physics engine containing various stochastic environments, where the goal is to control a robot to achieve different behaviors such as walking, jumping, and balancing. 
Figure~\ref{fig:mujoco} shows our method is consistently better than all baselines. The tables in the appendix show similar patterns as in the Gridworld experiment. In particular, our estimator reduces the variance to about $10\%$ compared with the on-policy Monte Carlo estimator and saves about $90\%$ of online interactions.

An interesting observation to demonstrate the discrepancy among target policies is that SODI and SON  generally perform worse than On-policy MC and ODI.
This result suggests that when target policies lack sufficient similarity, reusing data without a carefully designed joint behavior policy leads to high-variance estimation. Additionally, while ODI outperforms On-policy MC, SODI performs worse than SON. This may be because each behavior policy in SODI is specially tailored for its own target policy, making it vulnerable to target policy change. \textit{These observations confirm the notorious difficulty of data sharing across multiple policies, highlighting the need for a tailored and shared behavior policy to efficiently facilitate data sharing.} 

\section{Conclusion}
In this paper, we introduce a novel approach for multi-policy evaluation by designing a tailored behavior policy that efficiently and unbiasedly evaluates multiple target policies.

Theoretically, 
our method 
eliminates the need for restrictive assumptions or infeasible knowledge required by previous methods.
Our method achieves lower variance compared to on-policy evaluation for each target policy under similarity conditions (Theorem~\ref{theorem: better than each average rl}, Theorem~\ref{theorem: better than each rl}) and ensures the number of required samples does not scale with the number of target policies when similarity conditions hold.

Empirically,  our method outperforms previously best-performing methods, achieving state-of-the-art performance across various environments.
One promising future direction is to extend our variance reduction method to policy improvement and achieve efficient policy learning.

\section{Acknowledgements}
This work is supported in part by the US National Science Foundation (NSF) under grants III-2128019 and SLES-2331904. Claire Chen is supported in part by an Ingrassia Family Echols Scholars Research Grant.

\bibliography{bibliography}

%%%%%%%%%%%%%%%%%%%%%%%%%%%%%%%%%%%%%%%%%%%%%%%%%%%%%%%%%%%%%%%%%%%%%%%%%%%%%%%
%%%%%%%%%%%%%%%%%%%%%%%%%%%%%%%%%%%%%%%%%%%%%%%%%%%%%%%%%%%%%%%%%%%%%%%%%%%%%%%
% APPENDIX
%%%%%%%%%%%%%%%%%%%%%%%%%%%%%%%%%%%%%%%%%%%%%%%%%%%%%%%%%%%%%%%%%%%%%%%%%%%%%%%
%%%%%%%%%%%%%%%%%%%%%%%%%%%%%%%%%%%%%%%%%%%%%%%%%%%%%%%%%%%%%%%%%%%%%%%%%%%%%%%
\newpage
\appendix
\onecolumn

\section{Proofs}
\subsection{Proof of Lemma~\ref{lemma: stats unbiasedness}}
\label{sec proof lem stats unbiasedness}
\begin{proof}
$\forall k$, 
\begin{align}
  \E_{A\sim\mu}\left[\rhok(A)q(A)\right] =& \sum_{a \in \qty{a|\mu(a) > 0}} \mu(a) \frac{\pik(a)}{\mu(a)} q(a) \\
  =& \sum_{a \in \qty{a|\mu(a) > 0}} \pik(a) q(a) \\
  =& \sum_{a \in \qty{a|\mu(a) > 0}} \pik(a) q(a) + \sum_{a \in \qty{a | \mu(a) = 0}} \pik(a)q(a) \explain{$\mu \in \Lambda$} \\
  =&\sum_a \pik(a)q(a) \\
  =&\E_{A\sim\pik}\left[q(A)\right].
\end{align} 

The intuition in the third equation is that the sample $a$ where $\mu$ does not cover $\pik$ must satisfy $q(a) = 0$,
i.e.,
this sample does not contribute to the expectation anyway.
\end{proof}
\subsection{Proof of Lemma \ref{lemma: math optimal}}\label{append:math optimal}

\begin{proof}\hspace{1cm}\\
Define
\begin{align}\label{def: A +}
\fA_+ \doteq \qty{a \mid  \exists k, \pik(a)q(a) \neq 0}.
\end{align} 
For any $\mu \in \Lambda$, 
we expand the variance in \eqref{eq: math-optimization} as 
\begin{align}
&\sum_{k \in [K]} \V_{A\sim \mu}(\rhok(A)q(A)) \label{eq: math-variance} \\
=& \sum_{k \in [K]} \E_{A\sim \mu}[(\rhok(A)q(A))^2] - \E_{A\sim \mu}[\rhok(A)q(A)]^2 \\
=& \sum_{k \in [K]} \E_{A\sim \mu}[(\rhok(A)q(A))^2] - \E_{A\sim \pik}[q(A)]  ^2\explain{Lemma~\ref{lemma: stats unbiasedness}} \\
=& \sum_{k \in [K]} \sum_{a \in \qty{a \mid \mu(a) > 0}} \frac{{\pik}(a)^2q(a)^2}{\mu(a)}- \E_{A\sim \pik}[q(A)]^2 \\
=& \sum_{k \in [K]} \sum_{a \in \qty{a \mid \mu(a) > 0} \cap \fA_+} \frac{{\pik}(a)^2q(a)^2 }{\mu(a)}- \E_{A\sim \pik}[q(A)]^2 \explain{$\forall a \notin \fA_+, \forall k, \pik(a)q(a) = 0$} \\ 
=&  \sum_{a \in \fA_+} \frac{ \sum_{k \in [K]} {\pik}(a)^2q(a)^2 }{\mu(a)}- \sum_{k \in [K]} \E_{A\sim \pik}[q(A)]^2. \explain{$\mu \in \Lambda$} 
\end{align}
The second term is a constant and is unrelated to $\mu$. 
Solving the optimization problem \eqref{eq: math-optimization} is,
therefore, equivalent to solving
\begin{align} 
\text{min}_{\mu \in \Lambda}  \quad & 
\sum_{a \in \fA_+} \frac{\sum_{k \in [K]} {\pik}(a)^2q(a)^2 }{\mu(a)} \label{eq: math-optimization-2}.
\end{align}
\textbf{Case 1: $\abs{\fA_+} = 0$} \\
In this case,
optimization target \eqref{eq: math-variance} is always $0$. Any $\mu \in \Lambda$ is optimal.
In particular, $\mu^*(a) = \frac{1}{\fA}$ is optimal. \\
\textbf{Case 2: $\abs{\fA_+} > 0$} \\ 
The definition of $\Lambda$ in~\eqref{eq: stats search space} can be equivalently expressed, using contraposition, as 
\begin{align}
  \Lambda = \qty{\mu \in \Delta(\fA) \mid \forall a, a \in \fA_+ \implies \mu(a) > 0}.
\end{align}
The optimization problem~\eqref{eq: math-optimization-2} can then be equivalently written as
\begin{align}
 \text{min}_{\mu \in \Delta(\fA)}  \quad & 
\sum_{a \in \fA_+} \frac{\sum_{k \in [K]} 
 {\pik}(a)^2q(a)^2 }{\mu(a)} \label{eq: math-optimization-3} \\
\text{s.t.} \quad &
\mu(a) > 0 \quad \forall a \in \fA_+.
\end{align}
If for some $\mu$ we have
$\sum_{a\in \fA_+} \mu(a) < 1$,
then there must exist some $a_0 \notin \fA_+$ such that $\mu(a_0) > 0$.
By the definition of $\fA^+$ \eqref{def: A +},  $\forall a_0 \notin \fA_+$, 
\begin{align}
\sum_{k \in [K]} {\pik}^2(a_0)q^2(a_0) = 0.
\end{align}
This means $a_0$ 
does not contribute to the summation  in the objective function of~\eqref{eq: math-optimization-3},
we can move the probability mass on $a_0$ to some other $a_1 \in \fA_+$ to increase $\mu(a_1)$ to further decrease the objective.
In other words,
any optimal solution $\mu$ to~\eqref{eq: math-optimization-3} must put all its probability mass on $\fA_+$.
This motivates the following problem
\begin{align} 
\text{min}_{z \in \Delta(\fA_+)}  \quad & 
\sum_{a \in \fA_+} \frac{\sum_{k \in [K]}{\pik}(a)^2q(a)^2}{z(a)} \label{eq: math-optimization-4} \\
\text{s.t.} \quad &
z(a) > 0 \quad \forall a \in \fA_+.
\end{align}
In particular, if $z^*$ is an optimal solution to~\eqref{eq: math-optimization-4},
then an optimal solution to~\eqref{eq: math-optimization-3} can be constructed as
\begin{align}
  \label{eq: opt mu construnction}
  \mu^*(a) = \begin{cases}
    z^*(a) & a \in \fA_+, \\
    0 & \text{otherwise}.
  \end{cases}
\end{align}
Let $\R_{++} \doteq (0, +\infty)$.
According to the Cauchy-Schwarz inequality,
for any $z \in \R_{++}^\abs{\fA_+}$, 
we have
\begin{align}
\left(\sum_{a \in \fA_+} \frac{\sum_{k \in [K]} {\pik}(a)^2q(a)^2}{z(a)}\right)\left(\sum_{a \in \fA_+} z(a)\right) \geq&
\left(\sum_{a\in\fA_+} \frac{\sqrt{\sum_{k \in [K]} {\pik}(a)^2 q(a)^2}}{\sqrt{z(a)}} \sqrt{z(a)}\right)^2 \\
=& \left(\sum_{a\in\fA_+} \sqrt{\sum_{k \in [K]}{\pik}(a)^2q(a)^2} \right)^2.
\end{align}
It can be easily verified that the equality holds for 
\begin{align}
z^*(a) \doteq \frac{ \sqrt{\sum_{k \in [K]}{\pik}(a)^2q(a)^2}  }{\sum_{b} \sqrt{\sum_{k \in [K]}{\pik}(b)^2q(b)^2} } > 0.
\end{align}
Since $\sum_{a\in\fA_+} z^*(a) = 1$,
we conclude that $z^*$ is an optimal solution to~\eqref{eq: math-optimization-4}.
% The Lagrangian function of~\eqref{eq: math-optimization-5} is
% \begin{align}
%   L(z, \lambda) \doteq \sum_{a \in \fA+} \frac{{\pik}^2(a)q^2(a) }{z(a)} + \lambda \left(\sum_{a \in \fA+} z(a) - 1\right).
% \end{align}
% It can be computed that 
% \begin{align}
%   \pdv{L(z, \lambda)}{\mu(a)} =& -\frac{{\pik}^2(a)q^2(a)}{z^2(a)} + \lambda, \, \forall a \in \fA_+, \\
%   \pdv{L(z, \lambda)}{\lambda} =& \sum_{a \in \fA+} z(a) - 1.
% \end{align}
% Define
% \begin{align}
%   \lambda^* \doteq& \left(\sum_{a \in \fA+} \pik(a) q(a)\right)^2, \\
%   z^*(a)  \doteq& \pik(a) q(a)  \sqrt{\frac{1}{\lambda^*}},  \quad \forall a \in \fA_+.
% \end{align}
% Simple calculation shows that $L(z, \lambda)$ is convex in $z$ and concave in $\lambda$ on $\R_{++}^{\abs{\fA+}} \times [0, +\infty)$.
% And $(z^*, \lambda^*)$ is a saddle point of $L(z, \lambda)$ on $\R_{++}^{\abs{\fA}} \times [0, +\infty)$.
% The KKT condition (see, e.g., \citet{boyd2004convexopt}) then ensures that $z^*$ is an optimal solution to~\eqref{eq: math-optimization-5}.
An optimal solution $\mu^*$ to~\eqref{eq: math-optimization} can then be constructed according to~\eqref{eq: opt mu construnction}.
Making use of the fact that $\forall a \notin \fA, \forall k, \pik(a)q(a) = 0$,
this $\mu^*$ can be equivalently expressed as 
\begin{align}
\mu^*(a) = \frac{\sqrt{\sum_{k \in [K]}{\pik}(a)^2q(a)^2} }{\sum_{b \in \fA} \sqrt{\sum_{k \in [K]}{\pik}(b)^2q(b)^2}},
\end{align}
which completes the proof.
\end{proof}

\subsection{Proof of Lemma~\ref{lemma: better than each average}}
\label{appendix: better than each average}

\begin{proof}
$\forall k$, we first derive an upper-bound on $\V_{A \sim \mus}\qty(\rhoks(A)q(A))$,
\begin{align}
&\V_{A \sim \mus}(\rhoks(A)q(A)) \\
=& \E_{A \sim \mus}\qty[ (\rhoks(A)q(A))^2 ] - \E_{A \sim \mus}\qty[\rhoks(A)q(A)]^2 \\ 
=& \E_{A \sim \mus}\qty[ (\rhoks(A)q(A))^2 ] -  \E_{A \sim \pik}\qty[q(A)]^2   \explain{Lemma \ref{lemma: stats unbiasedness}}\\ 
=& \E_{A \sim \mus}\qty[ \frac{\wk(a)}{\mus(a)^2} ] -  \E_{A \sim \pik}\qty[q(A)]^2   \explain{By \eqref{def: w}}\\ 
=& \sum_{a} {\wk}(a) \frac{1}{\mus(a)} -  \E_{A \sim \pik}\qty[q(A)]^2 \\
=& \sum_{a} {\wk}(a) \qty(\frac{\sum_{b} \sqrt{\sum_{j \in [K]}{\wj}(b)}} {\sqrt{\sum_{j \in [K]}\wj(a))}}) -  \E_{A \sim \pik}\qty[q(A)]^2  \explain{By \eqref{def: w} and definition of $\mu^*$}\\
=& \sum_{a} {\wk}(a) \qty(\frac{\sum_{b} \sqrt{K\barw(b)}} {\sqrt{K\barw(a)}}) -  \E_{A \sim \pik}\qty[q(A)]^2 \explain{By \eqref{def: barw}}\\ 
=& \sum_{a} {\wk}(a) \qty(\frac{\sum_{b} \sqrt{\barw(b)}} {\sqrt{\barw(a)}}) -  \E_{A \sim \pik}\qty[q(A)]^2 \\    
=& \sum_{a} {\wk}(a) \qty(\frac{\sum_{b} \sqrt{\frac{\wk(b)}{\etak(b)}}} {\sqrt{\frac{\wk(a)}{\etak(a)}}}) -  \E_{A \sim \pik}\qty[q(A)]^2 \explain{By \eqref{def: eta}} \\   
\leq& \sum_{a} {\wk}(a) \qty(\frac{\sum_{b} \sqrt{\frac{\wk(b)}{\underline{\eta}}}} {\sqrt{\frac{\wk(a)}{\overline{\eta}}}}) -  \E_{A \sim \pik}\qty[q(A)]^2 \explain{By \eqref{eq: eta inequality}} \\   
=& \sum_{a} {\wk}(a) \qty(\frac{\sqrt{\frac{1}{\underline{\eta}}}\sum_{b} \sqrt{\wk(b)}} {\sqrt{\frac{1}{\overline{\eta}}}\sqrt{\wk(a)}}) -  \E_{A \sim \pik}\qty[q(A)]^2  \\   
= & \sum_{a} {\wk}(a) \qty(\frac{\sum_{b} \sqrt{\overline{\eta}\wk(b)}} {\sqrt{\underline{\eta}\wk(a)}}) -  \E_{A \sim \pik}\qty[q(A)]^2 \\   
=& \sqrt{\frac{\overline{\eta}}{\underline{\eta}}}\sum_{a} {\wk}(a) \qty(\frac{\sum_{b} \sqrt{\wk(b)}} {\sqrt{\wk(a)}}) -  \E_{A \sim \pik}\qty[q(A)]^2  \\   
=& \sqrt{\frac{\overline{\eta}}{\underline{\eta}}} \qty(\sum_{a} \sqrt{{\wk}(a)}) \qty(\sum_{b} \sqrt{\wk(b)} ) -  \E_{A \sim \pik}\qty[q(A)]^2  \\   
=& \sqrt{\frac{\overline{\eta}}{\underline{\eta}}} \qty(\sum_{a} \sqrt{{\wk}(a)})^2 -  \E_{A \sim \pik}\qty[q(A)]^2  \\ 
=&\explaind{\sqrt{\frac{\overline{\eta}}{\underline{\eta}}} \qty(\sum_{a} \pik(a) q(a))^2 -  \E_{A \sim \pik}\qty[q(A)]^2.}{By \eqref{def: w}}\label{eq: variance upper bound of mu *}
\end{align}
Then, $\forall k \in [K]$, observe the following inequality,
\begin{align}
 &\frac{1}{n} \qty[\sqrt{\frac{\overline{\eta}}{\underline{\eta}}} \qty(\sum_{a} \pik(a) q(a))^2 -  \E_{A \sim \pik}\qty[q(A)]^2]\\
 =&\frac{1}{n} \left[\sqrt{\frac{\overline{\eta}}{\underline{\eta}}} \qty(\sum_{a} \pik(a) q(a))^2-\qty(\frac{n}{n_k} -1) \qty[ \sum_{a} \pik(a) q(a)^2 - \qty(\sum_{a} \pik(a) q(a))^2]\right.\\
 &\left.+ \qty(\frac{n}{n_k} -1) \qty[ \sum_{a} \pik(a) q(a)^2 - \qty(\sum_{a} \pik(a) q(a))^2]- \E_{A \sim \pik}\qty[q(A)]^2\right]\\
 \leq&\frac{1}{n} \left[\sum_{a} \pik(a) q(a)^2 + \qty(\frac{n}{n_k} -1) \qty[ \sum_{a} \pik(a) q(a)^2 - \qty(\sum_{a} \pik(a) q(a))^2]- \E_{A \sim \pik}\qty[q(a)]^2\right]\explain{By \eqref{eq: eta sufficient average}}\\
 =&\frac{1}{n} \left[\frac{n}{n_k}   \sum_{a} \pik(a) q(a)^2 - \qty(\frac{n}{n_k} -1) \E_{A \sim \pik}\qty[q(a)]^2  - \E_{A \sim \pik}\qty[q(a)]^2\right]\\
 =&\frac{1}{n} \left[\frac{n}{n_k}   \sum_{a} \pik(a) q(a)^2 - \frac{n}{n_k} \E_{A \sim \pik}\qty[q(a)]^2\right]\\
=& \frac{1}{n_k} \qty[ \sum_{a} \pik(a) q(a)^2 -  \E_{A \sim \pik}\qty[q(a)]^2].  \label{eq: average sufficient used form}
\end{align}

Now, we have $\forall k \in [K]$,
\begin{align}
&\V_{A \sim \mus}\qty(\Eoffk) \\
=& \V_{A \sim \mus}\qty(\frac{\sum_{i=1}^n  \rhoks(\Asi)q(\Asi)}{n})   \explain{By \eqref{def: Eoffk}}   \\
=& \frac{1}{n^2} \V_{A \sim \mus}\qty(\sum_{i=1}^n  \rhoks(\Asi)q(\Asi))     \\
=& \frac{1}{n} \V_{A \sim \mus}\qty( \rhoks(A)q(A))  \explain{Independence of $\Asi$}   \\
\leq& \frac{1}{n} \qty[\sqrt{\frac{\overline{\eta}}{\underline{\eta}}} \qty(\sum_{a} \pik(a) q(a))^2 -  \E_{A \sim \pik}\qty[q(A)]^2] \explain{by \eqref{eq: variance upper bound of mu *}} \\
\leq& \frac{1}{n_k} \qty[ \sum_{a} \pik(a) q(a)^2 -  \E_{A \sim \pik}\qty[q(A)]^2] \explain{by \eqref{eq: average sufficient used form}}\\
=& \frac{1}{n_k }\V_{A \sim \pik}\qty( q(A) ) \\
=& \frac{1}{n_k^2 }\V_{A \sim \pik}\qty( \sum_{i=1}^{n_k}q(\Aki) ) \explain{Independence of $\Aki$}   \\
=& \V_{A \sim \pik}\qty(\frac{ \sum_{i=1}^{n_k}q(\Aki)}{ n_k} ) \\
=& \V_{A \sim \pik}\qty(\Eonk).\explain{By \eqref{def: Eonk}}
\end{align}

\end{proof}

\subsection{Proof of Lemma~\ref{lemma: better than each}}
\label{appendix: better than each}
\begin{proof}

When sampling from the target policy $\pik$, we have
$\forall k$, 
\begin{align}
&\V_{A \sim \pik} (q(A)) \\
=&\E_{A \sim \pik} \qty[q(A)^2] - \E_{A \sim \pik} \qty[q(A)] ^2 \\
=&\sum_{a} \pik(a) q(a)^2  - \E_{A \sim \pik} \qty[q(A)] ^2.\label{eq: onpolicy variance stats}
\end{align}

With the sufficient condition \eqref{eq: eta sufficient}, we show the variance reduction. $\forall k$,
\begin{align}
&\V_{A \sim \mus}\qty(\rhoks(A)q(A)) \\
=& \sqrt{\frac{\overline{\eta}}{\underline{\eta}}} \qty(\sum_{a} \pik(a)q(a))^2 -  \E_{A \sim \pik}\qty[q(A)]^2  \explain{by \eqref{eq: variance upper bound of mu *}}\\
\leq& \sum_{a} \pik(A) q(a)^2
-  \E_{A \sim \pik}\qty[q(A)]^2  \explain{by \eqref{eq: eta sufficient}} \\
=& \V_{A \sim \pik}\qty(q(A)).\explain{By \eqref{eq: onpolicy variance stats}} 
\end{align}

\end{proof}

\subsection{Proof of Theorem~\ref{lemma: rl pdis unbaised}}
\label{sec lem rl pdis unbaised}
Before proving Theorem~\ref{lemma: rl pdis unbaised}, we first present an auxiliary lemma that is a stronger version of Lemma~\ref{lemma: stats unbiasedness}.

\begin{lemma}\label{lemma: stats unbiasedness stronger}
$\forall \mu \in \Lambda$, $\forall k, \forall t, \forall s$,
\begin{align}
\E_{A_t\sim\mu_t}\left[\rhok_t q_{\pik, t}(S_t, A_t) \mid S_t = s\right] 
=\E_{A_t\sim\pik_t}\left[q_{\pik, t}(S_t, A_t) \mid S_t = s \right].
\end{align}
\end{lemma}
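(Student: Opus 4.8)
The plan is to mirror the proof of Lemma~\ref{lemma: stats unbiasedness} almost verbatim, with the scalar $q(a)$ replaced by the action-value $q_{\pik, t}(s, a)$ and the statistics coverage set replaced by the RL search space $\Lambda$. Concretely, I would fix $\mu \in \Lambda$ together with indices $k, t, s$, write the conditional expectation on the left as a finite sum over the action set, and restrict attention to actions with $\mu_t(a \mid s) > 0$, since under $\mu_t$ the conditional law of $A_t$ given $S_t = s$ is supported there.

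The first step is to substitute the definition $\rhok_t = \pik_t(A_t \mid S_t)/\mu_t(A_t \mid S_t)$ and cancel the weight $\mu_t(a \mid s)$ against the denominator of the ratio; this is legitimate because the cancellation happens only on the support $\qty{a \mid \mu_t(a \mid s) > 0}$. What remains is $\sum_{a : \mu_t(a \mid s) > 0} \pik_t(a \mid s) q_{\pik, t}(s, a)$, i.e.\ the desired target expectation but with the sum truncated to the behavior support.

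The key step --- and the only place where the enlargement from $\Lambda_-$ to $\Lambda$ matters --- is to argue that restoring the omitted actions changes nothing. For any $a$ with $\mu_t(a \mid s) = 0$, membership $\mu \in \Lambda$ gives $\pik_t(a \mid s) q_{\pik, t}(s, a) = 0$, so appending these terms leaves the value unchanged, and the truncated sum equals the full sum $\sum_a \pik_t(a \mid s) q_{\pik, t}(s, a) = \E_{A_t \sim \pik_t}\left[q_{\pik, t}(S_t, A_t) \mid S_t = s\right]$, which is exactly the right-hand side.

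I do not expect a genuine obstacle here: the argument is purely algebraic and is precisely the conditional-on-$S_t$ analog of Lemma~\ref{lemma: stats unbiasedness}. The one point deserving care is the justification that zero-behavior-probability actions contribute nothing, since it is here that the weaker coverage condition $\mu_t(a \mid s) = 0 \Rightarrow \pik_t(a \mid s) q_{\pik, t}(s, a) = 0$ is invoked in place of the classical $\mu_t(a \mid s) = 0 \Rightarrow \pik_t(a \mid s) = 0$; this relaxation is exactly what permits the behavior policy to ignore state-action pairs that make no contribution to the value estimate while still retaining unbiasedness.
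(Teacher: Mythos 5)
Your proposal matches the paper's proof essentially verbatim: expand the conditional expectation as a sum over $\qty{a \mid \mu_t(a \mid s) > 0}$, cancel $\mu_t(a \mid s)$ against the denominator of $\rhok_t$, and then restore the actions with $\mu_t(a \mid s) = 0$ using the defining property of $\Lambda$, namely $\mu_t(a \mid s) = 0 \Rightarrow \pik_t(a \mid s)\, q_{\pik, t}(s, a) = 0$. You also correctly identify the only nontrivial point --- that the enlargement from $\Lambda_-$ to $\Lambda$ is exactly what makes the restored terms vanish --- so the argument is complete and correct.
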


\begin{proof}
$\forall \mu \in \Lambda$, $\forall k, \forall t, \forall s$,
\begin{align}
&\E_{A_t\sim\mu_t}\left[\rhok_t q_{\pik, t}(S_t, A_t) \mid S_t = s\right] \\
=& \sum_{a \in \qty{a|\mu_t(a | s) > 0}} \mu_t(a | s) \frac{\pik_t(a | s)}{\mu_t(a | s)} q_{\pik, t}(s, a) \\
=& \sum_{a \in \qty{a|\mu_t(a | s) > 0}} \pik_t(a | s) q_{\pik, t}(s, a) \\
=& \sum_{a \in \qty{a|\mu_t(a | s) > 0}} \pik_t(a | s) q_{\pik, t}(s, a) + \sum_{a \in \qty{a | \mu_t(a | s) = 0}} \pik_t(a | s)q_{\pik, t}(s, a) \explain{$\mu \in \Lambda$} \\
=&\sum_a \pik_t(a | s)q_{\pik, t}(s, a) \\
=&\E_{A_t\sim\pik_t}\left[q_{\pik, t}(S_t, A_t) \mid S_t = s  \right].
\end{align} 

\end{proof}

Now, we are ready to prove 
Theorem~\ref{lemma: rl pdis unbaised}.
\begin{proof}
We proceed via induction.
$\forall k$,   for $t = T-1$,
we have
\begin{align}
&\E\left[\pdisgpik\qty(\tau^{\mu_{t:T-1}}_{t:T-1}) \mid S_t\right] \\
=& \E\left[\rhok_t r(S_t,A_t) \mid S_t\right] \\
=& \E\left[\rhok_t q_{\pik, t}(S_t, A_t) \mid S_t \right] \\
=& \E_{A_t \sim \pik_t(\cdot \mid S_t)}\left[q_{\pik, t}(S_t, A_t) \mid S_t\right] \explain{Lemma~\ref{lemma: stats unbiasedness stronger}} \\
=& v_{\pik, t}(S_t).
\end{align}
For $t \in [T-2]$,
we have
\begin{align}
&\E\left[\pdisgpik\qty(\tau^{\mu_{t:T-1}}_{t:T-1}) \mid S_t\right] \\
=& \E\left[\rhok_t R_{t+1} + \rhok_t\pdisgpik\qty(\tau^{\mu_{t+1:T-1}}_{t+1:T-1}) \mid S_t\right] \\
=& \E\left[\rhok_t R_{t+1} \mid S_t\right] + \E\left[\rhok_t\pdisgpik\qty(\tau^{\mu_{t+1:T-1}}_{t+1:T-1}) \mid S_t\right] \\
\explain{Law of total expectation}
=& \E\left[\rhok_t R_{t+1} \mid S_t\right] + \E_{A_t \sim \mu_t(\cdot \mid S_t), S_{t+1} \sim p(\cdot \mid S_t, A_t)}\left[ \E\left[\rhok_t\pdisgpik\qty(\tau^{\mu_{t+1:T-1}}_{t+1:T-1}) \mid S_t, A_t, S_{t+1}\right] \mid S_t \right] \\
\explain{Conditional independence and Markov property}
=& \E\left[\rhok_t R_{t+1} \mid S_t\right] + \E_{A_t \sim \mu_t(\cdot \mid S_t), S_{t+1} \sim p(\cdot \mid S_t, A_t)}\left[ \rhok_t \E\left[\pdisgpik\qty(\tau^{\mu_{t+1:T-1}}_{t+1:T-1}) \mid S_{t+1}\right] \mid S_t \right] \\
=& \E\left[\rhok_t R_{t+1} \mid S_t\right] + \E_{A_t \sim \mu_t(\cdot \mid S_t), S_{t+1} \sim p(\cdot \mid S_t, A_t)}\left[ \rhok_t v_{\pik, t+1}(S_{t+1}) \mid S_t \right]
\explain{Inductive hypothesis} \\
=& \E_{A_t \sim \mu_t(\cdot \mid S_t)}\left[\rhok_t q_{\pik, t}(S_t, A_t) \mid S_t\right] \explain{Definition of $q_{\pik, t}$} \\
=& \E_{A_t \sim \pik_t(\cdot \mid S_t)}\left[q_{\pik, t}(S_t, A_t) \mid S_t\right] \explain{Lemma~\ref{lemma: stats unbiasedness stronger}} \\
=& v_{\pik, t}(S_t).
\end{align}
This completes the proof.
\end{proof}

\subsection{Proof of Theorem \ref{lemma: rl-optimal}}\label{append:rl-optima}
To prove Theorem~\ref{lemma: rl-optimal},
we rely on a recursive expression of the PDIS Monte Carlo estimator, which is restated from
\citet{liu2024efficient}, as summarized by the following lemma.
\begin{lemma}[Recursive Expression of Variance] \label{lemma: recursive-var}
For any $\mu \in \Lambda$, $\forall k$, we have for $t = T-1$,
  \begin{align}
    \V\left(\pdisgpik\qty(\tau^{\mu_{t:T-1}}_{t:T-1})\mid S_t\right) = \E_{A_t \sim \mu_t}\left[{\rhok_t}^2 q_{\pik, t}(S_t, A_t)^2 \mid S_t\right] - { v_{\pik, t}(S_t)}^2;
  \end{align}
For $t \in [T-2]$,
\begin{align}
&\V\left(\pdisgpik\qty(\tau^{\mu_{t:T-1}}_{t:T-1})\mid S_t\right) \\
=& \E_{A_t\sim \mu_t}\left[{\rhok_t}^2 \left(\E_{S_{t+1}}\left[\V\left(\pdisgpik\qty(\tau^{\mu_{t+1:T-1}}_{t+1:T-1})\mid S_t\right) \mid S_t, A_t\right] + \nu_{\pik,t}(S_t, A_t) + q_{\pik, t}(S_t, A_t)^2\right) \mid S_t\right] \\
&- v_{\pik, t}(S_t)^2.
\end{align}
\end{lemma}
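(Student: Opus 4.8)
The plan is to prove both cases by expanding the conditional variance as $\V(X\mid S_t)=\E[X^2\mid S_t]-\E[X\mid S_t]^2$ and using the unbiasedness result (Theorem~\ref{lemma: rl pdis unbaised}) to identify the mean term with $v_{\pik,t}(S_t)$; the recursive case additionally relies on the recursive form of the PDIS estimator in~\eqref{eq: PDIS-recursive} together with repeated conditioning and the Markov property.

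First, for the base case $t=T-1$, I would note that $\pdisgpik(\tau^{\mu_{T-1:T-1}}_{T-1:T-1})=\rhok_{T-1}R_T$ and that $R_T=r(S_{T-1},A_{T-1})=q_{\pik,T-1}(S_{T-1},A_{T-1})$, since at the final step the action value coincides with the deterministic immediate reward. Expanding $\V(\rhok_{T-1}R_T\mid S_{T-1})=\E[(\rhok_{T-1}R_T)^2\mid S_{T-1}]-\E[\rhok_{T-1}R_T\mid S_{T-1}]^2$, substituting $R_T$ by $q_{\pik,T-1}(S_{T-1},A_{T-1})$ in the second-moment term, and applying Theorem~\ref{lemma: rl pdis unbaised} to the mean term yields the claimed base expression immediately.

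For the recursive case $t\in[T-2]$, I would set $X\doteq\pdisgpik(\tau^{\mu_{t:T-1}}_{t:T-1})$ and $Y\doteq\pdisgpik(\tau^{\mu_{t+1:T-1}}_{t+1:T-1})$, so that $X=\rhok_t(R_{t+1}+Y)$ by~\eqref{eq: PDIS-recursive}. Theorem~\ref{lemma: rl pdis unbaised} gives $\E[X\mid S_t]=v_{\pik,t}(S_t)$, which supplies the $-v_{\pik,t}(S_t)^2$ term, so it remains to compute $\E[X^2\mid S_t]$. Since $\rhok_t$ is a function of $(S_t,A_t)$ only, I would factor $\rhok_t^2$ out of an inner conditional expectation and condition successively on $(S_t,A_t)$ and then on $S_{t+1}$. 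Using the Markov property and conditional independence (the future segment depends on $A_t$ only through $S_{t+1}$, so $\E[Y\mid S_t,A_t,S_{t+1}]=v_{\pik,t+1}(S_{t+1})$ and $\E[Y^2\mid S_t,A_t,S_{t+1}]=\V(Y\mid S_{t+1})+v_{\pik,t+1}(S_{t+1})^2$), I would complete the square to obtain $\E[(R_{t+1}+Y)^2\mid S_t,A_t,S_{t+1}]=(R_{t+1}+v_{\pik,t+1}(S_{t+1}))^2+\V(Y\mid S_{t+1})$. Taking the expectation over $S_{t+1}\sim p(\cdot\mid S_t,A_t)$ and observing that $R_{t+1}+v_{\pik,t+1}(S_{t+1})$ has conditional mean $q_{\pik,t}(S_t,A_t)$ and conditional variance $\nu_{\pik,t}(S_t,A_t)$ (because $R_{t+1}$ is deterministic given $(S_t,A_t)$), its conditional second moment equals $q_{\pik,t}(S_t,A_t)^2+\nu_{\pik,t}(S_t,A_t)$. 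Reassembling these pieces and subtracting $v_{\pik,t}(S_t)^2$ reproduces the claimed recursion, where the inner conditional variance is taken given $S_{t+1}$.

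The main obstacle I anticipate is the careful bookkeeping of the nested conditioning: one must justify, via the Markov property and the conditional independence of the future segment from $A_t$ given $S_{t+1}$, that the conditional moments of $Y$ given $(S_t,A_t,S_{t+1})$ collapse to moments given $S_{t+1}$ alone, and one must track which quantities ($R_{t+1}$ and $\rhok_t$) are deterministic under which conditioning. Once the conditioning is set up correctly, the completing-the-square and Bellman-identity steps are routine algebra.
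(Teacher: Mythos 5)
Your proposal is correct, and it reaches the lemma through a slightly different organization of the same ingredients. The paper applies the law of total variance twice --- first conditioning on $A_t$, which splits $\V\left(\pdisgpik\qty(\tau^{\mu_{t:T-1}}_{t:T-1})\mid S_t\right)$ into $\E_{A_t}\left[{\rhok_t}^2\V\qty(\pdisgpik\qty(\tau^{\mu_{t+1:T-1}}_{t+1:T-1})\mid S_t,A_t)\mid S_t\right]$ plus $\V_{A_t}\qty(\rhok_t q_{\pik,t}(S_t,A_t)\mid S_t)$, and then again over $S_{t+1}$ to produce $\nu_{\pik,t}$ and the inner variance term --- and it disposes of the mean term via a separate one-step unbiasedness lemma (the ``stronger'' statistics lemma, Lemma~\ref{lemma: stats unbiasedness stronger}). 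You instead expand the top-level variance once into raw moments, handle $\E\left[X\mid S_t\right]=v_{\pik,t}(S_t)$ by a single invocation of Theorem~\ref{lemma: rl pdis unbaised}, and recover the recursion by computing $\E\left[X^2\mid S_t\right]$ through nested conditioning and completing the square, with $\nu_{\pik,t}$ emerging as the conditional variance of $R_{t+1}+v_{\pik,t+1}(S_{t+1})$ given $(S_t,A_t)$. Both routes hinge on the identical core facts --- the PDIS recursion~\eqref{eq: PDIS-recursive}, the Markov/conditional-independence collapse $\E\qty[Y\mid S_t,A_t,S_{t+1}]=\E\qty[Y\mid S_{t+1}]$, unbiasedness for $\mu\in\Lambda$, and determinism of $r(S_t,A_t)$ --- so neither is more general, but your version needs less auxiliary machinery, while the paper's double total-variance decomposition makes the provenance of each variance component ($\nu_{\pik,t}$ as between-successor-state variance, the $\V_{A_t}$ term as between-action variance) conceptually explicit, which it later reuses when analyzing $\Delta^{(k)}_t$. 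One further point in your favor: you correctly read the inner conditional variance in the lemma statement as conditioned on $S_{t+1}$ rather than the $S_t$ printed there, which is a typo in the paper --- its own proof (equation~\eqref{eq: tmp3}) conditions on $S_{t+1}$, exactly as you do.
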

  \begin{proof}
  When $t\in [T-2]$, we have
  \begin{align}
    \label{eq: tmp4}
  &\V\left(\pdisgpik\qty(\tau^{\mu_{t:T-1}}_{t:T-1})\mid S_t\right) \\
  =& \E_{A_t}\left[ \V\left(\pdisgpik\qty(\tau^{\mu_{t:T-1}}_{t:T-1})\mid S_t, A_t\right)\mid S_t\right] + \V_{A_t}\left(\E\left[\pdisgpik\qty(\tau^{\mu_{t:T-1}}_{t:T-1}) \mid S_t, A_t\right]\mid S_t\right) 
  \explain{Law of total variance} \\
  =& \E_{A_t}\left[ {\rhok_t}^2 \V\left(r(S_t,A_t) + \pdisgpik\qty(\tau^{\mu_{t+1:T-1}}_{t+1:T-1}) \mid S_t, A_t\right)\mid S_t\right] \\
  &+ \V_{A_t}\left(\rhok_t \E\left[r(S_t,A_t) + \pdisgpik\qty(\tau^{\mu_{t+1:T-1}}_{t+1:T-1}) \mid S_t, A_t\right]\mid S_t\right)  
  \explain{Using \eqref{eq: PDIS-recursive}}
  \\
  =& \E_{A_t}\left[ {\rhok_t}^2 \V\left(\pdisgpik\qty(\tau^{\mu_{t+1:T-1}}_{t+1:T-1}) \mid S_t, A_t\right)\mid S_t\right] + \V_{A_t}\left(\rhok_t q_{\pik, t}(S_t, A_t)\mid S_t\right) \explain{Deterministic reward $r$}.
  \end{align}
  Further decomposing the first term, we have
  \begin{align}
    \label{eq: tmp3}
  &\V\left(\pdisgpik\qty(\tau^{\mu_{t+1:T-1}}_{t+1:T-1}) \mid S_t, A_t\right) \\
  =& \E_{S_{t+1}}\left[\V\left(\pdisgpik\qty(\tau^{\mu_{t+1:T-1}}_{t+1:T-1}) \mid S_t, A_t, S_{t+1}\right) \mid S_t, A_t\right] \\
  &+ \V_{S_{t+1}}\left(\E\left[\pdisgpik\qty(\tau^{\mu_{t+1:T-1}}_{t+1:T-1}) \mid S_t, A_t, S_{t+1}\right]\mid S_t, A_t\right) 
  \explain{Law of total variance}
  \\
  =& \E_{S_{t+1}}\left[\V\left(\pdisgpik\qty(\tau^{\mu_{t+1:T-1}}_{t+1:T-1}) \mid S_{t+1}\right) \mid S_t, A_t\right] + \V_{S_{t+1}}\left(\E\left[\pdisgpik\qty(\tau^{\mu_{t+1:T-1}}_{t+1:T-1}) \mid S_{t+1}\right]\mid S_t, A_t\right) \explain{Markov property} \\
  =& \E_{S_{t+1}}\left[\V\left(\pdisgpik\qty(\tau^{\mu_{t+1:T-1}}_{t+1:T-1}) \mid S_{t+1}\right) \mid S_t, A_t\right] + \V_{S_{t+1}}\left(v_{\pik, t+1}(S_{t+1})\mid S_t, A_t\right). \explain{Theorem~\ref{lemma: rl pdis unbaised}}
  \end{align}
  With $\nu_{\pik, t}$ defined in~\eqref{def:nu},
  plugging~\eqref{eq: tmp3} back to~\eqref{eq: tmp4} yields
  \begin{align}
    &\V\left(\pdisgpik\qty(\tau^{\mu_{t:T-1}}_{t:T-1})\mid S_t\right) \\
    =&\E_{A_t}\left[{\rhok_t}^2 \left(\E_{S_{t+1}}\left[\V\left(\pdisgpik\qty(\tau^{\mu_{t+1:T-1}}_{t+1:T-1}) \mid S_{t+1}\right) \mid S_t, A_t\right] + \nu_t(S_t, A_t)\right) \mid S_t\right] \\
    &+ \V_{A_t}\left(\rhok_t q_{\pik, t}(S_t, A_t)\mid S_t\right) \\
    =&\E_{A_t}\left[{\rhok_t}^2 \left(\E_{S_{t+1}}\left[\V\left(\pdisgpik\qty(\tau^{\mu_{t+1:T-1}}_{t+1:T-1}) \mid S_{t+1}\right) \mid S_t, A_t\right] + \nu_t(S_t, A_t)\right) \mid S_t\right] \\
    &+ \E_{A_t}\left[{\rhok_t}^2 q_{\pik, t}(S_t, A_t)^2\mid S_t\right] - \left(\E_{A_t}\left[\rhok_t q_{\pik, t}(S_t, A_t) \mid S_t\right]\right)^2 \\
    =&\E_{A_t}\left[{\rhok_t}^2 \left(\E_{S_{t+1}}\left[\V\left(\pdisgpik\qty(\tau^{\mu_{t+1:T-1}}_{t+1:T-1}) \mid S_{t+1}\right) \mid S_t, A_t\right] + \nu_t(S_t, A_t)\right) \mid S_t\right] \\
    &+ \E_{A_t}\left[{\rhok_t}^2 q_{\pik, t}(S_t, A_t)^2\mid S_t\right] - { v_{\pik, t}(S_t)}^2. \explain{Lemma~\ref{lemma: stats unbiasedness stronger}}
  \end{align}
  When $t =  T-1$, we have
  \begin{align}
  \V\left(\pdisgpik\qty(\tau^{\mu_{t:T-1}}_{t:T-1})\mid S_t\right) =& \V\left(\rhok_t r(S_t, A_t)\mid S_t\right) \\
  =& \V\left(\rhok_t q_{\pik, t}(S_t, A_t)\mid S_t\right) \\
  =&  \E_{A_t}\left[{\rhok_t}^2 q_{\pik, t}(S_t, A_t)^2 \mid S_t\right] - { v_{\pik, t}(S_t)}^2,
  \end{align}
  which completes the proof.
  \end{proof}
Then, to solve the variance minimization problem, we manipulate the variance expression in \eqref{eq: one step optimization}. For any policy $k$, for any $\mu \in\hat \Lambda$, when $t=T-1$,
\begin{align}
\label{eq: locally optimal target t-1}
 &\sum_{k=1}^K \V\left(\pdisgpik\qty( \tau^\qty{\mu_t,\pik_{t+1},\dots, \pik_{T-1}}_{t:T-1})\mid S_t = s\right)  \\
 =&\sum_{k=1}^K  \E_{A_t \sim \mu_t}\left[{\rhok_t}^2 q_{\pik, t}(S_t, A_t)^2 \mid S_t\right] - { v_{\pik, t}(S_t)}^2\explain{Lemma~\ref{lemma: recursive-var}}\\
=& \sum_{k=1}^K\E_{A_t\sim\mu_t}\left[{\rhok_t}^2 \hat q_{\pik, t}(S_t, A_t) \mid S_t\right] -v_{\pik,t}(S_t)^2\explain{By \eqref{eq: def q hat}}\\
=& \sum_{k=1}^K\V_{A_t\sim\mu_t}\left(\rhok_t \sqrt{\hat q_{\pik, t}(S_t, A_t)} \mid S_t\right) - \sum_{k=1}^K \E_{A_t\sim\mu_t}\left[\rhok_t \sqrt{\hat q_{\pik, t}(S_t, A_t)} \mid S_t\right] ^2\\
&-v_{\pik,t}(S_t)^2\\
=& \sum_{k=1}^K\V_{A_t\sim\mu_t}\left(\rhok_t \sqrt{\hat q_{\pik, t}(S_t, A_t)} \mid S_t\right) - \sum_{k=1}^K \E_{A_t\sim\pik_t}\left[\sqrt{\hat q_{\pik, t}(S_t, A_t)} \mid S_t\right] ^2\\
&-v_{\pik,t}(S_t)^2.\explain{Lemma~\ref{lemma: stats unbiasedness stronger} and $\mu_t \in \hat \Lambda\subseteq \Lambda$}
\end{align}
For $t \in [T-2]$,
\begin{align}
\label{eq: locally optimal target t-2}
 &\sum_{k=1}^K \V\left(\pdisgpik\qty( \tau^\qty{\mu_t,\pik_{t+1},\dots, \pik_{T-1}}_{t:T-1})\mid S_t = s\right)  \\
 =&\sum_{k=1}^K \E_{A_t\sim \mu_t}\left[{\rhok_t}^2 \left(\E_{S_{t+1}}\left[ \V\left(\pdisgpik\qty(\tau^{{\pik}_{t+1:T-1}}_{t+1:T-1}) \mid S_{t+1}\right) \mid S_t, A_t\right]\right.\right.\\
&\left.\left.+ \nu_{\pik,t}(S_t, A_t) + q_{\pik, t}(S_t, A_t)^2 \right) \mid S_t\right]-v_{\pik,t}(S_t)^2\explain{Lemma~\ref{lemma: recursive-var}}\\
=& \sum_{k=1}^K\E_{A_t\sim\mu_t}\left[{\rhok_t}^2 \hat q_{\pik, t}(S_t, A_t) \mid S_t\right] -v_{\pik,t}(S_t)^2\explain{By \eqref{eq: def q hat}}\\
=& \sum_{k=1}^K\V_{A_t\sim\mu_t}\left(\rhok_t \sqrt{\hat q_{\pik, t}(S_t, A_t)} \mid S_t\right) - \sum_{k=1}^K \E_{A_t\sim\mu_t}\left[\rhok_t \sqrt{\hat q_{\pik, t}(S_t, A_t)} \mid S_t\right] ^2\\
&-v_{\pik,t}(S_t)^2\\
=& \sum_{k=1}^K\V_{A_t\sim\mu_t}\left(\rhok_t \sqrt{\hat q_{\pik, t}(S_t, A_t)} \mid S_t\right) - \sum_{k=1}^K \E_{A_t\sim\pik_t}\left[\sqrt{\hat q_{\pik, t}(S_t, A_t)} \mid S_t\right] ^2\\
&-v_{\pik,t}(S_t)^2.\explain{Lemma~\ref{lemma: stats unbiasedness stronger} and $\mu_t \in \hat \Lambda\subseteq \Lambda$}
\end{align}
Since for both \eqref{eq: locally optimal target t-1} and \eqref{eq: locally optimal target t-2}, the second and third terms are unrelated to $\hat\mu$, solving \eqref{eq: one step optimization} is equivalent to solve
\begin{align}
\label{eq: solve-hat-q}
 \min_{\mu_t \in \hat \Lambda} \quad \sum_{k=1}^K\V_{A_t\sim\mu_t}\left(\rhok_t \sqrt{\hat q_{\pik, t}(S_t, A_t)} \mid S_t\right).
\end{align}
Then, with Lemma~\ref{lemma: math optimal}, we conclude that $\hat \mu_t$ as defined in \eqref{def hat mu} is an optimal solution to \eqref{eq: solve-hat-q}, which completes the proof.
\subsection{Proof of Theorem \ref{theorem: better than each average rl}}
\label{append: better than each average rl}

Before proving Theorem~\ref{theorem: better than each average rl}, given the sufficient condition in \eqref{eq: eta sufficient average rl}, we first observe the following equation.
\begin{align}
\label{eq: sufficient delta manipulate}
 &\E_{A_t\sim \hat\mu_t}\left[{\rho^{\pik, \hat\mu}}^2 \nu_{\pik,t}(S_t, A_t) \mid S_t\right]  + \V_{A_t \sim \hat\mu_t}\qty({\rho^{\pik, \hat\mu}} q_{\pik, t}(S_t, A_t)\mid S_t)\\
 =&\E_{A_t\sim \hat\mu_t}\left[{\rho^{\pik, \hat\mu}}^2 \nu_{\pik,t}(S_t, A_t) \mid S_t\right] +\E_{A_t\sim \hat\mu_t}\left[{\rho^{\pik, \hat\mu}}^2 q_{\pik, t}(S_t, A_t)^2 \mid S_t\right] \\
 &- \E_{A_t\sim \hat\mu_t}\left[{\rho^{\pik, \hat\mu}} q_{\pik, t}(S_t, A_t) \mid S_t\right]^2\\
 =&\E_{A_t\sim \hat\mu_t}\left[{\rho^{\pik, \hat\mu}}^2 \nu_{\pik,t}(S_t, A_t) \mid S_t\right]
 +\E_{A_t\sim \hat\mu_t}\left[{\rho^{\pik, \hat\mu}}^2 q_{\pik, t}(S_t, A_t)^2 \mid S_t\right]-v_{\pik, t}(S_t)^2.\explain{Theorem~\ref{lemma: rl pdis unbaised}}
\end{align}
Thus, we obtain $\forall t,s$,
\begin{align}
\label{eq: sufficient delta used}
&\sqrt{\frac{\overline{\eta}_t}{\underline{\eta}_t}} \qty(\sum_{a} \pik_t(a|s)\sqrt{\hat q_{\pik,t}(a|s)})^2 \\
&- \frac{n - n_k}{n} \qty( \E_{A_t\sim \hat\mu_t}\left[{\rho^{\pik, \hat\mu}}^2 \left(\nu_{\pik,t}(s, A_t) + q_{\pik, t}(s, A_t)^2\right) \mid s\right] -{ v_{\pik, t}(s)}^2 )\\ 
=&\sqrt{\frac{\overline{\eta}_t}{\underline{\eta}_t}} \qty(\sum_{a} \pik_t(a|s)\sqrt{\hat q_{\pik,t}(a|s)})^2 \\
&- \frac{n - n_k}{n} \qty( \E_{A_t\sim \hat\mu_t}\left[{\rho^{\pik, \hat\mu}}^2 \nu_{\pik,t}(S_t, A_t) \mid S_t=s\right]  + \V_{A_t \sim \hat\mu_t}\qty({\rho^{\pik, \hat\mu}} q_{\pik, t}(S_t, A_t))\mid S_t=s) \explain{By \eqref{eq: sufficient delta manipulate}}\\
\leq&  \E_{A_t \sim \pik_t}\left[\hat q_{\pik, t}(S_t, A_t) \mid S_t=s\right]. \explain{By \eqref{eq: eta sufficient average rl}}
\end{align}

We also discover the following inequality. $\forall k$, $\forall s$, $\forall t$,
\begin{align}
 &\frac{1}{n} \qty[\sqrt{\frac{\overline{\eta_t}}{\underline{\eta_t}}} \qty(\sum_{a} \pik_t(a|s) \sqrt{\hat q_{\pik,t}(a|s)})^2 - v_{\pik, t}(s)^2]\\
 =&\frac{1}{n} \left[\sqrt{\frac{\overline{\eta_t}}{\underline{\eta_t}}} \qty(\sum_{a} \pik_t(a|s) \sqrt{\hat q_{\pik,t}(a|s)})^2-\qty(\frac{n}{n_k}-1)\qty( \sum_{a} \pik_t(a|s) \hat q_{\pik,t}(s,a) -v_{\pik, t}(s)^2)
 \right.\\
 &\left.+\qty(\frac{n}{n_k}-1)\qty( \sum_{a} \pik_t(a|s) \hat q_{\pik,t}(s,a) -v_{\pik, t}(s)) - v_{\pik, t}(s)^2\right]\\
\leq &\frac{1}{n} \left[\sum_{a} \pik_t(a|s) \hat q_{\pik,t}(s,a)+\qty(\frac{n}{n_k}-1)\qty( \sum_{a} \pik_t(a|s) \hat q_{\pik,t}(s,a) -v_{\pik, t}(s)^2) - v_{\pik, t}(s)^2\right]\explain{By \eqref{eq: sufficient delta used}}\\
=&\frac{1}{n} \left[\frac{n}{n_k}   \sum_{a} \pik_t(a|s) \hat q_{\pik,t}(s,a) - \qty(\frac{n}{n_k} -1){ v_{\pik, t}(s)}^2  - v_{\pik, t}(s)^2\right]\\
 =&\frac{1}{n} \left[\frac{n}{n_k}   \sum_{a} \pik_t(a|s) \hat q_{\pik,t}(s,a)- \frac{n}{n_k}{ v_{\pik, t}(s)}^2\right]\\
=& \frac{1}{n_k} \qty[ \sum_{a} \pik_t(a|s) \hat q_{\pik,t}(s,a) - { v_{\pik, t}(s)}^2]  \\
=&\frac{1}{n_k }\qty(\E_{A_t \sim \pik_t}\left[\hat q_{\pik, t}(S_t, A_t) \mid S_t=s\right] - { v_{\pik, t}(s)}^2).\label{eq: average sufficient used form rl}
\end{align}

Next, to prove Theorem \ref{theorem: better than each rl}, we present a closed-form representation of the variance of the on-policy estimator.
\begin{lemma}
\label{lemma: recursive variance  
onpolicy rl}
For any $k$, $t$ and $s$,
\begin{align}
&\V\left(\pdisgpik\qty(\tau^{\pik_{t:T-1}}_{t:T-1})\mid S_t=s \right) =\E_{A_t \sim \pik_t}\left[\hat q_{\pik, t}(S_t, A_t) \mid S_t=s\right] - { v_{\pik, t}(s)}^2.
\end{align}
\end{lemma}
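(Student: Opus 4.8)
The plan is to derive the identity directly from the recursive variance expression in Lemma~\ref{lemma: recursive-var} by specializing the behavior policy to the on-policy case $\mu = \pik$. The decisive observation is that under $\mu = \pik$ every importance sampling ratio collapses, since $\rhok_t = \pik_t(A_t\mid S_t)/\pik_t(A_t\mid S_t) = 1$ for all $t$. First I would substitute $\rhok_t \equiv 1$ throughout the two cases of Lemma~\ref{lemma: recursive-var} and then recognize the surviving terms as exactly the summands appearing in the definition of $\hat q_{\pik, t}$ in~\eqref{eq: def q hat}.

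For the base case $t = T-1$, setting the ratio to $1$ in Lemma~\ref{lemma: recursive-var} gives $\V\left(\pdisgpik\qty(\tau^{\pik_{t:T-1}}_{t:T-1})\mid S_t\right) = \E_{A_t \sim \pik_t}\left[q_{\pik,t}(S_t, A_t)^2 \mid S_t\right] - v_{\pik, t}(S_t)^2$, and since $\hat q_{\pik, T-1}(s,a) = q_{\pik, T-1}(s,a)^2$ by definition, this already matches the claim. For $t \in [T-2]$, the same substitution yields
\begin{align*}
&\V\left(\pdisgpik\qty(\tau^{\pik_{t:T-1}}_{t:T-1})\mid S_t\right) \\
=& \E_{A_t\sim \pik_t}\left[\E_{S_{t+1}}\left[\V\left(\pdisgpik\qty(\tau^{\pik_{t+1:T-1}}_{t+1:T-1}) \mid S_{t+1}\right) \mid S_t, A_t\right] + \nu_{\pik,t}(S_t, A_t) + q_{\pik, t}(S_t, A_t)^2 \mid S_t\right] - v_{\pik, t}(S_t)^2.
\end{align*}
The crux is then to identify the bracketed quantity inside the outer expectation with $\hat q_{\pik, t}(S_t, A_t)$: rewriting the inner conditional expectation over $S_{t+1}$ as $\sum_{s'} p(s'\mid S_t, A_t)\,\V\left(\pdisgpik\qty(\tau^{\pik_{t+1:T-1}}_{t+1:T-1}) \mid S_{t+1} = s'\right)$ makes it coincide with the third summand of~\eqref{eq: def q hat}, while $\nu_{\pik,t}$ and $q_{\pik,t}^2$ supply the remaining two. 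Substituting this equality back into the outer expectation gives the claimed identity.

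The argument presents no genuine obstacle, as it is essentially a substitution into an already-established recursion rather than a fresh induction. The only point requiring care is to confirm that the future-step variance appearing in Lemma~\ref{lemma: recursive-var} (whose tail still carries the generic behavior policy $\mu$) indeed coincides with the on-policy tail variance used to define $\hat q$; this is immediate once $\mu = \pik$, because both tails are then governed by $\pik$. Consequently nothing beyond the structure that Lemma~\ref{lemma: recursive-var} already encodes is needed to close the proof.
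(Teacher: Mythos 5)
Your proposal is correct and is essentially the paper's own proof: both specialize Lemma~\ref{lemma: recursive-var} to $\mu = \pik$ (so every ratio $\rhok_t$ collapses to $1$) and then recognize the surviving bracket $\E_{S_{t+1}}\left[\V\left(\pdisgpik\qty(\tau^{\pik_{t+1:T-1}}_{t+1:T-1})\mid S_{t+1}\right)\mid S_t, A_t\right] + \nu_{\pik,t}(S_t,A_t) + q_{\pik,t}(S_t,A_t)^2$ as exactly $\hat q_{\pik,t}(S_t,A_t)$ via \eqref{eq: def q hat}, in both the $t = T-1$ and $t \in [T-2]$ cases. Your observation that no genuine induction is required is also apt: the paper frames its argument as an induction, but its displayed steps never actually invoke an inductive hypothesis, precisely because the definition \eqref{eq: def q hat} already embeds the on-policy tail variance that Lemma~\ref{lemma: recursive-var} produces under $\mu = \pik$.
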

\begin{proof}
We proceed via induction. When $t=T-1$, 
\begin{align}
&\V\left(\pdisgpik\qty(\tau^{\pik_{t:T-1}}_{t:T-1})\mid S_t \right)\\
=&\E_{A_t \sim \pik_t}\left[q_{\pik, t}(S_t, A_t)^2 \mid S_t\right] - { v_{\pik, t}(S_t)}^2
\explain{Lemma~\ref{lemma: recursive-var} and on-policy}\\
=& \E_{A_t \sim \pik_t}\left[\hat q_{\pik, t}(S_t, A_t) \mid S_t\right] - { v_{\pik, t}(S_t)}^2.\explain{By \eqref{eq: def q hat}}
\end{align}
When $t\in[T-2]$, 
\begin{align}
&\V\left(\pdisgpik\qty(\tau^{\pi_{t:T-1}}_{t:T-1})\mid S_t\right) \\
    =& \E_{A_t\sim \pi_t}\left[\left(\E_{S_{t+1}}\left[\V\left(\pdisgpik\qty(\tau^{\pi_{t+1:T-1}}_{t+1:T-1})\mid S_t\right) \mid S_t, A_t\right] + \nu_{\pik,t}(S_t, A_t) + q_{\pik, t}(S_t, A_t)^2\right) \mid S_t\right] \\
    &-{ v_{\pik, t}(S_t)}^2 \explain{Lemma~\ref{lemma: recursive-var} and on-policy}\\ 
    =& \E_{A_t \sim \pik_t}\left[\hat q_{\pik, t}(S_t, A_t) \mid S_t\right] - { v_{\pik, t}(S_t)}^2,\explain{By \eqref{eq: def q hat}}
\end{align}
which completes the induction.
\end{proof}

Additionally, we discover the following inequality. $\forall k$, $\forall s$, $\forall t$,

\begin{align}
&\E_{A_t \sim \mu_t}\left[{\rhok_t}^2 
 \hat q_{\pik, t}(S_t, A_t) \mid S_t=s\right] \\
 =&\E_{A_t \sim \hat \mu_t}\left[\frac{\wk_t(S_t, A_t)}{\hat \mu_t(A_t|S_t)^2} \mid S_t=s\right] \explain{By \eqref{def: w rl}}\\
 =&\sum_a \wk_t(s,a)\frac{1}{\hat \mu_t(a|s)}\\
 =& \sum_{a} {\wk_t}(s,a) \qty(\frac{\sum_{b} \sqrt{\sum_{j \in [K]}{\wj_t}(s,b)}} {\sqrt{\sum_{j \in [K]}{\wj_t}(s,a)}})   \explain{By \eqref{def hat mu}}\\
 =& \sum_{a} {\wk_t}(s,a) \qty(\frac{\sum_{b} \sqrt{K\barw_t(s,b)}} {\sqrt{K\barw_t(s,a)}})  \explain{by \eqref{def: barw rl}}\\
 =& \sum_{a} {\wk_t}(s,a) \qty(\frac{\sum_{b} \sqrt{\barw_t(s,b)}} {\sqrt{\barw_t(s,a)}}) \\
=& \sum_{a} {\wk_t}(s,a) \qty(\frac{\sum_{b} \sqrt{\frac{\wk_t(s,b)}{\etak_t(s,b)}}} {\sqrt{\frac{\wk_t(s,a)}{\etak_t(s,a)}}}) \explain{By \eqref{def: et rl}} \\   
\leq& \sum_{a}{\wk_t}(s,a) \qty(\frac{\sum_{b} \sqrt{\frac{\wk_t(s,b)}{\underline{\eta}_t}}} {\sqrt{\frac{\wk_t(s,a)}{\overline{\eta}_t}}})\explain{By \eqref{eq: eta inequality rl}} \\   
=& \sum_{a} {\wk_t}(s, a) \qty(\frac{\sqrt{\frac{1}{\underline{\eta}_t}}\sum_{b} \sqrt{\wk_t(s,b)}} {\sqrt{\frac{1}{\overline{\eta}_t}}\sqrt{\wk_t(s,a)}})\\  
=& \sum_{a} {\wk_t}(s,a) \qty(\frac{\sum_{b} \sqrt{\overline{\eta}_t \wk_t(s,b)}} {\sqrt{\underline{\eta}_t \wk_t(s,a)}}) \\
=& \sqrt{\frac{\overline{\eta}_t}{\underline{\eta}_t}}\sum_{a} {\wk_t}(s,a) \qty(\frac{\sum_{b} \sqrt{\wk_t(s,b)}} {\sqrt{\wk_t(s,a)}})    \\   
=& \sqrt{\frac{\overline{\eta}_t}{\underline{\eta}_t}} \qty(\sum_{a} \sqrt{{\wk_t}(s,a)}) \qty(\sum_{b} \sqrt{\wk_t(s,b)} )    \\
=& \sqrt{\frac{\overline{\eta}_t}{\underline{\eta}_t}} \qty(\sum_{a} \sqrt{{\wk_t}(s,a)})^2   \\
=&\explaind{\sqrt{\frac{\overline{\eta}_t}{\underline{\eta}_t}} \qty(\sum_{a} \pik_t(a|s)\sqrt{\hat q_{\pik,t}(s,a)})^2  .}{By \eqref{def: w rl}}
\label{eq: variance upper bound of mu * rl}
\end{align}

From here, we restate Theorem~\ref{theorem: better than each average rl} and give its proof.
\reOOtheoremOObetterOOthanOOeachOOaverageOOrl*

First, we manipulate the variance of both $E^{\text{off},\pik}_{0:T-1}$ and $E^{\text{on,}\pik}_{0:T-1}$.
\begin{align}
\label{eq: variance of E off}
&\V\left(E^{\text{off},\pik}_{0:T-1} \right) \\
=&\V\left(\frac{\sum_{i=1}^n  \pdisgpik\qty(\tau^{[\hat\mu_{0:T-1},i]}_{0:T-1})}{n}  \right) \explain{By \eqref{def: Eoff rl}} \\
=& \textstyle\frac{1}{n^2} \V\left(\sum_{i=1}^n  \pdisgpik\qty(\tau^{[\hat\mu_{0:T-1},i]}_{0:T-1})   \right)     \\
=& \frac{1}{n}\V\qty(\pdisgpik\qty(\tau^{\hat\mu_{0:T-1}}_{0:T-1})) \explain{Independence of $\tau^{[\hat\mu_{0:T-1},i]}_{0:T-1}$}  \\
=&\frac{1}{n}\E_{S_0}\qty[\V\qty(\pdisgpik\qty(\tau^{\hat\mu_{0:T-1}}_{0:T-1}) \mid S_0 = s) ] + \frac{1}{n}\V_{S_0}\qty(\E\qty[\pdisgpik\qty(\tau^{\hat\mu_{0:T-1}}_{0:T-1}) \mid S_0 = s] )  
\explain{Law of total variance} \\
=&\frac{1}{n}\E_{S_0}\qty[\V\qty(\pdisgpik\qty(\tau^{\hat\mu_{0:T-1}}_{0:T-1}) \mid S_0 = s) ] + \frac{1}{n}\V_{S_0}\qty(v_{\pi,0}(S_0) ).  \explain{Theorem~\ref{lemma: rl pdis unbaised}}
\end{align}
Similarly, we have
\begin{align}
\label{eq: variance of E on}
&\V\left(E^{\text{on},\pik}_{0:T-1} \right) \\
=&\V\qty(\frac{ \sum_{i=1}^{n_k}\pdisgpik\qty(\tau^{[\pik_{0:T-1},i]}_{0:T-1})}{n_k}) \explain{By \eqref{def: Eon rl}}\\
=& \frac{1}{{n_k}^2 }\V\qty( \textstyle\sum_{i=1}^{n_k}\pdisgpik\qty(\tau^{[\pik_{0:T-1},i]}_{0:T-1}) )\\
=& \frac{1}{n_k}\V\qty(\pdisgpik\qty(\tau^{\pik_{0:T-1}}_{0:T-1})) \explain{Independence of $\tau^{[\pik_{0:T-1},i]}_{0:T-1}$}\\
=&\frac{1}{n_k}\E_{S_0}\qty[\V\qty(\pdisgpik\qty(\tau^{\pik_{0:T-1}}_{0:T-1}) \mid S_0 = s) ] + \frac{1}{n_k}\V_{S_0}\qty(\E\qty[\pdisgpik\qty(\tau^{\pik_{0:T-1}}_{0:T-1}) \mid S_0 = s] ) 
\explain{Law of total variance} \\
=&\frac{1}{n_k}\E_{S_0}\qty[\V\qty(\pdisgpik\qty(\tau^{\pik_{0:T-1}}_{0:T-1}) \mid S_0 = s) ] + \frac{1}{n_k} \V_{S_0}\qty(v_{\pi,0}(S_0) ).   \explain{Theorem~\ref{lemma: rl pdis unbaised} and on-policy}
\end{align}

With the manipulated sufficient condition in \eqref{eq: sufficient delta used}, we present the following lemma.
\begin{lemma}
\label{lemma: nk n variance}
    Under the condition in \eqref{eq: eta sufficient average rl}, $\forall k,t,s$,
\begin{align}
&\frac{n_k}{n}\V\left(\pdisgpik\qty(\tau^{\hat\mu_{t:T-1}}_{t:T-1}) \mid S_t=s\right) \\    
\leq &\V\left(\pdisgpik\qty(\tau^{\pik_{t:T-1}}_{t:T-1})\mid S_t=s \right) .
\end{align}
\end{lemma}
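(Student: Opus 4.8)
The plan is to establish the pointwise inequality $\tfrac{n_k}{n}V^{\hat\mu}_t(s)\le V^{\pik}_t(s)$ by backward induction on $t$, writing $V^{\hat\mu}_t(s)\doteq\V\!\left(\pdisgpik(\tau^{\hat\mu_{t:T-1}}_{t:T-1})\mid S_t=s\right)$ and $V^{\pik}_t(s)\doteq\V\!\left(\pdisgpik(\tau^{\pik_{t:T-1}}_{t:T-1})\mid S_t=s\right)$, and abbreviating $\rho\doteq\rho^{\pik,\hat\mu}_t$. Adopting the convention $V^{\hat\mu}_T\equiv V^{\pik}_T\equiv0$ makes $t=T$ a trivial base case and lets me treat $t=T-1,\dots,0$ uniformly under the induction hypothesis $V^{\hat\mu}_{t+1}(s')\le\frac{n}{n_k}V^{\pik}_{t+1}(s')$ for every $s'$.

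First I would rewrite the recursive variance of Lemma~\ref{lemma: recursive-var} in the compact form
\[
V^{\hat\mu}_t(s)=\E_{A_t\sim\hat\mu_t}\!\left[\rho^2\,\E_{S_{t+1}}\!\left[V^{\hat\mu}_{t+1}(S_{t+1})\mid S_t=s,A_t\right]\mid S_t=s\right]+\Delta^{(k)}_t(s),
\]
recognizing the ``immediate'' contribution $\E_{A_t\sim\hat\mu_t}[\rho^2(\nu_{\pik,t}+q_{\pik,t}^2)\mid s]-v_{\pik,t}(s)^2$ as exactly $\Delta^{(k)}_t(s)$ via \eqref{eq: def Delta k rl} and the rewriting \eqref{eq: sufficient delta manipulate}. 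Since $V^{\hat\mu}_{t+1}\ge0$ and the weights $\rho^2$ together with the transition expectation are nonnegative, I may substitute the induction hypothesis $V^{\hat\mu}_{t+1}\le\frac{n}{n_k}V^{\pik}_{t+1}$ inside the future term; multiplying through by $\tfrac{n_k}{n}$ then cancels the prefactor against the $\tfrac{n}{n_k}$ from the hypothesis, giving
\[
\tfrac{n_k}{n}V^{\hat\mu}_t(s)\le \E_{A_t\sim\hat\mu_t}\!\left[\rho^2\,F(A_t)\mid s\right]+\tfrac{n_k}{n}\Delta^{(k)}_t(s),
\]
where $F(A_t)\doteq\E_{S_{t+1}}[V^{\pik}_{t+1}(S_{t+1})\mid s,A_t]$ is the on-policy future-variance term entering the definition \eqref{eq: def q hat} of $\hat q_{\pik,t}$.

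Next, because $\hat q_{\pik,t}=q_{\pik,t}^2+\nu_{\pik,t}+F$, I would use the identity $\E_{A_t\sim\hat\mu_t}[\rho^2 F\mid s]=\E_{A_t\sim\hat\mu_t}[\rho^2\hat q_{\pik,t}\mid s]-v_{\pik,t}(s)^2-\Delta^{(k)}_t(s)$ to collapse the bound into
\[
\tfrac{n_k}{n}V^{\hat\mu}_t(s)\le \E_{A_t\sim\hat\mu_t}[\rho^2\hat q_{\pik,t}(s,A_t)\mid s]-v_{\pik,t}(s)^2-\tfrac{n-n_k}{n}\Delta^{(k)}_t(s).
\]
Then I would bound the single-step importance-sampling term with the already-proved inequality \eqref{eq: variance upper bound of mu * rl}, namely $\E_{A_t\sim\hat\mu_t}[\rho^2\hat q_{\pik,t}\mid s]\le\sqrt{\overline{\eta}_t/\underline{\eta}_t}\,\big(\sum_a\pik_t(a|s)\sqrt{\hat q_{\pik,t}(s,a)}\big)^2$, and invoke the sufficient condition \eqref{eq: eta sufficient average rl} (whose $\left(1-\tfrac{n_k}{n}\right)\Delta^{(k)}_t$ term equals $\tfrac{n-n_k}{n}\Delta^{(k)}_t$) to absorb the residual, leaving $\tfrac{n_k}{n}V^{\hat\mu}_t(s)\le\sum_a\pik_t(a|s)\hat q_{\pik,t}(s,a)-v_{\pik,t}(s)^2=V^{\pik}_t(s)$ by Lemma~\ref{lemma: recursive variance onpolicy rl}. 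This closes the induction, and the stated claim is exactly this inequality.

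The hard part will be the bookkeeping around $\Delta^{(k)}_t$ rather than any single estimate: I must cleanly split the off-policy variance at step $t$ into the immediate term $\Delta^{(k)}_t$ plus the importance-weighted future variance, track how the factor $n/n_k$ from the induction hypothesis exactly cancels the $n_k/n$ prefactor on the future term while the immediate term survives scaled by $n_k/n$, and verify that this produces precisely the residual $\tfrac{n-n_k}{n}\Delta^{(k)}_t$ that the sufficient condition \eqref{eq: eta sufficient average rl} is engineered to cancel. The $t=T-1$ boundary needs no separate argument, since $F\equiv0$ there, so the future term vanishes and the same chain of inequalities applies verbatim.
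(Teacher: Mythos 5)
Your proposal is correct and follows essentially the same route as the paper's proof: backward induction on $t$ using the recursive variance decomposition of Lemma~\ref{lemma: recursive-var}, the identification $\Delta^{(k)}_t(s)=\E_{A_t\sim\hat\mu_t}\!\left[{\rho^{\pik,\hat\mu}}^2\left(\nu_{\pik,t}+q_{\pik,t}^2\right)\mid S_t=s\right]-v_{\pik,t}(s)^2$ from \eqref{eq: sufficient delta manipulate}, the bound \eqref{eq: variance upper bound of mu * rl}, the sufficient condition \eqref{eq: eta sufficient average rl} (via \eqref{eq: sufficient delta used}), and Lemma~\ref{lemma: recursive variance onpolicy rl} to close each step. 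The only difference is cosmetic: you absorb the $t=T-1$ base case into the general step via the convention $V_T\equiv 0$ (using $F\equiv 0$, $\nu_{\pik,T-1}=0$), whereas the paper handles it separately through \eqref{eq: average sufficient used form rl}; both rest on the same condition, and your bookkeeping of the $\tfrac{n_k}{n}$ cancellation and the residual $\tfrac{n-n_k}{n}\Delta^{(k)}_t$ matches the paper's computation exactly.
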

\begin{proof}
We proceed via induction.
$\forall k$, $\forall s$, when $t=T-1$,
\begin{align}
&\frac{n_k}{n}\V\left(\pdisgpik\qty(\tau^{\hat\mu_{t:T-1}}_{t:T-1}) \mid S_t=s\right)   \\
 =& \frac{n_k}{n}\qty[\E_{A_t \sim \mu_t}\left[{\rhok_t}^2 q_{\pik, t}(S_t, A_t)^2 \mid S_t=s\right] - { v_{\pik, t}(s)}^2]\explain{Lemma~\ref{lemma: recursive-var}}\\
 =& \frac{n_k}{n}\qty[\E_{A_t \sim \mu_t}\left[{\rhok_t}^2 
 \hat q_{\pik, t}(S_t, A_t) \mid S_t=s\right] - { v_{\pik, t}(s)}^2]\explain{By \eqref{eq: def q hat}}\\
\leq& \frac{n_k}{n} \qty[\sqrt{\frac{\overline{\eta}_t}{\underline{\eta}_t}} \qty(\sum_{a} \pik_t(a | s)\sqrt{\hat{q}_{\pik, t}(s, a)} )^2    - { v_{\pik, t}(s)}^2] \explain{By \eqref{eq: variance upper bound of mu * rl}} \\
\leq&\E_{A_t \sim \pik_t}\left[\hat q_{\pik, t}(S_t, A_t) \mid S_t=s\right] - { v_{\pik, t}(s)}^2\explain{By \eqref{eq: average sufficient used form rl}}\\
=&\V\qty( \pdisgpik\qty(\tau^{\pik_{t:T-1}}_{t:T-1}) \mid S_t =s  ) .\explain{Lemma~\ref{lemma: recursive variance  
onpolicy rl}}\\
\end{align}
For $t\in[T-2]$,
\begin{align}
&\frac{n_k}{n}\V\left(\pdisgpik\qty(\tau^{\hat\mu_{t:T-1}}_{t:T-1}) \mid S_t=s\right) \\
=& \frac{n_k}{n} \E_{A_t\sim \hat\mu_t}\left[{\rho^{\pik, \hat\mu}}^2 \left(\E_{S_{t+1}}\left[\V\left(\pdisgpik\qty(\tau^{\hat\mu_{t+1:T-1}}_{t+1:T-1})\mid S_t\right) \mid S_t, A_t\right] + \nu_{\pik,t}(S_t, A_t) + q_{\pik, t}(S_t, A_t)^2\right) \mid S_t\right] \\
&-\frac{n_k}{n} { v_{\pik, t}(S_t)}^2\explain{Lemma~\ref{lemma: recursive-var}} \\
=&  \E_{A_t\sim \hat\mu_t}\left[{\rho^{\pik, \hat\mu}}^2\E_{S_{t+1}}\left[\frac{n_k}{n}\V\left(\pdisgpik\qty(\tau^{\hat\mu_{t+1:T-1}}_{t+1:T-1})\mid S_t\right) \mid S_t, A_t\right] \mid S_t\right] \\
&+ \frac{n_k}{n} \E_{A_t\sim \hat\mu_t}\left[{\rho^{\pik, \hat\mu}}^2 \left(\nu_{\pik,t}(S_t, A_t) + q_{\pik, t}(S_t, A_t)^2\right) \mid S_t\right] -\frac{n_k}{n} { v_{\pik, t}(S_t)}^2 \explain{Linearity of Expectation}\\
\leq&  \E_{A_t\sim \hat\mu_t}\left[{\rho^{\pik, \hat\mu}}^2 \left(\E_{S_{t+1}}\left[\V\left(\pdisgpik\qty(\tau^{\pik_{t+1:T-1}}_{t+1:T-1})\mid S_t\right) \mid S_t, A_t\right] \right) \mid S_t\right] \\
&+ \frac{n_k}{n} \E_{A_t\sim \hat\mu_t}\left[{\rho^{\pik, \hat\mu}}^2 \left(\nu_{\pik,t}(S_t, A_t) + q_{\pik, t}(S_t, A_t)^2\right) \mid S_t\right] -\frac{n_k}{n} { v_{\pik, t}(S_t)}^2 \explain{Indutive Hypothesis} \\
=&  \E_{A_t\sim \hat\mu_t}\left[{\rho^{\pik, \hat\mu}}^2 \left(\E_{S_{t+1}}\left[\V\left(\pdisgpik\qty(\tau^{\pik_{t+1:T-1}}_{t+1:T-1})\mid S_t\right) \mid S_t, A_t\right] +\nu_{\pik,t}(S_t, A_t) + q_{\pik, t}(S_t, A_t)^2\right) \mid S_t\right] \\
&+ (\frac{n_k}{n}-1) \E_{A_t\sim \hat\mu_t}\left[{\rho^{\pik, \hat\mu}}^2 \left(\nu_{\pik,t}(S_t, A_t) + q_{\pik, t}(S_t, A_t)^2\right) \mid S_t\right] -\frac{n_k}{n} { v_{\pik, t}(S_t)}^2 \\
=& \E_{A_t\sim \hat\mu_t}\left[{\rho^{\pik, \hat\mu}}^2  \hat q_{\pik, t}(S_t, A_t) \mid S_t\right] \\
&- \frac{n - n_k}{n} \E_{A_t\sim \hat\mu_t}\left[{\rho^{\pik, \hat\mu}}^2 \left(\nu_{\pik,t}(S_t, A_t) + q_{\pik, t}(S_t, A_t)^2\right) \mid S_t\right] -\frac{n_k}{n} { v_{\pik, t}(S_t)}^2\explain{By \eqref{eq: def q hat}}\\
\leq& \sqrt{\frac{\overline{\eta}_t}{\underline{\eta}_t}} \qty(\sum_{a} \pik_t(a|S_t)\sqrt{\hat q_{\pik,t}(a|S_t)})^2 \\
&- \frac{n - n_k}{n} \E_{A_t\sim \hat\mu_t}\left[{\rho^{\pik, \hat\mu}}^2 \left(\nu_{\pik,t}(S_t, A_t) + q_{\pik, t}(S_t, A_t)^2\right) \mid S_t\right] -\frac{n_k}{n} { v_{\pik, t}(S_t)}^2 \explain{by \eqref{eq: variance upper bound of mu * rl}}  \\
=& \sqrt{\frac{\overline{\eta}_t}{\underline{\eta}_t}} \qty(\sum_{a} \pik_t(a|S_t)\sqrt{\hat q_{\pik,t}(a|S_t)})^2 \\
&- \frac{n - n_k}{n}\qty( \E_{A_t\sim \hat\mu_t}\left[{\rho^{\pik, \hat\mu}}^2 \left(\nu_{\pik,t}(S_t, A_t) + q_{\pik, t}(S_t, A_t)^2\right)\mid S_t\right]  -v_{\pik, t}(S_t)^2)-{ v_{\pik, t}(S_t)}^2 \\
\leq& \E_{A_t \sim \pik_t}\left[\hat q_{\pik, t}(S_t, A_t) \mid S_t\right]  -{ v_{\pik, t}(S_t)}^2\explain{By \eqref{eq: sufficient delta used}}\\
=&\V\left(\pdisgpik\qty(\tau^{\pik_{t:T-1}}_{t:T-1})\mid S_t \right).\explain{Lemma~\ref{lemma: recursive variance  
onpolicy rl}}
\end{align}
\end{proof}
Now, we are ready to present the proof of Theorem~\ref{theorem: better than each average rl}.
\begin{align}
&\V\left(E^{\text{off},\pik}_{0:T-1} \right) \\
=&\frac{1}{n}\E_{S_0}\qty[\V\qty(\pdisgpik\qty(\tau^{\hat\mu_{0:T-1}}_{0:T-1}) \mid S_0 = s) ] + \frac{1}{n}\V_{S_0}\qty(v_{\pi,0}(S_0) ) \explain{By \eqref{eq: variance of E off}}\\
\leq&\frac{1}{n}\E_{S_0}\qty[\V\qty(\pdisgpik\qty(\tau^{\hat\mu_{0:T-1}}_{0:T-1}) \mid S_0 = s) ] + \frac{1}{n_k}\V_{S_0}\qty(v_{\pi,0}(S_0) ) \explain{$n_k\leq n$}\\
=&\frac{1}{n_k}\cdot\frac{n_k}{n}\V\left(\pdisgpik\qty(\tau^{\hat\mu_{0:T-1}}_{0:T-1}) \mid S_0=s\right)+ \frac{1}{n_k}\V_{S_0}\qty(v_{\pi,0}(S_0) )\\
\leq &\frac{1}{n_k}\V\left(\pdisgpik\qty(\tau^{\pik_{0:T-1}}_{0:T-1})\mid S_0=s \right)+ \frac{1}{n_k}\V_{S_0}\qty(v_{\pi,0}(S_0) )\explain{Lemma~\ref{lemma: nk n variance}}\\
=&\V\left(E^{\text{on},\pik}_{0:T-1} \right).\explain{By \eqref{eq: variance of E on}}
\end{align}

\subsection{Proof of Theorem \ref{theorem: better than each rl}}\label{append: better than each rl}
\begin{proof}
We prove it using induction.
When $t=T-1$,
$\forall k$, $\forall s$,
\begin{align}
 &\V\left(\pdisgpik\qty(\tau^{\hat \mu_{t:T-1}}_{t:T-1})\mid S_t=s\right)\\
  =&\E_{A_t \sim \mu_t}\left[{\rhok_t}^2 q_{\pik, t}(S_t, A_t)^2 \mid S_t\right] - { v_{\pik, t}(S_t)}^2\explain{Lemma~\ref{lemma: recursive-var}}\\
 =&\E_{A_t \sim \mu_t}\left[{\rhok_t}^2 
 \hat q_{\pik, t}(S_t, A_t) \mid S_t\right] - { v_{\pik, t}(S_t)}^2\explain{By \eqref{eq: def q hat}}\\
 \leq & \sqrt{\frac{\overline{\eta}_t}{\underline{\eta}_t}} \qty(\sum_{a} \pik_t(a|S_t)\sqrt{\hat q_{\pik,t}(S_t,a)})^2  - { v_{\pik, t}(S_t)}^2 \explain{By \eqref{eq: variance upper bound of mu * rl}}\\
 \leq&\E_{A_t \sim \pik_t}\left[\hat q_{\pik, t}(S_t, A_t) \mid S_t\right] - { v_{\pik, t}(S_t)}^2\explain{By \eqref{eq: eta sufficient each rl}}\\
 =&\V\left(\pdisgpik\qty(\tau^{\pik_{t:T-1}}_{t:T-1})\mid S_t \right).\explain{Lemma~\ref{lemma: recursive variance  
onpolicy rl}}
\end{align}
When $t\in[T-2]$, 
\begin{align}
 &\V\left(\pdisgpik\qty(\tau^{\hat \mu_{t:T-1}}_{t:T-1})\mid S_t=s\right)\\
=& \E_{A_t\sim \mu_t}\left[{\rhok_t}^2 \left(\E_{S_{t+1}}\left[\V\left(\pdisgpik\qty(\tau^{\hat \mu_{t+1:T-1}}_{t+1:T-1})\mid S_{t+1}\right) \mid S_t, A_t\right] + \nu_{\pik,t}(S_t, A_t) + q_{\pik, t}(S_t, A_t)^2\right) \mid S_t\right] \\
&-{ v_{\pik, t}(S_t)}^2\explain{Lemma~\ref{lemma: recursive-var}}\\
\leq &\E_{A_t\sim \mu_t}\left[{\rhok_t}^2 \left(\E_{S_{t+1}}\left[\V\left(\pdisgpik\qty(\tau^{\pik_{t+1:T-1}}_{t+1:T-1})\mid S_{t+1} \right)\mid S_t, A_t\right] + \nu_{\pik,t}(S_t, A_t) + q_{\pik, t}(S_t, A_t)^2\right) \mid S_t\right] \\
&-{ v_{\pik, t}(S_t)}^2\explain{Inductive Hypothesis}\\
=& \E_{A_t\sim \mu_t}\left[{\rhok_t}^2  \hat q_{\pik, t}(S_t, A_t) \mid S_t\right] -{ v_{\pik, t}(S_t)}^2\explain{By \eqref{eq: def q hat}}\\
 \leq & \sqrt{\frac{\overline{\eta}_t}{\underline{\eta}_t}} \qty(\sum_{a} \pik_t(a|S_t)\sqrt{\hat q_{\pik,t}(S_t,a)v})^2  - { v_{\pik, t}(S_t)}^2 \explain{By \eqref{eq: variance upper bound of mu * rl}}\\
\leq&\E_{A_t \sim \pik_t}\left[\hat q_{\pik, t}(S_t, A_t) \mid S_t\right] - { v_{\pik, t}(S_t)}^2\explain{By \eqref{eq: eta sufficient each rl}}\\
=&\V\left(\pdisgpik\qty(\tau^{\pik_{t:T-1}}_{t:T-1})\mid S_t \right).\explain{Lemma~\ref{lemma: recursive variance  
onpolicy rl}}
\end{align}
\end{proof}

\section{Experiment Details}
\subsection{Learning Closed-Form Behavior Policy}
In this section, we present an efficient algorithms
to learn the closed-form optimal behavior policy $\hat \mu$ with previously logged offline data.
By \eqref{def hat mu}, $\hat\mu$ is defined as $\textstyle \hat \mu_t(a|s) \propto 
\sqrt{\sum_{k=1}^K {\pik_t}(a|s) \hat q_{\pik, t}(s, a)^2}$,
where for each target policy $k$, $\hat q_{\pik}$ is defined in \eqref{eq: def q hat} as 
\begin{align}
\hat q_{\pik, t}(s, a) \doteq &q_{\pik, t}(s, a)^2 + \nu_{\pik,t}(s, a) \textstyle +\sum_{s'} p(s'|s, a)\V\left(\pdisgpik\qty(\tau^{\pik_{t+1:T-1}}_{t+1:T-1}) \mid S_{t+1} \right).
\end{align}
Learning $\hat \mu$ from this perspective is very inefficient because it requires approximations of the complex variance term $\textstyle\V\left(\pdisgpik\qty(\tau^{\pik_{t+1:T-1}}_{t+1:T-1}) \mid S_{t+1} \right)$ regarding future trajectory. To solve this problem, we restate
the recursive expression of $\hat q$ in the form of a Bellman equation \citep{variance2016Tamar,o2017uncertainty,sherstan2018directly} from \citet{liu2024efficient} and give its proof.
\begin{theorem}\label{lemma: hat-q-recursive}
For any target policy $\pik$, define
\begin{align}
  \hat{r}_{\pik,t}(s,a) \doteq 2r(s, a) q_{\pik, t}(s, a)- r^2(s, a).\label{def: hat r}
\end{align}
Then $\hat{q}_{\pik, t}(s, a) = \hat r_{\pik, t}(s, a)$ for $t=T-1$ 
and otherwise
\begin{align}
  \hat{q}_{\pik, t}(s, a) = \textstyle \hat{r}_{\pik,t}(s,a) +  \sum_{s', a'} p(s'|s, a) \pik_{t+1}(a'|s') \hat{q}_{\pik, t+1}(s', a'). \label{eq: recursive hat q}
\end{align}
\end{theorem}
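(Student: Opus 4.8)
The plan is to prove both claims by backward induction on $t$, starting from $t=T-1$ and descending to $t \in [T-2]$, using the definition \eqref{eq: def q hat} of $\hat q_{\pik,t}$ together with the closed-form on-policy variance of Lemma~\ref{lemma: recursive variance onpolicy rl} as the engine that rewrites the variance term appearing in \eqref{eq: def q hat} as an expectation of $\hat q$ at the next time step. Since that lemma is itself established directly from \eqref{eq: def q hat} and Lemma~\ref{lemma: recursive-var} without reference to the recursion we are proving, there is no circularity.

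For the base case $t=T-1$, I would first use the deterministic reward to note that $q_{\pik,T-1}(s,a) = \E_{\pik}[G_{T-1}\mid S_{T-1}=s, A_{T-1}=a] = r(s,a)$. Then \eqref{eq: def q hat} gives $\hat q_{\pik,T-1}(s,a) = q_{\pik,T-1}(s,a)^2 = r(s,a)^2$, while \eqref{def: hat r} gives $\hat r_{\pik,T-1}(s,a) = 2r(s,a)q_{\pik,T-1}(s,a) - r^2(s,a) = 2r^2(s,a) - r^2(s,a) = r^2(s,a)$, so the two coincide and the terminal identity holds.

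For the inductive step $t \in [T-2]$, I would substitute Lemma~\ref{lemma: recursive variance onpolicy rl} (applied at $t+1$) into the last term of \eqref{eq: def q hat}, turning $\V(\pdisgpik(\tau^{\pik_{t+1:T-1}}_{t+1:T-1})\mid S_{t+1}=s')$ into $\sum_{a'}\pik_{t+1}(a'|s')\hat q_{\pik,t+1}(s',a') - v_{\pik,t+1}(s')^2$. Expanding $\nu_{\pik,t}(s,a)$ via its definition \eqref{def:nu} as $\sum_{s'}p(s'|s,a)v_{\pik,t+1}(s')^2 - (\sum_{s'}p(s'|s,a)v_{\pik,t+1}(s'))^2$, the weighted sums of $v_{\pik,t+1}(s')^2$ cancel against the $-v_{\pik,t+1}(s')^2$ carried in from the variance term. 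What remains is $q_{\pik,t}(s,a)^2 - (\sum_{s'}p(s'|s,a)v_{\pik,t+1}(s'))^2$ on the scalar side plus the recursive piece $\sum_{s',a'}p(s'|s,a)\pik_{t+1}(a'|s')\hat q_{\pik,t+1}(s',a')$.

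The decisive identity, and the one spot to handle with care, is that the one-step Bellman relation $q_{\pik,t}(s,a) = r(s,a) + \sum_{s'}p(s'|s,a)v_{\pik,t+1}(s')$ (again using the deterministic reward) yields $\sum_{s'}p(s'|s,a)v_{\pik,t+1}(s') = q_{\pik,t}(s,a) - r(s,a)$. Abbreviating $q \doteq q_{\pik,t}(s,a)$ and $r \doteq r(s,a)$, the scalar residual collapses to $q^2 - (q-r)^2 = 2qr - r^2$, which is exactly $\hat r_{\pik,t}(s,a)$ by \eqref{def: hat r}. Reassembling, I obtain $\hat q_{\pik,t}(s,a) = \hat r_{\pik,t}(s,a) + \sum_{s',a'}p(s'|s,a)\pik_{t+1}(a'|s')\hat q_{\pik,t+1}(s',a')$, closing the induction. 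The main obstacle is purely the bookkeeping: tracking the cancellation between $\nu_{\pik,t}$ and the $-v_{\pik,t+1}^2$ term and recognizing that the surviving scalar part is precisely $2qr-r^2$; once that is seen, the remaining steps are routine substitution.
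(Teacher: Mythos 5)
Your proof is correct and matches the paper's own argument essentially step for step: both verify the terminal case via $q_{\pik,T-1}(s,a)=r(s,a)$, substitute the closed-form on-policy variance (Lemma~\ref{lemma: recursive variance onpolicy rl}) into the variance term of \eqref{eq: def q hat}, cancel $\nu_{\pik,t}$ against the inherited $-v_{\pik,t+1}(s')^2$ terms, and collapse the scalar residual $q^2-(q-r)^2$ to $2qr-r^2=\hat r_{\pik,t}(s,a)$ via the Bellman identity. The only cosmetic difference is your backward-induction framing, which is superfluous (and the paper omits): your ``inductive step'' never invokes the inductive hypothesis, since Lemma~\ref{lemma: recursive variance onpolicy rl} already supplies the time-$(t+1)$ quantity independently, exactly as you yourself note when ruling out circularity.
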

\begin{proof}
For any $k$, for $t = T-1$, we have
\begin{align}
 \hat q_{\pik, t}(s, a) &=q_{\pik, t}(s, a)^2 \explain{Definition of $\hat q_{\pik, t}$~\eqref{eq: def q hat}} \\
&= \hat{r}_{\pik,t}(s,a). \explain{By $q_{\pik, T-1}(s, a) = r(s, a)$ and \eqref{def: hat r}}
\end{align}
For $t\in [T-2]$, we have
\begin{align}
&\hat{q}_{\pik,t}(s,a) \\
=& q_{\pik, t}(s, a)^2 + \nu_{\pik,t}(s, a)+\sum_{s'} p(s'|s, a)\V\left(\pdisgpik\qty(\tau^{\pik_{t+1:T-1}}_{t+1:T-1}) \mid S_{t+1} = s'\right)\explain{Definition of $\hat{q}$ \eqref{eq: def q hat}} \\
=& q_{\pik, t}(s, a)^2 + \nu_{\pik,t}(s, a)+\sum_{s'} p(s'|s, a)\qty(\E_{A_{t+1} \sim \pik_{t+1}}\left[\hat q_{\pik, t+1}(S_{t+1}, A_{t+1}) \mid S_{t+1}=s'\right] -  v_{\pik, t+1}(S_{t+1})^2)\explain{Lemma~\ref{lemma: recursive variance  
onpolicy rl}} \\
=& \nu_{\pik,t}(s, a) + q_{\pik, t}(s, a) ^2- \sum_{s'} p(s'|s, a)  v_{\pik, t+1}(s')^2 +  \sum_{s', a'} p(s'|s, a) \pik_{t+1}(a'|s') \hat{q}_{\pik, t+1}(s', a') \\
=& -(\E [ v_{\pik, t+1}(S_{t+1})\mid S_t=s, A_t=a ])^2 + q_{\pik, t}(s, a)^2 + \sum_{s', a'} p(s'|s, a) \pik_{t+1}(a'|s') \hat{q}_{\pik, t+1}(s', a')  \explain{Definition of $\nu$ \eqref{def:nu}} \\
=& -(q_{\pik, t}(s, a) - r(s, a) )^2 + q_{\pik, t}(s, a)^2 + \sum_{s', a'} p(s'|s, a) \pik_{t+1}(a'|s') \hat{q}_{\pik, t+1}(s', a') \\
=&  2r(s, a) q_{\pik, t}(s, a)- r(s, a)^2  + \sum_{s', a'} p(s'|s, a) \pik_{t+1}(a'|s') \hat{q}_{\pik, t+1}(s', a') \\
=&  \hat r_{\pik, t}(s, a)  + \sum_{s', a'} p(s'|s, a) \pik_{t+1}(a'|s') \hat{q}_{\pik, t+1}(s', a'),
\end{align}
which completes the proof.
\end{proof}

\emph{This derivation enables the implementation of any off-the-shelf offline policy evaluation methods to learn $\hat q_{\pik}$,}
after which the behavior policy $\hat \mu$ can be computed easily with~\eqref{def hat mu}.
For generality,
we consider
the behavior policy agnostic offline learning setting \citep{nachum2019dualdice},
where the offline data in the form of
  $\qty{(t_i,s_i,a_i,r_i,s_i')}_{i=1}^m$
consists of $m$ previously logged data tuples.
In the $i$-th data tuple, 
$t_i$ is the time step, 
$s_i$ is the state at time step $t_i$, 
$a_i$ is the action executed on state $s_i$, 
$r_i$ is the sampled reward, 
and $s_i'$ is the successor state. 
Those tuples can be generated by one or more, known or unknown behavior policies.
Those tuples do not need to form a complete trajectory.

In this work,
we use 
Fitted $Q$-Evaluation (FQE, \citet{le2019batch}) as a demonstration, but our algorithm can incorporate any state-of-the-art offline policy evaluation methods to approximate $\hat q_\pik$.
To learn $\hat r_\pik$,
it is sufficient to learn $q$, in which 
FQE can be applied.
Then, FQE is invoked to learn an approximation of $\hat q_\pik$.
We refer the reader to Algorithm \ref{alg: ODI algorithm} for a detailed exposition of our algorithm.
In practice, 
we split the offline data into training sets and test sets to tune all the hyperparameters offline
in Algorithm \ref{alg: ODI algorithm}.

\subsection{GridWorld}
For a Gridworld with size $m^3$, we set its width, height, and time horizon $T$ all to be $m$. We test Gridworlds with $m^3=1,000$ and $m^3=27,000$ states. The action space contains four possible actions: up, down, left, and right. After taking an action, the agent has a probability of $0.9$ to move accordingly and a probability of $0.1$ to move uniformly at random. If running into a boundary, the agent stays in the current position. 
The reward function $r(s,a)$ is randomly generated.

We generate target policies using the proximal policy optimization (PPO) algorithm \citep{schulman2017proximal} with the default parameters in CleanRL \citep{huang2022cleanrl}. We choose PPO just for a demonstration. Our method copes with any other deep RL algorithm. We randomly draw $10$ policies in a randomly chosen time step interval. We obtain the ground truth policy performance for 
each target policy by executing on-policy Monte Carlo evaluation for $10^6$ total episodes. 
Our offline dataset includes $10^4$ episodes from various policies with a wide range of performances.
We execute Algorithm~\ref{alg: ODI algorithm} to learn our tailored behavior policy.
When approximating $q$ and $\hat q$, we use Fitted Q-Evaluation \citep{le2019batch}. We use a one-hidden-layer neural network for Fitted Q-Evaluation.
We test the neural network size for Fitted Q-Evaluation with $[64,128,256]$ and choose $64$ as the final size. 
We test the learning rate for Adam optimizer with $[1\text{e}^{-5},1\text{e}^{-4},1\text{e}^{-3},1\text{e}^{-2}]$ and choose to use the default learning rate $1\text{e}^{-3}$ as learning rate for Adam optimizer \citep{kingma2014adam}.

The results for each target policy are shown in Figure~\ref{fig:gridworld detail 10} and Figure~\ref{fig:gridworld detail 30} in terms of the \textit{relative error} against total \textit{samples}, as described in the main text. Notably, for the On-policy Monte Carlo estimator and the ODI estimator \citep{liu2024efficient}, samples for each single target policy $\pik$ are collected once in every $K=10$ total sample steps. 
Smooth lines are plotted through interpolation.
Each line in Figure~\ref{fig:gridworld detail 10} and Figure~\ref{fig:gridworld detail 30} is averaged over 900 different runs (30 groups of target policies, each having 30 independent runs), indicating strong statistical significance.
Our method (MPE) \textit{consistently outperforms} all other estimators for the evaluation of \textit{every} single target policy, demonstrating \textit{state-of-the-art performance}.

\!\!
\begin{figure}[ht]
\includegraphics[width=1\textwidth]{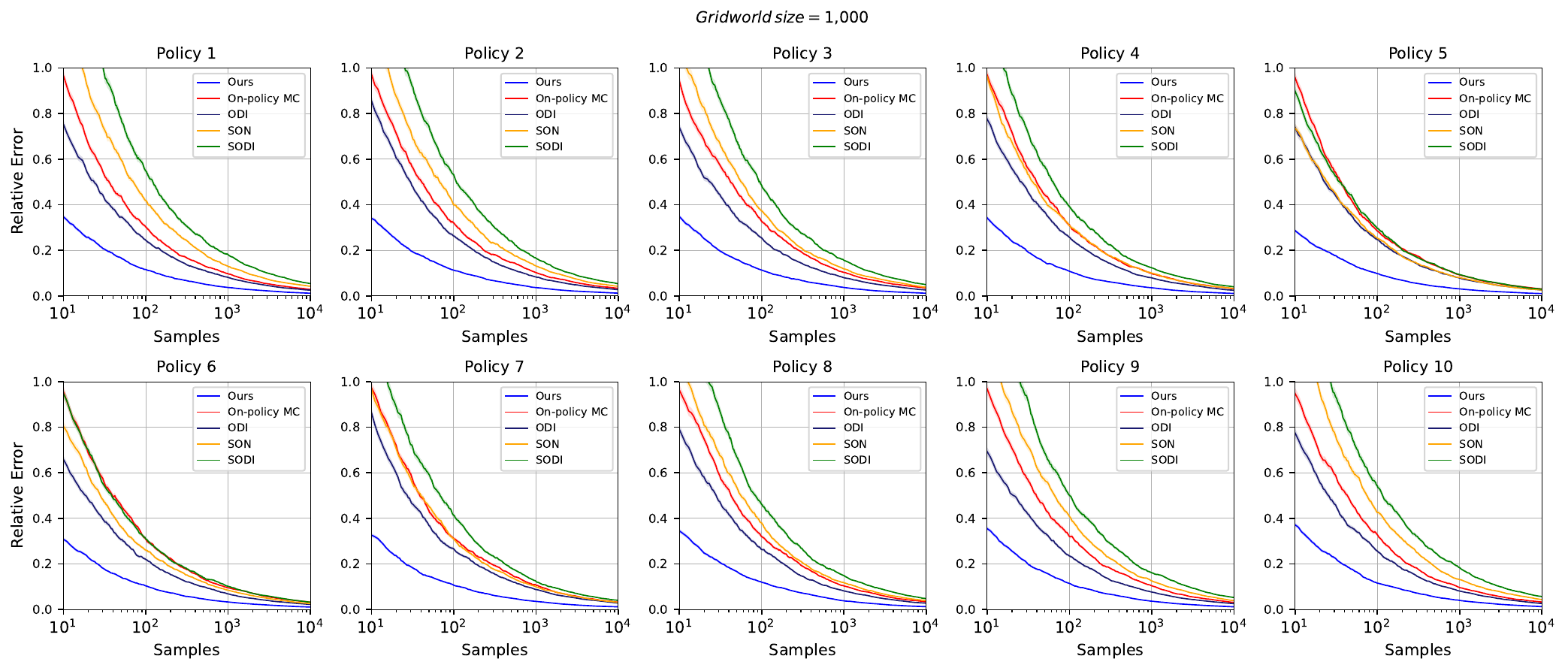}
\centering
\caption{Results on Gridworld. 
Each curve is averaged over 900 runs (the corresponding target policies from 30 groups, each having 30 independent runs). 
Shaded regions denote standard errors and are invisible for some curves because they are too small.}
\label{fig:gridworld detail 10}
\end{figure}

\begin{figure}[H]
\includegraphics[width=1\textwidth]{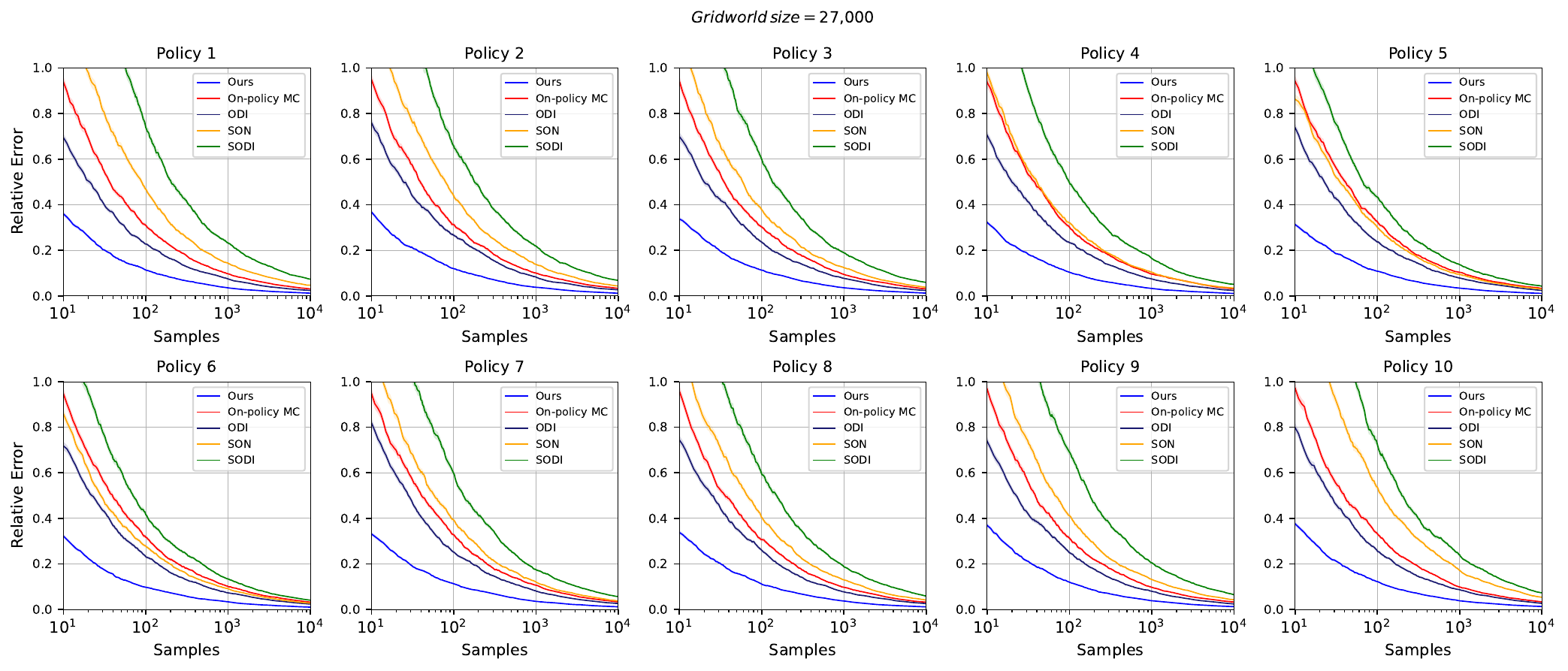}
\centering
\caption{
Results on Gridworld. 
Each curve is averaged over 900 runs (the corresponding target policies from 30 groups, each having 30 independent runs). 
Shaded regions denote standard errors and are invisible for some curves because they are too small.
}
\label{fig:gridworld detail 30}
\end{figure}

\subsection{MuJoCo}
\begin{figure}[ht]
\begin{minipage}{0.18\textwidth}
\centering
\includegraphics[width=1\textwidth]{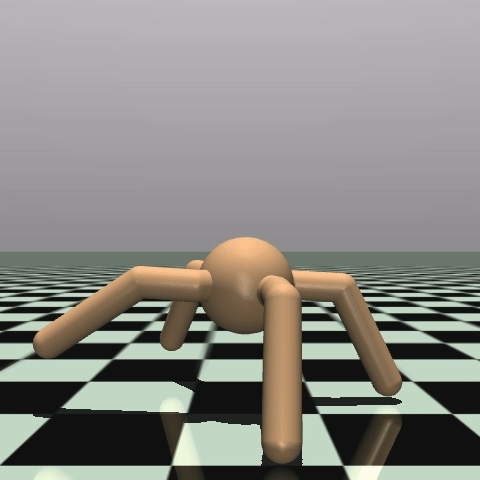}
\end{minipage}
\begin{minipage}{0.18\textwidth}
\centering \includegraphics[width=1\textwidth]{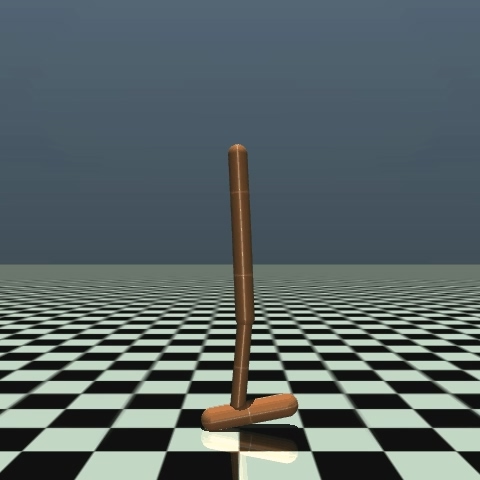}
\end{minipage}
\begin{minipage}{0.18\textwidth}
\centering \includegraphics[width=1\textwidth]{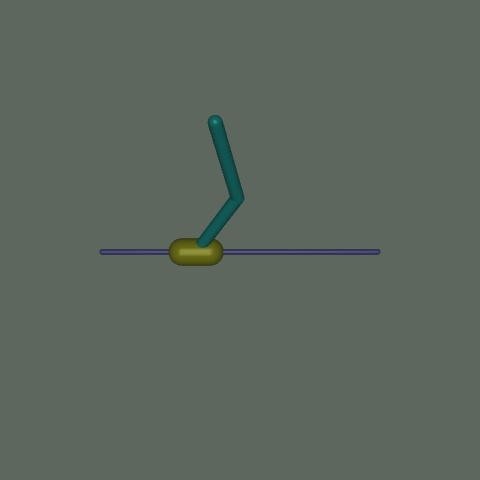}
\end{minipage}
\begin{minipage}{0.18\textwidth}
\centering \includegraphics[width=1\textwidth]{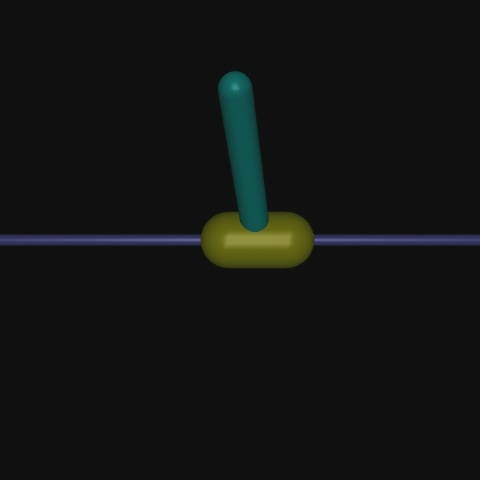}
\end{minipage}
\begin{minipage}{0.18\textwidth}
\centering \includegraphics[width=1\textwidth]{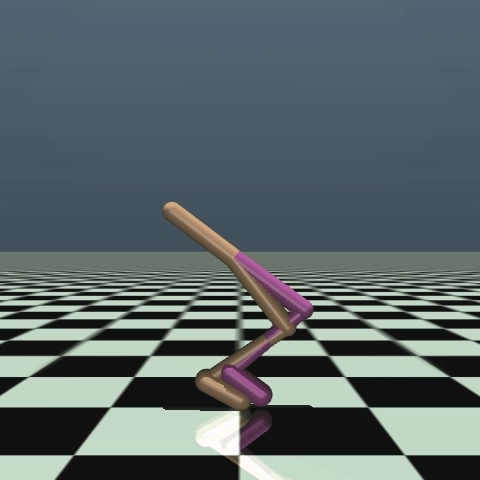}
\end{minipage}
\centering
\caption{
MuJoCo robot simulation tasks \citep{todorov2012mujoco}. Pictures are adapted from \citep{liu2024efficient}.
Environments from the left to the right are Ant, Hopper, InvertedDoublePendulum,  InvertedPendulum, and Walker.
} 
\label{fig:cart_pole_image}
\end{figure}
In experiments of MuJoCo robot simulation tasks \citep{todorov2012mujoco}, we use the same method for obtaining offline data, randomly generating target policies, training behavior policies, and applying the same hyperparameters as in the Gridworld experiment.
We discretize the first dimension of MuJoCo action space in our experiment. The policies of remaining dimensions are obtained during PPO \citep{schulman2017proximal} training process, and deemed as part of the environment.
The following table offers an additional interpretation to Figure \ref{fig:mujoco}.
\begin{table}[H]
    \centering
\begin{tabular}{llllll}
\toprule
Env ID & \textbf{Ours} & On-policy MC & ODI & SON & SODI \\
\midrule
Ant & \textbf{0.115} & 1.000 & 0.606 & 1.144 & 1.548 \\
Hopper & \textbf{0.114} & 1.000 & 0.580 & 1.287 & 1.413 \\
InvertedDoublePendulum & \textbf{0.111} & 1.000 & 0.494 & 0.882 & 1.582 \\
InvertedPendulum & \textbf{0.124} & 1.000 & 0.565 & 0.889 & 1.250 \\
Walker & \textbf{0.094} & 1.000 & 0.590 & 0.759 & 1.056 \\
\bottomrule
\end{tabular}
\caption{Relative variance of estimators on MuJoCo environments. The relative variance is defined as the variance of each estimator divided by the variance of the on-policy Monte Carlo estimator. Numbers are averaged over 900 independent runs (30 groups of target policies, each having 30 independent runs).}
\label{table: mujoco variance}
\end{table}

\begin{table}[H]
    \centering
\begin{tabular}{llllll}
\toprule
Env ID & \textbf{Ours} & On-policy MC & ODI & SON & SODI \\
\midrule
Ant & \textbf{125} & 1000 & 626 & 1171 & 1701 \\
Hopper & \textbf{115} & 1000 & 583 & 1253 & 1413 \\
InvertedDoublePendulum & \textbf{103} & 1000 & 464 & 835 & 1547 \\
InvertedPendulum & \textbf{111} & 1000 & 530 & 916 & 1166 \\
Walker & \textbf{97} & 1000 & 542 & 749 & 1033 \\
\bottomrule
\end{tabular}
\caption{Episodes needed to achieve the same of estimation accuracy that on-policy Monte Carlo achieves with $1000$ episodes on MuJoCo environments. Numbers are averaged over 900 independent runs (30 groups of target policies, each having 30 independent runs) and their standard errors are shown in Figure \ref{fig:mujoco}.}
\label{table: mujoco samples}
\end{table}

\end{document}